\newcolumntype{L}{>{\raggedright\arraybackslash}X}
\numberwithin{equation}{section}
\newtheorem{theorem}{Theorem}[section]
\newtheorem{lemma}{Lemma}[section]
\newtheorem{definition}{Definition}[section]
\newtheorem{proposition}{Proposition}[section]
\newtheorem{remark}[theorem]{Remark}
\newtheorem{assumption}{Assumption}[section]
\definecolor{newcolor}{rgb}{.8,.349,.1}
\journal{Journal of Computational Physics}
\begin{document}

\verso{Yueqi Wang \textit{etal}}

\begin{frontmatter}

\title{Reduced--Basis Deep Operator Learning for Parametric PDEs with Independently Varying Boundary and Source Data}%

\author[1]{Yueqi Wang}
\author[1,2]{Guang Lin\corref{cor1}}
\cortext[cor1]{Corresponding author: Guang Lin, E-Mail: Guanglin@purdue.edu}

\address[1]{Department of Mathematics, Purdue University, 610 Purdue Mall, West Lafayette, 47907, IN, USA}
\address[2]{School of Mechanical Engineering, Purdue University, 610 Purdue Mall, West Lafayette, 47907, IN, USA}


\begin{abstract}
Parametric PDEs power modern simulation, design, and digital-twin systems, yet their many-query workloads still hinge on repeatedly solving large finite-element systems. Existing operator-learning approaches accelerate this process but often rely on opaque learned trunks, require extensive labeled data, or break down when boundary and source data vary independently from physical parameters. We introduce RB--DeepONet, a hybrid operator-learning framework that fuses reduced-basis (RB) numerical structure with the branch–trunk architecture of DeepONet. The trunk is fixed to a rigorously constructed RB space generated offline via Greedy selection, granting physical interpretability, stability, and certified error control. The branch network predicts only RB coefficients and is trained label-free using a projected variational residual that targets the RB--Galerkin solution. For problems with independently varying loads or boundary conditions, we develop boundary and source modal encodings that compress exogenous data into low-dimensional coordinates while preserving accuracy. Combined with affine or  empirical interpolation decompositions, RB--DeepONet achieves a strict offline–online split: all heavy lifting occurs offline, and online evaluation scales only with the RB dimension rather than the full mesh. We provide convergence guarantees separating RB approximation error from statistical learning error, and numerical experiments show that RB--DeepONet attains accuracy competitive with intrusive RB--Galerkin, POD--DeepONet, and FEONet while using dramatically fewer trainable parameters and achieving significant speedups. This establishes RB--DeepONet as an efficient, stable, and interpretable operator learner for large-scale parametric PDEs.
\end{abstract}

\begin{keyword}
\KWD reduced basis methods\sep
operator learning\sep
DeepONet\sep
finite element methods\sep
parametric PDEs.

\end{keyword}

\end{frontmatter}


\section{Introduction}

Computational prediction for parametric PDEs underpins design, control, and digital-twin applications in science and engineering \cite{benner2015survey,hesthaven2016certified,kapteyn2022data}. Classical discretizations, such as finite difference \cite{strikwerda2004finite}, finite volume \cite{eymard2000finite}, and especially finite element methods (FEM) \cite{ciarlet2002finite}, provide robust high-fidelity solvers, but their cost scales with the full spatial degrees of freedom and must be paid anew for each parameter query. 
Consequently, tasks that require repeated forward solves, such as parametric sweeps \cite{hesthaven2016certified}, PDE-constrained optimization \cite{hinze2008optimization}, and uncertainty quantification \cite{sullivan2015introduction}, can become computationally prohibitive on fine meshes or in high-dimensional parameter spaces, even though the finite element method remains the predominant high-fidelity discretization in engineering analysis.
 
Projection-based model reduction techniques \cite{lucia2004reduced,maday2006reduced} alleviate this bottleneck by constructing low-dimensional subspaces that approximate the solution manifold over the parameter domain. Among them, Reduced Basis (RB) method \cite{quarteroni2015reduced} offers a rigorously certified offline–online framework: The offline stage generates high-fidelity snapshots and constructs a reduced trial space, typically via Proper Orthogonal Decomposition (POD) \cite{hesthaven2016certified,quarteroni2015reduced,liang2002proper} or a Greedy selection \cite{billaud2017dynamical,lappano2016greedy,hesthaven2014efficient}. 
The online stage performs inexpensive Galerkin projection \cite{rowley2004model,wang2020recurrent} onto this reduced space.
Under an affine or empirical interpolation decomposition, all parameter-independent operators can be preassembled, so the online complexity depends only on the reduced dimension $N$ rather than the full-order dimension $N_h$. Rigorous \emph{a posteriori} error estimators accompany each query, providing certified accuracy \cite{hesthaven2016certified,quarteroni2015reduced,rozza2008reduced}.
Nevertheless, classical RB methods remain intrusive and are not easily adaptable to heterogeneous or independently varying boundary and source data \cite{mcquarrie2023nonintrusive,gunzburger2007reduced,cosimo2016general}.
Nevertheless, classical RB methods remain intrusive and are not easily adaptable to heterogeneous or independently varying boundary and source data \cite{gunzburger2007reduced,cosimo2016general,mcquarrie2023nonintrusive}.

In parallel, machine learning has shown considerable promise for accelerating the numerical solutions of PDEs. Physics-Informed Neural Networks (PINNs) \cite{raissi2019physics} incorporate PDE residuals and boundary conditions directly into the loss function, enabling solution approximation without labeled data. Building on this idea, a large body of work has proposed variants of PINNs to tackle a wide range of PDE and physics-based problems \cite{karniadakis2021physics,lu2021deepxde,meng2020ppinn,goswami2022physics,patel2021physics}. However, standard PINNs and their extensions are typically designed for a single boundary or forcing configuration and must be retrained when the input data changes. In addition, the trial space is only defined implicitly by the network, which hinders the exact enforcement of boundary conditions. These limitations make PINN-type methods difficult to apply to parametric PDEs or to problems where the input data vary across queries.

Operator-learning frameworks overcome these limitations by learning mappings between function spaces
and are therefore well-suited to parametric PDEs.
Fourier Neural Operators (FNOs) \cite{li2020fourier} parameterize the operator via Fourier convolutions, injecting a spectral inductive bias that is particularly effective on regular grids.
DeepONet \cite{lu2019deeponet} approximates nonlinear operators via a branch–trunk architecture and is supported by the universal approximation theorem for operators \cite{chen1995universal}.
Physics-Informed Neural Operators (PINO) \cite{li2024physics} further combine operator learning with physics-informed residual losses to improve data efficiency.
Most recently, hybrid operator learning methods integrate classical numerical structure into the trunk network of DeepONet, leading to several variants that differ primarily in the choice of basis.
The Laplacian Eigenfunction-based Neural Operator (LE-NO)  \cite{hao2025laplacian} 
uses Laplacian eigenfunctions as the trunk,
enabling efficient approximation on nonlinear parabolic PDEs.
POD--DeepONet \cite{lu2022comprehensive} projects solutions onto a POD basis and learns only the corresponding coefficients, yielding a low-rank representation in a reduced output space.
Finite Element Operator Network (FEONet) \cite{lee2025finite} fixes the trunk to finite element basis and enforces the FEM variational residual directly.
However, these methods typically rely on learned trunks with limited physical
interpretability, require large labeled datasets, or predict full-order fields
with dimension $N_h$---hindering online efficiency.
To handle parametric PDEs more efficiently, a natural idea is therefore to fix the trunk to a reduced basis that captures the solution manifold with as few modes as possible, thereby combining model reduction with operator learning.

Actually, there is a growing line of work that couples model reduction techniques with learning for parametric PDEs.
Hesthaven \& Ubbiali \cite{hesthaven2018non} introduced a non-intrusive POD–ANN pipeline that learns a regression map from parameters to reduced coefficients.
In \cite{dal2020data}, the PDE-aware deep neural network (PDE-DNN) uses a reduced basis solver as the activation function in the last layer, so that the network outputs parameters for a precomputed RB model.
Chen et al. proposed physics-reinforced neural networks (PRNN) \cite{chen2021physics}, which map parameters to RB coefficients using a loss that blends the RB--Galerkin residual with coefficient labels.
Sentz et al. \cite{sentz2021reduced} trained a neural timestepping map that propagates RB coefficients for dynamical PDEs.
O’Leary-Roseberry et al. proposed a reduced basis adaptive residual network \cite{o2022learning} for learning high-dimensional parametric maps.
For further combinations of reduced basis methods with neural networks, we refer to \cite{hesthaven2018non,wang2019non,lee2020model}.
More recently, several works have combined model reduction with operator learning.
In \cite{mcquarrie2021data} and \cite{kramer2024learning}, reduced operators are identified directly from data via operator inference.
RO-NORM \cite{meng2024general} projects spatio-temporal fields onto a POD basis and then learns the mapping between these reduced functions via the NORM neural operator on Riemannian manifolds.
Reduced Basis Neural Operator (ReBaNO) \cite{zheng2025rebano} constructs a reduced basis from high-fidelity PINN solutions selected via a greedy procedure, and represents the solutions of parametric PDEs in this low-dimensional RB space. 

Nevertheless, to the best of our knowledge, existing approaches do not provide a clear framework that operates entirely in a reduced basis space while simultaneously handling parametric dependence and independently varying external data (e.g., source terms and boundary conditions). To address this gap, we propose in this paper the Reduced-Basis DeepONet (RB--DeepONet), a hybrid operator-learning framework that combines the DeepONet architecture with Reduced Basis method.

Our key contributions are:
\begin{itemize}
    \item \textbf{RB trunk}: We fix the DeepONet trunk to a rigorously constructed RB
    basis obtained via Greedy selection,
    yielding interpretability, stability, and certified error control.
    \item \textbf{Label-free residual training}: The branch network predicts only RB
    coefficients, trained by minimizing the RB-projected variational residual.
    This guarantees convergence to RB--Galerkin solutions on the training
    distribution without paired solution labels.
    \item \textbf{Independently varying data}: To handle boundary and source inputs
    that vary independently of the physical parameters, we introduce compact
    boundary and source modal encodings, producing a low-dimensional augmented
    input vector.
    \item \textbf{Strict offline--online split}: Under affine decomposition or empirical
    interpolation (EIM), all parameter-independent reduced operators are
    preassembled offline, making the online cost depend only on the RB dimension
    $N \ll N_h$.
    \item \textbf{Convergence theory}: We establish bounds that separate RB
    approximation error from statistical learning error, ensuring reliable
    generalization.
\end{itemize}
Numerical experiments show that RB--DeepONet achieves accuracy comparable to
intrusive RB--Galerkin, POD--DeepONet \cite{lu2019deeponet}, and FEONet \cite{lee2025finite},
while requiring significantly fewer trainable parameters and providing substantial
online speedups.

The remainder of the paper is organized as follows. Section \ref{sec:Problem Setting} reviews the variational formulation, finite element discretization, and the lifted setting we adopt.
Section \ref{sec:regimes} first develops RB--DeepONet for fully parameterized PDEs, including the construction of the RB trunk and the residual-minimization training objective. It then extends the framework to operator-parameterized problems with exogenous data by introducing boundary and source modes together with lifting and trace integration.
Section \ref{sec:Convergence analysis of RB--DeepONet} presents the convergence and generalization analysis. 
Section \ref{sec:Numerical Results} reports numerical experiments, 
and Section \ref{sec:Conclusions and future work} concludes with a discussion and perspectives.

\section{Problem Setting}\label{sec:Problem Setting}

For clarity of exposition, we develop the method in the prototypical second-order elliptic setting \eqref{eq:strong}. The proposed algorithm, however, extends to more general operators with only minor modifications, as summarized in Remark~\ref{remark:Generality}.

\subsection{Parametric PDEs}
Let $\Omega\subset\mathbb R^d$, $d\in\mathbb{N},$ be a bounded Lipschitz domain with boundary
$\partial\Omega:=\Gamma_D\cup\Gamma_N\cup\Gamma_R$, where the partition is pairwise disjoint corresponding to Dirichlet, Neumann and Robin parts, respectively.
Let $\mathcal D\subset\mathbb R^p$, $p\geq 1$, be a compact parameter set and
$\mathbf k\in\mathcal D$ a parameter vector.
For a given parameter $\mathbf{k}\in\mathcal{D}$, we consider the family of elliptic boundary value problems
\begin{equation}\label{eq:strong}
\begin{cases}
-\nabla\!\cdot\!\big(A(\mathbf x;\mathbf k)\nabla u(\mathbf x;\mathbf k)\big)
+ c_0(\mathbf x;\mathbf k)\,u(\mathbf x;\mathbf k) = f(\mathbf x;\mathbf k),
& \mathbf x\in\Omega,\\
\gamma_D u(\mathbf x;\mathbf k) = g_D(\mathbf x;\mathbf k), & \mathbf x\in\Gamma_D,\\
\big(A(\mathbf x;\mathbf k)\nabla u(\mathbf x;\mathbf k)\big)\!\cdot\!\boldsymbol n
= h_N(\mathbf x;\mathbf k), & \mathbf x\in\Gamma_N,\\
\big(A(\mathbf x;\mathbf k)\nabla u(\mathbf x;\mathbf k)\big)\!\cdot\!\boldsymbol n
+ \beta(\mathbf x;\mathbf k)\,u(\mathbf x;\mathbf k) = r_R(\mathbf x;\mathbf k),
& \mathbf x\in\Gamma_R,
\end{cases}
\end{equation}
where $\gamma_D:V:=H^1(\Omega)\to G_D:=H^{1/2}(\Gamma_D)$ denotes the Dirichlet trace on $\Gamma_D$ and
$\boldsymbol n$ the unit outward normal on $\partial\Omega$.
For each $\mathbf k\in\mathcal D$, we assume $A(\cdot;\mathbf{k}), c_0(\cdot;\mathbf{k}),\beta(\cdot;\mathbf{k})$ are in $L^{\infty}$ spaces and the data are taken in the natural spaces
\[
  f(\cdot;\mathbf k)\in L^2(\Omega),\quad
  g_D(\cdot;\mathbf k)\in H^{1/2}(\Gamma_D),\quad
  h_N(\cdot;\mathbf k)\in H^{-1/2}(\Gamma_N),\quad
  r_R(\cdot;\mathbf k)\in H^{-1/2}(\Gamma_R),
\]
which are uniformly bounded w.r.t. $\mathbf{k}$.
The weak form of \eqref{eq:strong} reads: for given $\mathbf k\in\mathcal D$, find
$u(\mathbf k)\in V$ with $\gamma_D u(\mathbf k)=g_D(\mathbf k)$ such that
\begin{equation}\label{eq:weak}
a\big(u(\mathbf k),v;\mathbf k\big)=\ell(v;\mathbf k)
\qquad \forall v\in V_0.
\end{equation}
For brevity, we henceforth omit the explicit spatial variable and write $u(\mathbf{x};\mathbf{k}),g(\mathbf{x};\mathbf{k})$ for $u(\mathbf{k}),g(\mathbf{k})$.
We set $V_0:=\{v\in H^1(\Omega):\, \gamma_D v=0\}$ and define the bilinear form
\[
a(u,v;\mathbf k):=\int_\Omega A\nabla u\!\cdot\!\nabla v
                 +\int_\Omega c_0\,uv
                 +\int_{\Gamma_R}\beta\,\gamma_R u\,\gamma_R v,
\]
and linear form
\[
\ell(v;\mathbf{k}):=\int_\Omega f\,v+\int_{\Gamma_N} h_N\,\gamma_N v+\int_{\Gamma_R} r_R\,\gamma_R v.
\]
Here, $\gamma_E: H^1(\Omega)\to H^{1/2}(\Gamma_E)$ is the trace operator restricted on $\Gamma_E$ for $E\in\{N,R\}$.
For each $\mathbf k\in\mathcal D$, we fix a reference parameter $\mathbf k_\star\in\mathcal D$ and use
\begin{equation*}
(u,v)_V:=a\big(u,v;\mathbf k_\star\big)
\end{equation*}
as the reference energy inner product in $V$ and $V_0$, with associated norm $\|u\|_V:=\sqrt{(u,u)_V}$.

We impose the following assumptions in what follows.

\begin{assumption}\label{ass:wellposed}
We assume that $a:V\times V\to\mathbb{R}$ is a continuous and symmetric bilinear form
and that $\ell:V\to\mathbb{R}$ is a linear functional such that there exist
constants $M,\alpha_{\mathrm{LB}},C_\ell>0$ independent of $\mathbf k\in\mathcal D$ with
\begin{align}
  |a(u,v;\mathbf k)| &\le M\,\|u\|_V\|v\|_V, &&\forall u,v\in V, \label{eq:cont}\\
  a(v,v;\mathbf k) &\ge \alpha_{\mathrm{LB}}\,\|v\|_V^2, &&\forall v\in V_0, \label{eq:coerc}\\
  |\ell(v;\mathbf{k})| &\le C_\ell\|v\|_V,&&\forall v\in V_0. \label{eq:rhs}
\end{align}
In addition, we assume that the Dirichlet trace $\gamma_D:V\to G_D$ admits a bounded
right inverse $T:G_D\to V$, i.e.,
\begin{equation}\label{eq:T}
     \gamma_D(Tg)=g,
  \quad \|Tg\|_V \le C_{\mathrm{tr}}\|g\|_{G_D},
  \qquad\forall g\in G_D,
\end{equation}
for some constant $C_{\mathrm{tr}}>0$ independent of $\mathbf k$.
\end{assumption}

Under Assumption~\ref{ass:wellposed}, the Lax--Milgram theorem \cite{evans2022partial} implies that, for each $\mathbf k\in\mathcal D$,
there exists a unique $u(\mathbf{k})\in V$ with $\gamma_D u(\mathbf k)=g_D(\mathbf k)$
solving the weak problem~\eqref{eq:weak}.
Moreover,
there exists a unique bounded linear lifting operator
$E_D:G_D\to V$ such that
\begin{equation}\label{eq:E_D}
      \gamma_D(E_D g)=g, \qquad
  a(E_D g, v;\mathbf k_\star)=0 \quad \forall v\in V_0 .
\end{equation}
This operator induces an inner product and norm on $G_D$ via
\begin{equation}\label{eq: boundary norm}
    \langle g_1,g_2\rangle_{D,\star}
  := a(E_D g_1, E_D g_2;\mathbf k_\star),
  \qquad
  \|g\|_{D,\star}:=\langle g,g\rangle_{D,\star}^{1/2}.
\end{equation}
With respect to $(\cdot,\cdot)_V$, $E_D$ is an isometry from
$(G_D,\langle\cdot,\cdot\rangle_{D,\star})$ onto its range in $(V,(\cdot,\cdot)_V)$,
i.e., $\|E_D g\|_V=\|g\|_{D,\star}$ for all $g\in G_D$.
Furthermore, for any finite-dimensional subspace $E_r\subset G_D$, if
$P_{E_r}:G_D\to E_r$ denotes the $\langle\cdot,\cdot\rangle_{D,\star}$-orthogonal
projection and $P_{E_D(E_r)}:V\to E_D(E_r)$ the $(\cdot,\cdot)_V$-orthogonal
projection, then
\[
  E_D\big(P_{E_r} g\big) \;=\; P_{E_D(E_r)}\big(E_D g\big),
  \qquad \forall g\in G_D .
\]

Writing the solution of \eqref{eq:strong} as $u(\mathbf k)=w(\mathbf k)+E_D[g_D(\mathbf k)]$
with $w(\mathbf k)\in V_0$,
the weak problem is equivalent to: for given $\mathbf k\in\mathcal D$, find
$w(\mathbf k)\in V_0$ such that
\begin{equation}\label{eq:ps:hom}
  a\big(w(\mathbf k),v;\mathbf k\big)=\mathfrak F(\mathbf k)[v]
  \qquad \forall v\in V_0 ,
\end{equation}
where the aggregated load functional is
\begin{equation}\label{eq:ps:aggload}
  \mathfrak F(\mathbf k)[v]
  := \ell(v;\mathbf k)-a\!\big(E_D[g_D(\mathbf k)],v;\mathbf k\big),
  \qquad \forall v\in V_0 .
\end{equation}
By Assumption \ref{ass:wellposed} and the construction of $E_D$ as in \eqref{eq:E_D}, there exists $C_F>0$ such that
\begin{equation*}
    |\mathfrak F(\mathbf k)[v]| \le C_F\|v\|_{V},
\end{equation*}
for all $\mathbf{k}\in\mathcal{D}$ and $v\in V_0$.
In the sequel, we work with \eqref{eq:ps:hom}.

\subsection{Finite Element approximation}\label{sec:Finite Element Approximation}

In the theory of FEM, let $\mathcal{T}_h$ be a shape-regular FE mesh and $V_h \subset V$ a conforming $P_1$ space.
We define the finite-dimensional ansatz spaces
\(V_h := S_h \cap V\) and \(V_{h,0} := S_h \cap V_0\).
Let \(\{\phi_i\}_{i=1}^{N_h}\) be the standard nodal basis of \(V_h\)
associated with the mesh nodes \(\{\mathbf x_i\}_{i=1}^{N_h}\).
We split the index set of degrees of freedom as
\[
  \mathcal I := \{\,i : \mathbf x_i \notin \Gamma_D\,\}, \qquad
  \mathcal B_D := \{\,i : \mathbf x_i \in \Gamma_D\,\},
\]
so that the functions \(\{\phi_i\}_{i\in\mathcal I}\) form a basis of
\(V_{h,0}\).
Denoting \(N_0 := |\mathcal I|\) and, without loss of generality,
relabeling the indices so that \(\mathcal I = \{1,\dots,N_0\}\),
we obtain that \(\{\phi_i\}_{i=1}^{N_0}\) is a nodal basis of \(V_{h,0}\). 
Define the assembled matrices and load by
\[
  [\mathbf A(\mathbf k)]_{ij} := a(\phi_j,\phi_i;\mathbf k),\qquad
  [\mathbf F(\mathbf k)]_{i} := \ell(\phi_i;\mathbf k),\qquad i,j=1,\dots,N_h,
\]
and write the block partition
\[
  \mathbf A(\mathbf k)=
  \begin{bmatrix}
    \mathbf A_{II}(\mathbf k) & \mathbf A_{I B_D}(\mathbf k)\\
    \mathbf A_{B_D I}(\mathbf k) & \mathbf A_{B_D B_D}(\mathbf k)
  \end{bmatrix},\qquad
  \mathbf F(\mathbf k)=\begin{bmatrix} \mathbf F_I(\mathbf k)\\ \mathbf F_{B_D}(\mathbf k)\end{bmatrix}.
\]
The discrete lifting $\mathbf E_h:\mathbb R^{|\mathcal B_D|}\to\mathbb R^{N_h}$ corresponding to \eqref{eq:E_D} is given by
\[
  \mathbf E_h \mathbf g_{\mathcal B_D} :=
  \begin{bmatrix}
    \mathbf L_{\mathrm{int}}\, \mathbf g_{\mathcal B_D}\\[2pt]
    \mathbf g_{\mathcal B_D}
  \end{bmatrix},\quad\text{with}\quad \mathbf L_{\mathrm{int}}:=-\big(\mathbf A^\star_{II}\big)^{-1}\mathbf A^\star_{I B_D}
  .
\]
Here, $\mathbf g_{\mathcal B_D}$ is the Dirichlet data vector on $\mathcal B_D$ and $\mathbf A^\star_{II}$, $\mathbf A^\star_{I B_D}$ are blocks in $\mathbf A^\star:=\mathbf A(\mathbf k_\star)$.
The Dirichlet boundary metric induced by the reference energy is
\[
  \mathbf W_\Gamma \;:=\; \mathbf E_h^\top \mathbf A^\star \mathbf E_h ,
\]
which yields the discrete inner product 
$\langle \boldsymbol\xi,\boldsymbol\zeta\rangle_{D,\star}:=\boldsymbol\xi^\top \mathbf W_\Gamma \boldsymbol\zeta$
on boundary vectors.
The discrete form of \eqref{eq:ps:aggload} is given by
\[
  \widehat{\mathbf {F}}_I(\mathbf k)
  := \mathbf F_I(\mathbf k)
     - \mathbf A_{I I}(\mathbf k)\,\mathbf L_{\mathrm{int}}\,\mathbf g_{\mathcal B_D}(\mathbf k)
     - \mathbf A_{I B_D}(\mathbf k)\,\mathbf g_{\mathcal B_D}(\mathbf k).
\]
The unknown $\mathbf w_I(\mathbf k)\in\mathbb{R}^{N_0}$ solves 
\begin{equation}\label{eq:discrete w}
   \mathbf A_{I I}(\mathbf k)\,\mathbf w_I(\mathbf k) \;=\; \widehat{\mathbf {F}}_I(\mathbf k),   
\end{equation}
and  
\[\mathbf u_h(\mathbf k)=\begin{bmatrix} \mathbf w_I(\mathbf k)\\ \mathbf{0}\end{bmatrix}+\mathbf E_h \mathbf g_{\mathcal B_D}(\mathbf k).
\]
Since $V_{h,0}\subset V_0$, the continuity and coercivity constants of $a(\cdot,\cdot;\mathbf k)$
carry over to $V_h$ uniformly in $h$ and $\mathbf k$. Hence, the discrete problem is well-posed.
We assume quadratures of sufficient order such that numerical integration errors are negligible.

\section{RB--DeepONet framework}\label{sec:regimes}

Throughout the paper, we work with the homogeneous, lifted formulation
\eqref{eq:ps:hom}:
for given $\mathbf k\in\mathcal D$, find $w(\mathbf k)\in V_0$ such that
\[
  a\big(w(\mathbf k),v;\mathbf k\big)=\mathfrak F(\mathbf k)[v]\qquad\forall v\in V_0,
\]
with $\mathfrak F(\mathbf k)$ defined in \eqref{eq:ps:aggload}.  The full solution is
$u(\mathbf k)=w(\mathbf k)+E_D[g_D(\mathbf k)]$.
We present a unified RB--DeepONet framework that fixes an RB trunk space and learns the branch coefficients via residual minimization with an offline/online split.
Within this framework, we focus on two specific problem cases:

\begin{itemize}
    \item \textbf{Case~I: fully parameterized operators and data.} There exist known maps
\[
  \mathbf k\mapsto \big\{A(\cdot;\mathbf k), c_0(\cdot;\mathbf k), \beta(\cdot;\mathbf k),f(\cdot;\mathbf k),\,g_D(\cdot;\mathbf k),\,h_N(\cdot;\mathbf k),\,r_R(\cdot;\mathbf k)\big\}.
\]
Hence, the learning target is the solution operator 
\begin{equation}\label{eq:map1}
    \mathbf k\ \longmapsto\ w(\cdot;\mathbf k).
\end{equation}
\item \textbf{Case~II: parametric operators with independently varying data.} Here, we consider a more complicated case, where 
only the coefficients depend on $\mathbf k$, while the data may vary independently.
We therefore allow the map
\begin{equation*}
    \begin{aligned}
     \mathbf k&\mapsto \big\{A(\cdot;\mathbf k), c_0(\cdot;\mathbf k), \beta(\cdot;\mathbf k)\big\}.
    \end{aligned}
\end{equation*}
To avoid handling full-order fields online, we compress  exogenous data offline into
low-dimensional coordinates:
\begin{itemize}
  \item source coefficients $\mathbf a\in\mathbb R^{r_f}$ obtained by projecting the
  aggregated load onto \emph{source modes};
  \item boundary coefficients $\mathbf b\in\mathbb R^{r_g}$ obtained by projecting
  $g_D$ onto \emph{boundary modes}.
\end{itemize}
The online learning target is then
\begin{equation}\label{eq:map2}
   \mathbf{k}_{\rm aug}:=\big[\mathbf k^\top,\ \mathbf a^\top,\ \mathbf b^\top\big]^\top
  \ \longmapsto\ w(\cdot;\mathbf{k}_{\rm aug}).   
\end{equation}
\end{itemize}

Since the FE dimension $N_h$ is typically large, solving the full system \eqref{eq:discrete w} repeatedly for many
$\mathbf{k}\in\mathcal D$ becomes prohibitively expensive. 
Both \eqref{eq:map1} and \eqref{eq:map2} are
instances of operator learning, which motivates us to consider using DeepONet to solve them. 
A naive DeepONet would treat the
trunk as an unconstrained black box: the trunk provides learned basis functions, the
branch produces coefficients, and the prediction for $w(\mathbf{k})$ is their inner product, i.e., a linear combination of the trunk functions with branch coefficients. 
While flexible, this representation does not encode any of the underlying PDE or variational structure, and offers little \emph{a priori} control over stability or approximation quality.
A more structured idea is therefore to use basis functions that already embed information about the solution manifold as the trunk. At one extreme, one could take the full finite element basis $\{\phi_i\}_{i=1}^{N_0}$ as the trunk \cite{lee2025finite}, but this would require predicting $N_0$ coefficients, which is both computationally prohibitive and unnecessary when the solution manifold is low-dimensional. POD--DeepONet \cite{lu2022comprehensive} implements a related idea by replacing the FE basis with a POD trunk, but it still relies on a large ensemble of high-fidelity snapshots to construct that trunk space.

RB spaces are rigorously validated in numerical analysis, which capture parametric solution manifolds with very few modes, and admit both \emph{a priori} and \emph{a posteriori} error estimates. Motivated by these properties, we adopt a reduced-order operator-learning pipeline and propose the RB--DeepONet framework, which couples the DeepONet architecture with RB methodology.
Concretely, we construct a low-dimensional reduced space
\[
V_{\mathrm{rb}}:=\operatorname{span}\{\psi_1,\ldots,\psi_N\}\subset V_0,\qquad N\ll N_h,
\]
via Greedy selection. In RB--DeepONet, $V_{\mathrm{rb}}$ acts as a fixed, interpretable trunk and the branch network predicts RB coefficients $\mathbf c_\theta(\mathbf k)$, i.e.,
\begin{equation}\label{eq:map11}
    \boldsymbol{\cdot} \;\longmapsto\; \mathbf c_\theta(\boldsymbol{\cdot}),
\end{equation}
where $\boldsymbol{\cdot}=\mathbf{k}$ in Case~I and $\boldsymbol{\cdot}=\mathbf{k}_{\rm aug}$ in Case~II.
In practice, the branch network can be implemented using standard architectures such as CNNs, MLPs, or ResNets.
In our experiments, the specific architecture is described in Section~\ref{sec:Numerical Results}.

Combining the fixed trunk and the output of branch network, the RB--DeepONet prediction is expressed as
\begin{equation}\label{eq:deeponet}
    w_\theta(\mathbf{x};\boldsymbol{\cdot})
  := \sum_{i=1}^N c_{\theta,i}(\boldsymbol{\cdot})\,\psi_i(\mathbf{x})
  = \Psi(\mathbf{x}) \mathbf c_\theta(\boldsymbol{\cdot}).
\end{equation}
where $\Psi(\mathbf{x})=[\psi_1(\mathbf{x}),\ldots,\psi_N(\mathbf{x})]$ denotes the fixed trunk basis, and $\mathbf c_\theta=[c_{\theta,1},\cdots,c_{\theta,N}]^\top\in\mathbb{R}^N$ is the branch network output.
This yields an offline–online split whose inference cost depends on the RB dimension $N$ but not on
$N_h$, thereby avoiding repeated full-order solves while preserving quantitative accuracy.

For both cases, the RB-trunk construction and the loss function are identical. 
The only differences arise in the inputs of the branch and in the assembly of the aggregated load \eqref{eq:ps:aggload} during the offline stage. 
Accordingly, we present the shared material once (under Case~I) in Sections~\ref{sec:rb-construction}-\ref{sec:Training and inference}, and introduce the specific treatment for Case~II in Section~\ref{sec:Specialization for Case2}.

\subsection{RB trunk construction}\label{sec:rb-construction}

Our goal is to build a low-dimensional space \(V_{\mathrm{rb}}\subset V_0\) from a
parameter sample set \(\mathcal S:=\{\mathbf k_i\}_{i=1}^{N_k}\subset\mathcal D\) such that the
Galerkin projection on \(V_{\mathrm{rb}}\) provides accurate surrogates uniformly over
\(\mathcal S\).
We adopt a Greedy selection as the default strategy: the space is grown
iteratively, one truth snapshot per iteration, until a certified error indicator
falls below a prescribed tolerance.
Thus, it requires only \(N\) high-fidelity solves to
build an \(N\)-dimensional RB space. For comparison, we also summarize the POD construction, which needs all \(N_k\) snapshots but enjoys an optimality identity. Further details can be found in \cite{hesthaven2016certified}.

First, we choose an initial sample \(\mathbf k_1\in\mathcal S\), compute
the high-fidelity solution \(w(\mathbf k_1)\),  normalized in the \(V\)-norm, and initialize
\(V_{\mathrm{rb}}=\operatorname{span}\{w(\mathbf k_1)\}\).
Given the current space \(V_{\mathrm{rb}}\), for each \(\mathbf k\in\mathcal S\), we compute the
reduced approximation \(w_{\mathrm{rb}}(\mathbf k)\in V_{\mathrm{rb}}\) as the solution of
\begin{equation}\label{eq:rb-variational}
  a\big(w_{\mathrm{rb}}(\mathbf k),v;\mathbf k\big)\;=\;\mathfrak F(\mathbf k)[v]
  \qquad \forall v\in V_{\mathrm{rb}}.
\end{equation}
Define the residual functional and its dual norm by
\begin{equation}\label{eq:rb-residual}
  r(v;\mathbf k):=\mathfrak F(\mathbf k)[v]-a\big(w_{\mathrm{rb}}(\mathbf k),v;\mathbf k\big),
  \qquad
  \|r(\cdot;\mathbf k)\|_{V_0'}:=\sup_{0\neq v\in V_0}\frac{r(v;\mathbf k)}{\|v\|_V}.
\end{equation}
With the coercivity lower bound \(\alpha_{\mathrm{LB}}>0\),
the \emph{a posteriori} estimator
\begin{equation}\label{eq:rb-estimator}
  \eta(\mathbf k):=\alpha_{\mathrm{LB}}^{-1}\|r(\cdot;\mathbf k)\|_{V_0'}
\end{equation}
controls the energy error \cite{hesthaven2016certified}:
\begin{equation}\label{eq:rb-reliability}
  \|w(\mathbf k)-w_{\mathrm{rb}}(\mathbf k)\|_V \;\le\; \eta(\mathbf k)
  \qquad \forall\,\mathbf k\in\mathcal D.
\end{equation}
Thus, we use $\eta(\mathbf k)$ to do Greedy selection as summarized in Algorithm \ref{alg:greedy} below.

\begin{algorithm}[H]\label{alg:greedy}
  \caption{Greedy construction of \(V_{\mathrm{rb}}\)}
  \KwIn{parameter sample set \(\mathcal S\), tolerance \(\epsilon_{\mathrm{greedy}}>0\)}
  \KwOut{RB basis \(\{\psi_i\}_{i=1}^{N}\)}
  Set $n=1$.\\
  Pick \(\mathbf k_1\in\mathcal S\); compute the truth \(w(\mathbf k_1)\), normalize in \(\|\cdot\|_V\),
  and set \(V_{\mathrm{rb}}=\mathrm{span}\{w(\mathbf k_1)\}\).\\
  \While{\(\max_{\mathbf k\in\mathcal S}\eta(\mathbf k)>\epsilon_{\mathrm{greedy}}\)}{
    For each \(\mathbf k\in\mathcal S\), solve \eqref{eq:rb-variational} in \(V_{\mathrm{rb}}\) and evaluate \(\eta(\mathbf k)\).\\
    Set \(\mathbf k_{n+1}=\arg\max_{\mathbf k\in\mathcal S}\eta(\mathbf k)\).\\
    Compute the truth \(w(\mathbf k_{n+1})\); \(V\)-orthonormalize and enrich
    \(V_{\mathrm{rb}}\leftarrow V_{\mathrm{rb}}\oplus\mathrm{span}\{w(\mathbf k_{n+1})\}\).\\
    Set $n=n+1$.
  }
\end{algorithm}

As summarized in Algorithm~\ref{alg:greedy}, the offline cost is
proportional to \(N\) truth solves.
Under an affine decomposition or an empirical interpolation surrogate, all
parameter-independent contributions to \eqref{eq:rb-variational} and
\eqref{eq:rb-estimator} can be preassembled, so the online evaluation of
\(\eta(\mathbf k)\) and the reduced solve scales only with
\(\dim V_{\mathrm{rb}}=N\), not with the full FE dimension.
Moreover, the Greedy algorithm uses the estimator \eqref{eq:rb-estimator}
as its selection criterion, so the maximal indicator over the sample set
decreases monotonically with each enrichment step, providing a built-in
measure of convergence of the RB space.

Another way to construct $V_{\mathrm{rb}}$ is POD.
Let \(w^i:=w(\mathbf k_i)\in V_{0}\) be the snapshots and
\(V_{\mathcal{S}}=\mathrm{span}\{w^i: i=1,\dots,N_k\}\). Define the correlation operator
\begin{equation}\label{eq:pod-corr}
  \mathcal C(v):=\tfrac1{N_k}\sum_{i=1}^{N_k}(v,w^i)_V\,w^i,\qquad v\in V_0,
\end{equation}
and let \(\{(\lambda_i,\psi_i)\}_{i=1}^{N_k}\) be its eigenpairs in \(V_{\mathcal{S}}\),
ordered \(\lambda_1\ge\cdots\ge\lambda_{N_k}\ge0\) with \(\|\psi_i\|_V=1\).
The POD space is \(V_{\mathrm{rb}}=\mathrm{span}\{\psi_1,\dots,\psi_N\}\), and it satisfies
the optimality identity
\begin{equation}\label{eq:pod-opt}
  \frac1{N_k}\sum_{i=1}^{N_k}\|w^i-P_N w^i\|_V^2
  \;=\;\sum_{i=N+1}^{N_k}\lambda_i,
\end{equation}
where \(P_N\) is the \(V\)-orthogonal projector onto \(V_{\mathrm{rb}}\).
In practice, \(N\) is chosen as the smallest integer such that
\begin{equation}\label{eq:pod-energy}
  1-\frac{\sum_{i=1}^{N}\lambda_i}{\sum_{i=1}^{N_k}\lambda_i}\ \le\ \epsilon_{\mathrm{POD}}^{\,2},
\end{equation}
for some tolerance $\epsilon_{\mathrm{POD}}>0$.

Note that POD requires all \(N_k\) snapshots but delivers the best mean-square projection in
\eqref{eq:pod-opt}. In contrast, the Greedy procedure attains comparable accuracy with
significantly fewer truth solves and, with \eqref{eq:rb-reliability}, provides certified
selection on \(\mathcal S\).

\begin{remark}\label{remark: comparsion1}
In addition to Greedy selection, we also include POD, since it is the standard mechanism for prescribing a fixed trunk in POD--DeepONet \cite{lu2022comprehensive}. In that setting, one first computes a set of high-fidelity snapshots, forms the POD modes, and then fixes the trunk to the leading modes while the branch network predicts their coefficients. In our numerical experiments in Section \ref{sec:Numerical Results}, the POD trunk used for POD--DeepONet is constructed exactly in this way.

In RB--DeepONet, POD is only one possible choice for constructing an interpretable trunk. A certified Greedy procedure can generate a basis of the same size by adaptively sampling parameters using \emph{a posteriori} error estimators, thereby reducing the offline cost while retaining rigorous error control \cite{hesthaven2016certified}. Our numerical results indicate that, for a fixed reduced dimension $N$, POD- and Greedy-generated trunks yield comparable prediction errors. We therefore report both, using the Greedy basis as the default for offline economy and the POD basis as a convenient baseline.
\end{remark}

\subsection{Offline reduced operators}\label{sec:rb-assembly}

Let $\{\psi_i\}_{i=1}^N\subset V_0$ be the reduced basis built by Algorithm \ref{alg:greedy}.
Each $\psi_i$ is represented in the FE trial space $V_{h,0}$ with basis
$\{\phi_j\}_{j=1}^{N_0}$. We denote this FE representation by $\psi_{h,i}\in V_{h,0}$.
Thus there exist coefficient vectors $\mathbf v_i\in\mathbb R^{N_0}$ such that
\begin{equation}\label{eq:psi-FE-expansion}
  \psi_{h,i}(\mathbf x)=\sum_{j=1}^{N_0} (\mathbf v_i)_j\,\phi_j(\mathbf x),
  \qquad i=1,\ldots,N.
\end{equation}
Collecting the columns $\mathbf v_i$ yields the reduced basis matrix
\begin{equation}\label{eq:rb-matrix-Psi}
    \Psi := [\,\mathbf v_1,\ldots,\mathbf v_N\,]\in\mathbb R^{N_0\times N}, \qquad N\ll N_0.
\end{equation}
Recall that the full FE solution coefficients
$\mathbf w_{I}(\mathbf k)\in\mathbb R^{N_0}$ satisfy \eqref{eq:discrete w}.
The RB--Galerkin approximation seeks $\mathbf c_N(\mathbf k)=[c_{N,1}(\mathbf k),\ldots,c_{N,N}(\mathbf k)]^\top\!\in\mathbb R^N$
such that $\Psi\,\mathbf c_N(\mathbf k)\approx \mathbf w_{I}(\mathbf k)$. Galerkin projection on
$V_{\rm rb}=\mathrm{span}\{\psi_i\}_{i=1}^N$ gives the reduced system
\begin{equation}\label{eq:rb-system}
  \mathbf A_{\mathrm{rb}}(\mathbf k)\,\mathbf c_N(\mathbf k)=\mathbf F_{\mathrm{rb}}(\mathbf k),
\end{equation}
with
\begin{equation}\label{eq:rb-operators}
  \mathbf A_{\mathrm{rb}}(\mathbf k)=\Psi^\top \mathbf A_{II}(\mathbf k)\Psi,
  \qquad
  \mathbf F_{\mathrm{rb}}(\mathbf k)=\Psi^\top \widehat{\mathbf{F}}_I(\mathbf k).
\end{equation}
Under the uniform well-posedness assumptions of the full model, the RB--Galerkin approximation \eqref{eq:rb-system} is
well-defined and admits a unique solution
$\mathbf c_N(\mathbf k)$, which is the coefficient vector that our branch network
aims to predict.

To make the online cost independent of $N_0$, we employ an affine decomposition of the FE
operators:
\begin{equation}\label{eq:affine-decomp}
  \mathbf A_{II}(\mathbf k)=\sum_{p=1}^{Q_a}\Theta^a_p(\mathbf k)\,\mathbf A_p,
  \qquad
  \widehat{\mathbf{F}}_{I}(\mathbf k)=\sum_{q=1}^{Q_f}\Theta^f_q(\mathbf k)\,\mathbf F_q,
\end{equation}
where $\Theta^a_p,\Theta^f_q$ are parameter-dependent scalars and
$\mathbf A_p,\mathbf F_q$ are parameter-independent FE quantities.
In the offline stage, we precompute and store the reduced components
\begin{equation}\label{eq:A_p F_q}
      \mathbf A^N_p:=\Psi^\top \mathbf A_p\Psi,\qquad
  \mathbf F^N_q:=\Psi^\top \mathbf F_q,
  \qquad p=1,\ldots,Q_a,\; q=1,\ldots,Q_f.
\end{equation}
Then, at query time, the reduced system \eqref{eq:rb-system} is assembled as
\begin{equation}\label{eq:A_rb 1}
      \mathbf A_{\mathrm{rb}}(\mathbf k)=\sum_{p=1}^{Q_a}\Theta^a_p(\mathbf k)\,\mathbf A^N_p,
  \qquad
  \mathbf F_{\mathrm{rb}}(\mathbf k)=\sum_{q=1}^{Q_f}\Theta^f_q(\mathbf k)\,\mathbf F^N_q,
\end{equation}
at cost $\mathcal O(Q_a N^2+Q_f N)$, independent of $N_0$.
If an exact affine form is not available, empirical interpolation (EIM/DEIM)
can be used to approximate \eqref{eq:affine-decomp}~\cite{hesthaven2016certified,chaturantabut2010nonlinear,chaturantabut2009discrete}.
This non-affine setting is instantiated in our numerical experiments (Example~\ref{sec:ex3}), where EIM is employed to approximate the parameter dependence, thereby demonstrating the broad applicability of the proposed offline–online framework.

\subsection{Training and inference}\label{sec:Training and inference}

In the online stage, we feed the branch network with $\mathbf{k}$ and obtain the RB coefficients
$\mathbf c_\theta(\mathbf k)\in\mathbb R^N$.
With the fixed trunk $\Psi(\mathbf{x})$,
the RB--DeepONet reconstructions are given in \eqref{eq:deeponet}, and the prediction reads
\[
u_\theta(\mathbf x;\mathbf k)
=w_\theta(\mathbf x;\mathbf k)
+E_D(g_D(\mathbf x;\mathbf k)).
\]
Rather than data-driven training, 
we employ the variational formulation and enforce physics through the residual projected onto the RB space.
Using the RB stiffness $\mathbf{A}_{\rm rb}(\mathbf{k})$ and load $\mathbf{F}_{\rm rb}(\mathbf{k})$ defined in \eqref{eq:A_rb 1}, we collect the RB residual tested against $\{\psi_i\}_{i=1}^N$ in the vector
\begin{equation}\label{eq: rb residual}
    \mathbf{r}(\mathbf{k}) := \mathbf{F}_{\rm rb}(\mathbf{k})-\mathbf{A}_{\rm rb}(\mathbf{k})\mathbf{c}_\theta(\mathbf{k})\in\mathbb R^N.
\end{equation}
Given a parameter set $\mathcal B_s=\{\mathbf{k}_i\}_{i=1}^{N_s}$, the loss function is then defined as
\begin{equation}\label{eq:forward-loss}
    \mathcal{L}(\theta)=\frac{1}{N_{s}}\sum_{i=1}^{N_{s}}\|\mathbf A_{\rm rb}(\mathbf k_i)^{-1/2}\mathbf r(\mathbf{k_i})\|_2^2,
\end{equation}
This is exactly the intrusive Galerkin equilibrium condition projected onto the reduced basis.
Thus, minimizing \eqref{eq:forward-loss} drives the RB-projected residual to zero. In particular, if $\mathcal L(\theta)=0$, then $\mathbf{r}(\mathbf{k}_i)=\mathbf{0}$, and $\mathbf{c}_\theta(\mathbf{k}_i)$ coincides with the RB--Galerkin coefficients $c_N(\mathbf{k}_i)$ defined in \eqref{eq:rb-system}, for all $i=1,\cdots,N_s$.

The overall procedure for RB--DeepONet is summarized in Algorithm \ref{alg: RB--DeepONet} below.

\begin{algorithm}[htbp]\label{alg: RB--DeepONet}
  \caption{RB--DeepONet for parametric PDEs (fixed trunk, learned branch)}
  \KwIn{Parameter samples $\mathcal{S}$; epochs $T$; batch size $N_s$;  
        Greedy tolerance $\epsilon_{\rm greedy}$.}
  \KwOut{Trained surrogate $u_\theta(\mathbf{x};\mathbf{k})$.}

  \BlankLine
  \textbf{Offline (RB-trunk and reduced operators).}\\
  Construct RB basis $\{\psi_i\}_{i=1}^N$ by Algorithm~\ref{alg:greedy}, 
         and form $\Psi(\mathbf{x}):=[\psi_1(\mathbf{x}),\ldots,\psi_N(\mathbf{x})]$.\\
         Preassemble all parameter-independent reduced operators

  \BlankLine
  \textbf{Training (operator learning).}\\
  Fix $\Psi(\mathbf{x})$ as trunk; initialize branch net $\mathbf c_\theta$.\\
  \For{$t=1$ \KwTo $T$}{
        Shuffle $\mathcal{S}$ and split it into batches $\{\mathcal B_s\}_{s=1}^S$ of size $N_s$.\\
        \For{$s=1$ \KwTo $S$}{
        Update the branch network by minimizing $\mathcal{L}(\theta)$ defined in \eqref{eq:forward-loss}.
        }
     }

  \BlankLine
  \textbf{Online (prediction).}\\
  Given new $\mathbf{k}\in\mathcal D$, output $w_\theta(\mathbf{x};\mathbf{k})=\Psi(\mathbf{x})  \mathbf c_\theta(\mathbf{k})$.\\ 
  Formulate $u_\theta(\mathbf{x};\mathbf k)=w_\theta(\mathbf{x};\mathbf k)+E_D[g_D(\mathbf{x};\mathbf k)]$.
\end{algorithm}

\subsection{Extension to independent boundary and source data}\label{sec:Specialization for Case2}

Recall that in Case II we assume only the operator coefficients $A(\cdot;\mathbf k), c_0(\cdot;\mathbf k), \beta(\cdot;\mathbf k)$ depend on $\mathbf k$, whereas the data $(f,g_D,h_N,r_R)$ are allowed to vary inpedently.
The construction of trunk basis has the same procedure as we introduced in Section \ref{sec:rb-construction} and the training procedure is identical to Algorithm~\ref{alg: RB--DeepONet}. The only
changes are the branch input and the assembly of the reduced right–hand side $\mathbf F_{\mathrm{rb}}(\mathbf{k})$.

\subsubsection{Boundary and source modes construction}\label{sec:bd-src-construct}

To avoid carrying full-order fields online, we construct, in the offline stage, two compact data-driven spaces from snapshot ensembles:
a boundary space
$E_{r_g}\subset G_D$ for Dirichlet traces and a source space $W_{r_f}\subset V_0$
for the aggregated load functional.
At inference time, we work only with the modal coordinates $\mathbf{b}\in\mathbb{R}^{r_g}$ and $\mathbf{a}\in\mathbb{R}^{r_f}$, obtained by projecting the data onto $E_{r_g}$ and $W_{r_f}$.
We adopt a certified Greedy construction.
As in Algorithm \ref{alg:greedy}, we start from a rank–one initial space by a randomly chosen function in the corresponding function space. At each iteration, we select
modes by maximizing certified error indicators

Let $\mathcal G=\{g^{(m)}\}_{m=1}^{N_g}\subset G_D$ be a training set of boundary traces. Given the current boundary subspace $E_r\subset G_D$ and its
orthogonal projector $P_{E_r}$ in $\langle\cdot,\cdot\rangle_{D,\star}$, we define the \emph{a posteriori} estimator
\begin{equation}\label{eq:bd-greedy-ind}
  \eta_g(m):=\big\|\,g^{(m)}-P_{E_r}g^{(m)}\,\big\|_{D,\star}.
\end{equation}
The Greedy update selects $m^\star=\arg\max_{m}\eta_g(m)$.
The corresponding function $g^{(m^\star)}$ is orthonormalized in $\langle\cdot,\cdot\rangle_{D,\star}$ and appended to the boundary space $E_{r+1}$.
The procedure is repeated until $\max_m\eta_g(m)\le\epsilon_g$ for a prescribed tolerance $\epsilon_g$.
The resulting \emph{boundary modes} are denoted by $\{\eta_n\}_{n=1}^{r_g}$, which span the space $E_{r_g}$. Their corresponding liftings are defined as $\mathcal{E}_n := E_D[\eta_n]$.

For any parameter $\mathbf k$, let $u(\mathbf k)$ be the solution with datum $g^{(m)}$,
and let $u^{(g_r)}(\mathbf k)$ be the solution with $g_r:=P_{E_r}g^{(m)}$.
By the lifting isometry and the coercivity of $a(\cdot,\cdot;\mathbf k)$, there exists
$C_g:=1+M\alpha_{\mathrm{LB}}^{-1}$ such that
\begin{equation}\label{eq:bd-greedy-reliable}
  \big\|u(\mathbf k)-u^{(g_r)}(\mathbf k)\big\|_V
  \ \le\ C_g\,\eta_g(m) \qquad \forall\,\mathbf k\in\mathcal{D},\ \forall\,m\in\{1,\cdots,N_g\} .
\end{equation}

To obtain \emph{source modes}, let $\mathcal F=\{\mathfrak F^{(n)}\}_{n=1}^{N_f}\subset V_0'$ be a training set
of aggregated load functionals, which is built from representative $f,g_D,h_N,r_R$.
We define the Riesz map
$\mathcal R_\star:V_0\to V_0'$ by
\begin{equation*}
    (\mathcal R_\star q)[v]:=a(q,v;\mathbf k_\star)=(q,v)_V.
\end{equation*}
Following Riesz representation theorem \cite{brezis2011functional}, $\mathcal R_\star$
is an isometric isomorphism.
Hence, for each aggregated load functional
$\mathfrak F^{(n)}\in V_0'$, there exists a unique Riesz representer
$q^{(n)}=\mathcal R_\star^{-1}\mathfrak F^{(n)}\in V_0$ satisfying
\begin{equation}\label{eq:ref-q}
  a\big(q^{(n)},v;\mathbf k_\star\big)=\mathfrak F^{(n)}[v]\qquad \forall v\in V_0.
\end{equation}
Approximating $\mathfrak F^{(n)}$ in the dual norm $\|\cdot\|_{V_0'}$ is therefore
equivalent to approximating its representer $q^{(n)}$ in the $V$–norm:
\[
\min_{w\in W_r}\|\mathfrak F^{(n)}-\mathcal R_\star w\|_{V_0'}
\;=\; \min_{w\in W_r}\|q^{(n)}-w\|_V .
\]
Thus, we project functionals by first mapping them to $q$ via
\eqref{eq:ref-q} and then working in the primal space $V_0$.
Let $W_r\subset V_0$ be the current source subspace and
$P_{W_r}$ the $(\cdot,\cdot)_V$–orthogonal projector, we define the \emph{a posteriori} estimator
\begin{equation}\label{eq:src-greedy-ind}
  \eta_f(n):=\big\|\,q^{(n)}-P_{W_r}q^{(n)}\,\big\|_V =\min_{w\in W_r}\big\|q^{(n)}-w\big\|_V
  =\min_{w\in W_r}\big\|\mathfrak F^{(n)}-\mathcal R_\star w\big\|_{V_0'} .
\end{equation}
Then, $\eta_f(n)$ is precisely the best-approximation error of the functional in
$\|\cdot\|_{V_0'}$. 
The greedy update selects
$n^\star=\arg\max_n\eta_f(n)$.
The corresponding residual is orthonormalized in $V$–norm and appended to $W_r$.
The iteration terminates once $\max_n\eta_f(n)\le\epsilon_f$ for a prescribed tolerance $\epsilon_f$.
The resulting source modes are $\{W_f^{(m)}\}_{m=1}^{r_f}$, which span $W_{r_f}$.

For any parameter $\mathbf k$, let $w(\mathbf k)\in V_0$ solve the homogeneous problem
with load $\mathfrak F^{(n)}$ and let $w_r(\mathbf k)\in V_0$ solve the same
problem with projected load $(P_{W_r}q^{(n)},\cdot)_V$.
Then, using the uniform coercivity of $a(\cdot,\cdot;\mathbf{k})$, we obtain the reliability bound
\begin{equation}\label{eq:src-greedy-reliable}
  \big\|w(\mathbf k)-w_r(\mathbf k)\big\|_V
  \ \le\ \alpha_{\mathrm{LB}}^{-1}\,\eta_f(n)
  \qquad \forall\,\mathbf k\in\mathcal{D},\ \forall\,n\in\{1,\cdots,N_f\}.
\end{equation}

Combining \eqref{eq:bd-greedy-reliable} and \eqref{eq:src-greedy-reliable} shows that,
for tolerances $\epsilon_g,\epsilon_f$, the boundary and source projections contribute at most 
$\mathcal O(\epsilon_g)$ and $\mathcal O(\epsilon_f)$, respectively, to the $V$–norm error for all $k\in D$ and for all training snapshots $g^{(m)}\in G$, $F^{(n)}\in F$.
Consequently, we feed only the projection coordinates into the network, thereby avoiding any
manipulation of full-order vectors or matrices during training and inference.
On these offline snapshot sets, the induced modeling error is bounded by
$\mathcal O(\epsilon_g+\epsilon_f)$ and becomes negligible once $r_g,r_f$ are chosen sufficiently large.

\begin{remark}[Projection error for unseen data]
For a new boundary datum $\tilde g\in G_D$ and a new aggregated load
$\tilde F\in V_0'$, the modeling error induced by the boundary and source projections is exactly the
distance of the data to the spaces $E_{r_g}$ and $W_{r_f}$.
If the admissible boundary and source data lie in the closures of
$\mathrm{span}(\mathcal G)$ and $\mathrm{span}(\mathcal F)$ (for instance when they are generated by
smooth parametric families and the snapshot sets sample the corresponding parameter
domains densely), then increasing $r_g$ and $r_f$ drives these distances, and hence the
projection error, to zero.
In our numerical experiment (Example \ref{sec:ex2}), all evaluation data are drawn from the same data
families as the offline snapshots, so the observed projection errors are of the same
order as predicted above.
\end{remark}


\subsubsection{Modal encoding and RB right-hand-side assembly}

By the discussion above, we construct a boundary subspace 
$E_{r_g}=\operatorname{span}\{\eta_n\}_{n=1}^{r_g}\subset G_D$ 
and a source subspace 
$W_{r_f}=\operatorname{span}\{W_f^{(m)}\}_{m=1}^{r_f}\subset V_0$.
In the online stage, rather than supplying the branch with full-order data $(f,g_D,h_N,r_R)$,
we use their modal coordinates obtained by projecting $g_D$ and the source functional onto 
$E_{r_g}$ and $W_{r_f}$, respectively.
This encoding yields a strict offline/online split and substantially reduces memory and latency.
Although truncation of the data modes introduces an additional error, the \emph{a priori} estimates\eqref{eq:bd-greedy-reliable}, \eqref{eq:src-greedy-reliable}
show that it is controlled by the chosen dimensions $(r_g,r_f)$ and can be made arbitrarily small.

Define the vector $\mathbf b:=[b_1,\cdots,b_{r_g}]^\top\in\mathbb R^{r_g}$ by
\begin{equation}\label{eq:b-vector}
  b_n
  \;:=\;
  \langle\,g_D,\,\eta_n\,\rangle_{D,\star},
  \qquad n=1,\dots,r_g,
\end{equation}
i.e., the orthogonal projection of $g_D$ onto $E_{r_g}$ in $\langle\cdot,\cdot\rangle_{D,\star}$.
Since $\{W_f^{(m)}\}_{m=1}^{r_f}$ is orthonormal in $\langle\cdot,\cdot\rangle_V$, applying \eqref{eq:ref-q},
the coefficients $\mathbf a:=[a_1,\cdots,a_{r_f}]^\top\in\mathbb R^{r_f}$ are obtained by
\begin{equation}\label{eq:a-vector}
  a_m
  \;:=\;
  \mathfrak F(\mathbf k)\!\left[\,W_f^{(m)}\,\right],
  \qquad m=1,\dots,r_f .
\end{equation}

Recall that the reduced operator $\mathbf A_{\mathrm{rb}}(\mathbf k)$ is assembled via the affine/EIM expansion introduced earlier in \eqref{eq:A_rb 1}.
For Case II, the reduced right–hand side should make use of the source and boundary modes as well as their coordinate vectors:
\begin{equation}\label{eq:Frb-clean}
  \mathbf F_{\mathrm{rb}}(\mathbf k_{\rm aug})
  \;:=\;
  \underbrace{\sum_{m=1}^{r_f} a_m(\mathbf k)\,\mathbf F^{(s)}_{m,\mathrm{rb}}}_{\text{source contribution}}
  \;-\;
  \underbrace{\sum_{n=1}^{r_g} b_n\,
    \Big(\sum_{p=1}^{Q_a}\Theta^{(a)}_{p}(\mathbf k)\,\mathbf G^{(p)}_{n,\mathrm{rb}}\Big)}_{\text{Dirichlet lifting}},
\end{equation}
where each term is precomputable offline:
\begin{equation}\label{eq:Frb-building-blocks}
 \mathbf F^{(s)}_{m,\mathrm{rb}} \;:=\; \Psi^\top \mathbf F\!\big(W_f^{(m)}\big)\in\mathbb R^{N},
  \qquad
  \mathbf G^{(p)}_{n,\mathrm{rb}} \;:=\; \Psi^\top \mathbf A_p\,\mathbf {E}_n \in\mathbb R^{N}.
\end{equation}
Here, \(\mathbf {E}_n\in \mathbb R^{N_0}\) are vectors of the lifted boundary modes $\mathcal{E}_n$ on $\mathcal{I}$.
The vector \(\mathbf F(v)\in\mathbb R^{N_0}\) associated with any \(v\in V_0\) is the discrete Riesz image
\([\mathbf F(v)]_i := a(v,\phi_i;\mathbf k_\star)\),
and satisfies \(\Psi^\top \mathbf F(v)=\big(a(v,\psi_j;\mathbf k_\star)\big)_{j=1}^{N_0}\).
Then, the RB approximation solves
\begin{equation}\label{eq:rb-linear-system}
  \mathbf A_{\mathrm{rb}}(\mathbf k)\,c_N(\mathbf k_{\rm aug}) \;=\; \mathbf F_{\mathrm{rb}}(\mathbf k_{\rm aug}),
\end{equation}
and we target the RB coefficient vector \(c_N(\mathbf k)\in\mathbb R^N\).
Under the uniform continuity and coercivity assumptions on $a(\cdot,\cdot;\mathbf k)$, \eqref{eq:rb-linear-system} is well posed.

\begin{remark}
  The vector $F_{\mathrm{rb}}(\mathbf k_{\rm aug})$ in \eqref{eq:Frb-clean}
  should be interpreted as an approximate RB right-hand side in general.
The ``exact'' RB right-hand side $F_{\mathrm{rb}}^{\mathrm{exact}}(\mathbf k)$ depends on the complete data $(f,g_D,h_N,r_R)$.
$F_{\mathrm{rb}}(\mathbf k_{\rm aug})$ in \eqref{eq:Frb-clean} approximates the boundary and source contributions by the projections onto $E_{r_g}$ and $W_{r_f}$,
which can be assembled online from
the low-dimensional coordinates $\mathbf a$, $\mathbf b$, and the affine/EIM coefficients $\Theta^{(a)}_{p}(\mathbf k)$, without manipulating any full-order vectors or matrices.

  By the reliability bounds for the boundary and source projections,
  \eqref{eq:bd-greedy-reliable}–\eqref{eq:src-greedy-reliable}, we have an error bound of the form
    \[
    \big\|\mathbf c_N(\mathbf k_{\rm aug})
          - \mathbf c_N^{\mathrm{exact}}(\mathbf k)\big\|
    \;\le\; C_{\mathrm{rb}} \,(\epsilon_g+\epsilon_f),
  \]
 for some uniform constant $C_{\mathrm{rb}}>0$.
  Here, $\mathbf c_N^{\mathrm{exact}}(\mathbf k)$ denotes the RB
  coefficient vector obtained by solving the RB--Galerkin system with
  $F_{\mathrm{rb}}^{\mathrm{exact}}(\mathbf k)$.
  As $r_g,r_f$ increase and $\epsilon_g,\epsilon_f\to 0$, the reduced
  right-hand side and the solution of \eqref{eq:rb-linear-system} converge
  to those of the fully constrained RB--Galerkin system, while strict
  offline–online separation is preserved.
\end{remark}

We feed the branch network with the concatenated feature vector
\begin{equation}\label{eq:feature vector}
    \mathbf k_{\rm aug}=\big[\,\mathbf k^\top,\ \mathbf a^\top,\ \mathbf b^\top\,\big]^\top
  \in\mathbb R^{p+r_f+r_g},
\end{equation}
and obtain the RB coefficients
$\mathbf c_\theta(\mathbf{k}_{\rm aug})\in\mathbb R^N$.
The residual load $F_{\mathrm{rb}}(\mathbf{k}_{\rm aug})$ \eqref{eq:Frb-clean} is used to form the loss function \eqref{eq:forward-loss}.
With the fixed trunk $\Psi=\{\psi_j\}_{j=1}^N$ and precomputed liftings $\{\mathcal{E}_n\}_{n=1}^{r_g}$,
the RB--DeepONet prediction reads
\[
u_\theta(\mathbf x;\mathbf{k}_{\rm aug})
=w_\theta(\mathbf x;\mathbf{k}_{\rm aug})
+\sum_{n=1}^{r_g} b_n\,\mathcal{E}_n(\mathbf x).
\]

\subsection{Summary and comparison}

Figure~\ref{fig:RB--DeepONet workflow} summarizes the RB--DeepONet pipeline, highlighting the fixed RB
trunk, the residual-based training objective, and the Case~II feature extraction for
$\mathbf a,\mathbf b$. 
Table~\ref{tab:cases-feonet} positions our method against POD--DeepONet~\cite{lu2022comprehensive} and FEONet~\cite{lee2025finite}.
In POD--DeepONet, the trunk is prescribed by a POD basis computed from labeled solution snapshots, while the branch network takes as input a discrete representation of the PDE data (e.g., coefficient fields, forcing terms, or boundary traces), and outputs the coefficient vector $\mathbf{c}_{\theta}(\mathbf{k})\in\mathbb{R}^N$.
The network is trained in a fully supervised fashion,
\begin{equation}\label{eq:pod deeponet loss}
    \mathcal{L}_{\rm sup}
    =\frac{1}{N_s}\sum_{i=1}^{N_s}
    \big\|u_{\theta}(\cdot;\mathbf{k}_i)-u_h(\cdot;\mathbf{k}_i)\big\|^2_{L^2(\Omega)}.
\end{equation}
Dirichlet data are enforced exactly by a lifting function $E_D$, i.e., 
$u_\theta(\cdot;\mathbf{k})=\Psi(\mathbf{x})\,\mathbf{c}_\theta(\mathbf{k})+E_D(g_D)$.
FEONet keeps the full FE basis as trunk and predicts the FE coefficient vector 
$\mathbf{w}_\theta(\mathbf{k})\in\mathbb R^{N_0}$ for each input features (e.g., forcing and/or coefficient fields and boundary values), while minimizing the FE variational residual,
\begin{equation}\label{eq: feonet loss}
    \mathcal{L}(\theta)
    =\frac{1}{N_{s}}\sum_{i=1}^{N_{s}}
    \big\|\mathbf{F}_I(\mathbf k_i)-\mathbf A_{II}(\mathbf k_i)\,\mathbf{w}_\theta(\mathbf k_i)\big\|_2^2.
\end{equation}
This formulation enables unsupervised training and guarantees exact enforcement of Dirichlet boundary conditions at the nodal degrees of freedom.

As a comparison, we briefly summarize the complexity profiles of RB--DeepONet, POD--DeepONet, and FEONet.

\emph{Offline.}
All three methods start from the same full-order discretization and require assembling the FE stiffness matrices and load vectors (with affine or empirical interpolation decompositions). 
Beyond this baseline, POD--DeepONet solves the full-order problem for $N_k$ parameter samples to generate snapshots, and then performs a POD and projection to obtain the POD trunk and the associated reduced operators. 
Its offline cost is therefore dominated by the $N_k$ truth solves. 
RB--DeepONet instead requires only $N$ truth solves to construct the RB trunk and its reduced operators, so its additional offline cost scales with $N$.
The offline cost of FEONet is only dominated by the full-order assembly.

\emph{Training.}
Let $\mathrm{net}(M)$ denote the cost of one forward/backward pass of the
branch network with $M$-dimensional output, for a fixed hidden
architecture (same depth and hidden widths across methods).
RB--DeepONet evaluates the PDE residual entirely in the reduced space, so
the per-sample cost is $\mathrm{net}(N)+\text{RB}$,
where RB denotes the reduced residual in $\mathbb R^N$ (order $N^2$), with
an additional low-rank term $\mathcal O(N r_g + N r_f)$ in Case~II.
POD--DeepONet uses the same branch architecture but, at each step,
reconstructs the full field and compares it to FEM snapshots, leading to $\mathrm{net}(N)+\text{full}(N_0 N)$
per sample, where $\text{full}(N_0 N)$ summarizes the full-field
reconstruction and supervised loss on $N_0$ degrees of freedom.
FEONet directly predicts a full-order solution in $\mathbb R^{N_0}$ and
minimizes the FE variational residual, so its per-sample cost is $\mathrm{net}(N_0)+\text{FE}$,
with FE denoting a full-order FE residual evaluation.
For fixed depth and hidden widths, $\mathrm{net}(M)$ grows approximately
linearly with $M$, so $\mathrm{net}(N_0)$ is larger than
$\mathrm{net}(N)$ by a factor of order $N_0/N$.
Thus, in the typical regime $N \ll N_0$, FEONet is the most expensive to
train, POD--DeepONet lies in between, and RB--DeepONet is the cheapest.

\emph{Online prediction.}
Given a trained model, RB--DeepONet and POD--DeepONet evaluate only the
branch network $c_\theta(\mathbf k)\in\mathbb R^N$,
so the network evaluation cost scales as $\mathcal O(N)$ (or
$\mathcal O(N + r_g + r_f)$ in Case~II).
FEONet, in contrast, outputs a full-order vector $w_\theta(\mathbf k)\in\mathbb R^{N_0}$,
and its online cost scales linearly with $N_0$. Reconstructing the full field
$w_\theta(\mathbf k)=\Psi\,c_\theta(\mathbf k)$ adds a common
$\mathcal O(N_0 N)$ matrix–vector multiplication to both RB--DeepONet and
POD--DeepONet, and is therefore omitted from the network-cost comparison
in Table~\ref{tab:cases-feonet}.

\begin{figure}[htbp]
\centering
\includegraphics[width=.98\textwidth]{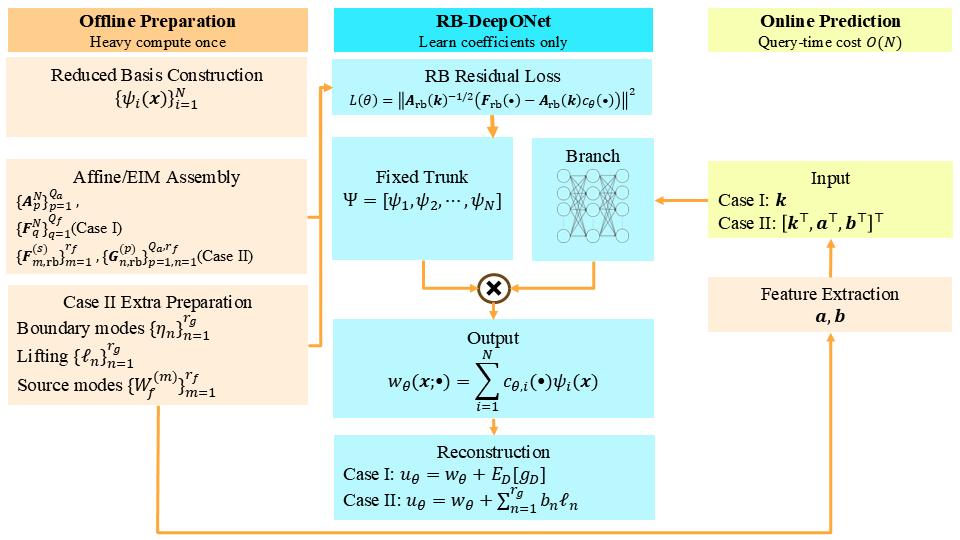}
\caption{RB--DeepONet workflow: offline preparation (Greedy RB basis,
affine precomputation, and boundary/source modes), residual-based training with a fixed
trunk and a branch predicting $\mathbf c_\theta$, and online prediction with inputs
$\mathbf k$ (Case~I) or $[\mathbf k,\mathbf a,\mathbf b]$ (Case~II). In the middle column, $\bullet=k$ (Case I), $\bullet=k_{\mathrm{aug}}$ (Case II).}
\label{fig:RB--DeepONet workflow}
\end{figure}

\begin{table}[htbp]
  \centering
  \small   
  \setlength{\tabcolsep}{5pt}
  \caption{Comparison of RB--DeepONet, POD--DeepONet, and FEONet.}
  \label{tab:cases-feonet}
  \begin{threeparttable}
  \begin{tabular}{lcccc}
    \toprule
    & \multicolumn{2}{c}{RB--DeepONet} & POD--DeepONet & FEONet \\
    \cmidrule(lr){2-3}
    Feature & Case~I & Case~II & \cite{lu2022comprehensive} & \cite{lee2025finite} \\
    \midrule
    Trunk
      & RB
      & RB
      & POD
      & FE \\[2pt]
    Input
      & $\mathbf k$
      & $\mathbf k_{\rm aug}$
      & task dep.\tnote{a}
      & task dep.\tnote{a} \\[2pt]
    Output
      & $\mathbf c_\theta\!\in\!\mathbb R^N$
      & $\mathbf c_\theta\!\in\!\mathbb R^N$
      & $\mathbf c_\theta\!\in\!\mathbb R^N$
      & $\mathbf u_\theta\!\in\!\mathbb R^{N_0}$ \\[2pt]
    Training
      & unsup.
      & unsup.
      & sup.
      & unsup. \\[2pt]
    Dirichlet BCs
      & lifting
      & modes\tnote{b}
      & lifting
      & nodal \\[2pt]
  Offline PDE solves
    & $N$ & $N$ & $N_k$ & 0 \\[2pt]
  Training cost  (per sample)
    & net($N$)\tnote{c} + RB\tnote{d}
  & net($N$) + RB
  & net($N$) + full($N_0 N$)\tnote{e}
  & net($N_0$) + FE\tnote{g} \\[2pt]%
  Online prediction cost
    & $\mathcal O(N)$
    & $\mathcal O(N+r_g +r_f)$
    & $\mathcal O(N)$
    & $\mathcal O(N_0)$ \\
  \bottomrule
\end{tabular}
\begin{tablenotes}
  \footnotesize
  \item[a] Task-dependent subset of $\{\mathbf k,f,g_D,h_N,r_R\}$.
  \item[b] Approximate via boundary modes (with projection error).
  \item[c] net($M$): cost of one forward/backward pass of the branch network 
         with $M$-dimensional output.
\item[d] RB: reduced residual evaluation in $\mathbb R^N$ (order $N^2$) with additional $\mathcal O(N r_g + N r_f)$ work in Case~II.
\item[e] full($N_0 N$): full-field reconstruction and supervised loss on 
         $N_0$ degrees of freedom.
\item[f] FE: full-order FE residual evaluation.
\end{tablenotes}
  \end{threeparttable}
\end{table}

\begin{remark}[Generality of RB--DeepONet]\label{remark:Generality}
For clarity, we develop the method in the prototypical second--order elliptic setting \eqref{eq:strong}. 
However, the algorithm only relies on three ingredients:  
(A1) a variational formulation (linear or semilinear) that is uniformly well-posed ;  
(A2) an offline reduced trial space $\mathrm{span}\{\psi_i\}_{i=1}^N\subset V$ together with the RB residual operator $\mathbf A_{\rm rb}(\mathbf k)$ obtained by testing with a compatible space; and  
(A3) an affine or EIM/DEIM surrogate enabling offline–online separation.  
Whenever (A1)–(A3) hold, the trunk fixing, label–free residual training, and the boundary/source mode treatment carry over verbatim.
\end{remark}

\section{Convergence analysis of RB--DeepONet}\label{sec:Convergence analysis of RB--DeepONet}

We now analyze the convergence of the RB--DeepONet prediction $w_{\theta}(\mathbf{k})=\Psi\,\mathbf{c}_\theta(\mathbf{k})$ to the reduced-basis Galerkin solution
$w_{\rm rb}(\mathbf{k})=\Psi\,\mathbf{c}_N(\mathbf{k})$, with the finite element mesh $\mathcal T_h$ and
the RB space $V_{\rm rb}=\mathrm{span}\{\psi_i\}_{i=1}^N\subset V_h$ fixed. 
To simplify the discussion, we focus on Case I, where the PDEs are fully parameterized. For Case II, the process is similar: we only need to replace $\mathbf{k}$ with $\mathbf{k}_{\rm aug}$ and $\mathbf{F}_{\rm rb}(\mathbf{k})$ with $\mathbf{F}_{\rm rb}(\mathbf{k}_{\rm aug})$, and the only additional modeling error is the boundary and source projections, which is quantified in Section \ref{sec:bd-src-construct}.
Note that we regard the reduced basis $\{\psi_i\}_{i=1}^N$ as predetermined by Algorithm \ref{alg:greedy} on a sufficiently fine finite element mesh $\mathcal{T}_h$ such that the standard reduced basis method already provides an approximation within the desired accuracy. Hence, the error associated with RB space construction itself is not the subject of the following analysis.

Training adopts the RB variational residual \eqref{eq: rb residual} as the foundation for residual minimization. Accordingly, the loss function \eqref{eq:forward-loss} measures the departure of the residual from zero.
Our convergence analysis therefore depends on two key factors:
\begin{enumerate}
    \item the expressive power of the branch network, which is influenced by its width and depth;
    \item the number of parameter samples $N_s$ used to calculate the loss function \eqref{eq:forward-loss}.
\end{enumerate}

This separation isolates the network approximation from the RB discretization. We split the total error as
\begin{equation}\label{eq:error-split}
    w-\widehat{w}_{n,N_s}
    \;=\; \underbrace{(w-w_{\rm rb})}_{\text{RB discretization}} \;+\; \underbrace{(w_{\rm rb}-\widehat w_{n})}_{\text{approximation}} \;+\; \underbrace{(\widehat w_{n}-\widehat{w}_{n,N_s})}_{\text{generalization}},
\end{equation}
Here, the first term $(w-w_{\rm rb})$ is the (fixed) RB discretization error that depends only on the Greedy construction and decays rapidly given the exponential decay of the Kolmogorov width for many elliptic PDE
families \cite{quarteroni2015reduced,lucia2004reduced,maday2006reduced}. The second term $(w_{\rm rb}-\widehat w_{n})$ is the network
\emph{approximation} error for a class of neural network with parameter size $n$ (e.g., width and depth), and
$(\widehat w_{n}-\widehat{w}_{n,N_s})$ is the \emph{generalization} error, i.e., the difference between the idealized neural network solution $\widehat w_{n}$ obtained from exact residual minimization and the practical solution $\widehat{w}_{n,N_s}$ obtained using a finite number $N_s$ of training samples. 
In the following, we focus on the latter two terms.

\begin{proposition}\label{prop:rb-stability}
Assume Assumption \ref{ass:wellposed} holds. Furthermore, we assume the maps
$\mathbf{k}\mapsto a(\cdot,\cdot;\mathbf{k})$ and $\mathbf{k}\mapsto \ell(\cdot;\mathbf{k})$ are continuous on $\mathcal{D}$.
Then, for a fixed RB basis $\Psi=[\psi_1,\ldots,\psi_N]\subset V_0$ as in Section~\ref{sec:rb-construction} and $\mathbf A_{\rm rb}(\mathbf{k})$, $\mathbf F_{\rm rb}(\mathbf{k})$ as in \eqref{eq:rb-operators},
there exist constants $0<\sigma_*\le\sigma^*<\infty$ and $C_F>0$, independent of $\mathbf{k}$, such that
\[
  \sigma_* \|\mathbf z\|_2^2 \;\le\; \mathbf z^\top \mathbf A_{\rm rb}(\mathbf{k})\,\mathbf z \;\le\; \sigma^* \|\mathbf z\|_2^2,
  \qquad
  \|\mathbf F_{\rm rb}(\mathbf{k})\| \;\le\; C_F,  \quad \forall\,\mathbf z\in\mathbb{R}^N, \quad\forall\,\mathbf{k}\in\mathcal{D} .
\]
Consequently, the RB system $\mathbf A_{\rm rb}(\mathbf{k})\,\mathbf c_N(\mathbf{k})=\mathbf F_{\rm rb}(\mathbf{k})$ is uniformly well posed for all
$\mathbf{k}\in\mathcal{D}$, and the coefficient map $\mathbf{k}\mapsto \mathbf c_N(\mathbf{k})$ is continuous on $\mathcal{D}$.
\end{proposition}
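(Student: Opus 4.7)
The plan is to reduce everything to the continuous variational setting: expand coefficient vectors in the RB basis, apply the uniform bounds from Assumption~\ref{ass:wellposed}, and then convert $V$-norm estimates on the finite-dimensional span into equivalent $\ell^2$-estimates via the Gram matrix, which collapses to the identity under $V$-orthonormalization in Algorithm~\ref{alg:greedy}.

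First, for arbitrary $\mathbf z\in\mathbb R^N$ I would set $v_{\mathbf z}:=\sum_{i=1}^N z_i\psi_i\in V_0$ and recognize, from the definition $[\mathbf A_{\rm rb}(\mathbf k)]_{ij}=a(\psi_j,\psi_i;\mathbf k)$, that
\[
  \mathbf z^\top \mathbf A_{\rm rb}(\mathbf k)\,\mathbf z \;=\; a(v_{\mathbf z},v_{\mathbf z};\mathbf k),
  \qquad
  \mathbf z^\top \mathbf F_{\rm rb}(\mathbf k) \;=\; \mathfrak F(\mathbf k)[v_{\mathbf z}].
\]
Applying the continuity bound \eqref{eq:cont}, coercivity \eqref{eq:coerc}, and the load bound \eqref{eq:rhs} sandwiches the quadratic form between $\alpha_{\mathrm{LB}}\|v_{\mathbf z}\|_V^2$ and $M\|v_{\mathbf z}\|_V^2$, and yields $|\mathbf z^\top \mathbf F_{\rm rb}(\mathbf k)|\le C_F\|v_{\mathbf z}\|_V$, all with constants independent of $\mathbf k$.

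Second, I would pass from $\|v_{\mathbf z}\|_V$ to $\|\mathbf z\|_2$ via the RB Gram matrix $\mathbf G_{ij}:=(\psi_i,\psi_j)_V$. Since $\{\psi_i\}$ is $V$-orthonormalized in Algorithm~\ref{alg:greedy}, $\mathbf G=I$ and $\|v_{\mathbf z}\|_V^2=\|\mathbf z\|_2^2$, giving $\sigma_*=\alpha_{\mathrm{LB}}$, $\sigma^*=M$, and $\|\mathbf F_{\rm rb}(\mathbf k)\|_2\le C_F$ by duality (taking $\mathbf z=\mathbf F_{\rm rb}(\mathbf k)/\|\mathbf F_{\rm rb}(\mathbf k)\|_2$). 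Even without orthonormalization, one could absorb the extreme eigenvalues of $\mathbf G$ into the constants, so this step is robust.

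Third, the reduced system is uniformly well posed: $\mathbf A_{\rm rb}(\mathbf k)$ is symmetric positive definite with spectrum in $[\sigma_*,\sigma^*]$, so $\mathbf c_N(\mathbf k)=\mathbf A_{\rm rb}(\mathbf k)^{-1}\mathbf F_{\rm rb}(\mathbf k)$ exists uniquely and satisfies $\|\mathbf c_N(\mathbf k)\|_2\le C_F/\alpha_{\mathrm{LB}}$. For continuity, the added hypothesis that $\mathbf k\mapsto a(\cdot,\cdot;\mathbf k)$ and $\mathbf k\mapsto \ell(\cdot;\mathbf k)$ are continuous immediately gives continuity of $\mathbf k\mapsto \mathbf A_{\rm rb}(\mathbf k)$ and $\mathbf k\mapsto \mathbf F_{\rm rb}(\mathbf k)$ in any matrix/vector norm. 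Because $\mathbf A_{\rm rb}(\mathbf k)$ remains uniformly bounded away from singularity, matrix inversion is Lipschitz on the relevant compact set, so $\mathbf k\mapsto \mathbf c_N(\mathbf k)$ is continuous as a composition. There is no real obstacle here; the only point requiring attention is asserting $\mathbf k$-independence of all constants, which is precisely what the uniform wording of Assumption~\ref{ass:wellposed} delivers.
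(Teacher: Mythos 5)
Your proof is correct: the paper states this proposition without an explicit proof, and your argument (identify $\mathbf z^\top\mathbf A_{\rm rb}\mathbf z$ with $a(v_{\mathbf z},v_{\mathbf z};\mathbf k)$, use the uniform continuity/coercivity/load bounds of Assumption~\ref{ass:wellposed}, pass to $\ell^2$ via the $V$-orthonormal Gram matrix, then combine uniform positive definiteness with continuity of $\mathbf k\mapsto\mathbf A_{\rm rb}(\mathbf k),\mathbf F_{\rm rb}(\mathbf k)$ and Lipschitz inversion) is exactly the standard RB stability argument the paper implicitly relies on. The only cosmetic caveat is that $\mathbf F_{\rm rb}$ is assembled with the discrete lifting $\mathbf E_h$ rather than $E_D$, but on the fixed mesh the same uniform bound $|\mathfrak F(\mathbf k)[v]|\le C_F\|v\|_V$ applies, so nothing is lost.
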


Recall that we work on the compact parameter set $\mathcal D\subset\mathbb R^p$, endowed with the uniform probability measure $\rho$, i.e.,
$d\rho(\mathbf k)=|\mathcal D|^{-1}\,d\mu(\mathbf k)$ where $\mu$ is the Lebesgue measure and $|\mathcal D|$ the volume of $\mathcal D$.
Before proceeding with the analysis, we introduce the spaces and norms used below.
We write $L^2(\Omega)$ for the space of square-integrable functions on $\Omega$ with norm
\[
\|v\|_{L^2(\Omega)}^2:=\int_\Omega |v(x)|^2\,dx.
\]
For a coefficient map $\mathbf c:\mathcal D\to\mathbb R^N$, endowed pointwise with the Euclidean norm $\|\cdot\|_2$ on $\mathbb R^N$, we define
\[
\|\mathbf c\|_{L^2(\mathcal D;\rho)}^2
:=\int_{\mathcal D}\|\mathbf c(\mathbf k)\|_2^2\,d\rho(\mathbf k).
\]
We write $C(\mathcal D;\mathbb R^N)$ for the Banach space of continuous maps
$g:\mathcal D\to\mathbb R^N$ endowed with the sup-norm
\[
\|g\|_{C(\mathcal D)}:=\sup_{\mathbf k\in\mathcal D}\|g(\mathbf k)\|_2.
\]
For a function-valued map $w:\mathcal D\to L^2(\Omega)$ that is Bochner measurable, we define 
\begin{equation}\label{eq: Bochner-def}
\|w\|_{L^2(\mathcal D;L^2(\Omega))}^2
:=\int_{\mathcal D}\|w(\mathbf{x};\mathbf k)\|_{L^2(\Omega)}^2\,d\rho(\mathbf k)
=\int_{\mathcal D}\int_{\Omega} |w(\mathbf{x};\mathbf k)|^2\,dx\,d\rho(\mathbf k).
\end{equation}
Let $\{\psi_i\}_{i=1}^N\subset L^2(\Omega)$ be the fixed RB basis and define the linear map
$\mathcal T:\mathbb R^N\to L^2(\Omega)$ by
$\mathcal T(\boldsymbol\alpha):=\sum_{i=1}^N \alpha_i\psi_i$.
By finite dimensionality,
\[
\|\mathcal T\|
\;\le\; \Big(\sum_{i=1}^N \|\psi_i\|_{L^2(\Omega)}^2\Big)^{1/2}
=: C_\psi <\infty.
\]
Consequently, for any $\mathbf c\in L^2(\mathcal D;\mathbb R^N)$, the map
$w(\cdot;\mathbf k):=\mathcal T(\mathbf c(\mathbf k))=\sum_{i=1}^N c_i(\mathbf k)\psi_i$ belongs to
$L^2(\mathcal D;L^2(\Omega))$ and satisfies the lifting estimate
\begin{equation}\label{eq: lifting-bound}
\|w\|_{L^2(\mathcal D;L^2(\Omega))}
\;\le\; \|\mathcal T\|\,\|\mathbf c\|_{L^2(\mathcal D;\rho)}
\;\le\; C_\psi\,\|\mathbf c\|_{L^2(\mathcal D;\rho)}.
\end{equation}
Given a function class $\mathcal F\subset \{f:\mathcal D\to\mathbb R\}$ and a probability measure $P$ on $\mathcal D$, we define
\[
\|f-g\|_{L_2(P)}\ :=\ \Big(\int_{\mathcal D}|f(\mathbf k)-g(\mathbf k)|^2\,dP(\mathbf k)\Big)^{1/2}.
\]
For a sample $S=(\mathbf k_1,\ldots,\mathbf k_{N_s})$, the corresponding empirical norm is
\[
\|f-g\|_{L_2(P_S)}\ :=\ \Big(\tfrac1{N_s}\sum_{i=1}^{N_s}|f(\mathbf k_i)-g(\mathbf k_i)|^2\Big)^{1/2}.
\]

Given any coefficient map $\mathbf{c}:\mathcal{D}\to\mathbb{R}^N$, we now define the population loss
\begin{equation}\label{eq:population-loss}
    \mathcal L^{\mathrm{pop}}(c)
:= \int_{\mathcal D} \big\|\mathbf A_{\mathrm{rb}}(\mathbf k)\,\mathbf c(\mathbf k)
       - \mathbf F_{\mathrm{rb}}(\mathbf k)\big\|_2^2 \, d\rho(\mathbf k),
\end{equation}
which represents the expected residual error with respect to the uniform distribution on $\mathcal D$. 
In practice, we approximate the population loss \eqref{eq:population-loss} by Monte--Carlo integration using
i.i.d.\ samples $\{\mathbf k_j\}_{j=1}^{N_s}\stackrel{\text{i.i.d.}}{\sim}\rho$.
This yields the empirical loss function
\begin{equation}\label{eq: loss N_k}
\begin{aligned}
    \mathcal{L}^{N_s}(\mathbf{c})
    := \frac{1}{N_s}\sum_{j=1}^{N_s}\left\|\mathbf{A}_{\rm rb}(\mathbf{k}_j)\mathbf{c}(\mathbf{k}_j)-\mathbf{F}_{\rm rb}(\mathbf{k}_j)\right\|_2^2.
\end{aligned}    
\end{equation}
Note that by the uniform spectral bounds on $\mathbf{A}_{\rm rb}(\mathbf{k})$, the preconditioned loss \eqref{eq:forward-loss} and the unweighted residual loss \eqref{eq: loss N_k} are uniformly equivalent; in particular, they share the same population minimizer.

Then, we have the following proposition.

\begin{proposition}\label{prop:1}
Under Proposition \ref{prop:rb-stability}, the unique minimizer of \eqref{eq:population-loss} over
$C(\mathcal D;\mathbb R^N)$ is the continuous RB coefficient map $ \mathbf{c}_N(\mathbf{k})=(\mathbf{A}_{\rm rb}(\mathbf{k}))^{-1}\mathbf{F}_{\rm rb}(\mathbf{k})$ for $\mathbf k\in\mathcal D$, i.e.,
\begin{equation}\label{eq:minimizer-pop}
    \mathbf{c}_N=\arg\min_{\mathbf{c}\in C(\mathcal{D};\mathbb{R}^N)}\mathcal{L}^{\rm pop}(\mathbf{c}).
\end{equation}
Moreover, for any $\mathbf c\in C(\mathcal D;\mathbb R^N)$,
\begin{equation}\label{eq:pop-strong}
    \mathcal L^{\rm pop}(\mathbf c)
    \;\ge\; \sigma_*^2\,\|\mathbf c-\mathbf c_N\|_{L^2(\mathcal D;\rho)}^{2}.
\end{equation}
\end{proposition}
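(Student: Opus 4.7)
The plan is to reduce the statement to a pointwise spectral argument using the uniform lower bound on $\mathbf A_{\rm rb}(\mathbf k)$ from Proposition~\ref{prop:rb-stability}, and then integrate against $\rho$ to obtain the $L^2(\mathcal D;\rho)$-coercivity bound \eqref{eq:pop-strong}. Uniqueness of the minimizer and the identity $\mathbf c_N=\arg\min \mathcal L^{\rm pop}$ will both follow as direct consequences of that inequality, once we verify that $\mathbf c_N\in C(\mathcal D;\mathbb R^N)$ so that it is actually admissible.

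First, I would invoke Proposition~\ref{prop:rb-stability} to conclude that, for every $\mathbf k\in\mathcal D$, the matrix $\mathbf A_{\rm rb}(\mathbf k)$ is symmetric positive definite with spectrum in $[\sigma_*,\sigma^*]$, hence invertible, and that the map $\mathbf k\mapsto \mathbf c_N(\mathbf k):=\mathbf A_{\rm rb}(\mathbf k)^{-1}\mathbf F_{\rm rb}(\mathbf k)$ is continuous on the compact set $\mathcal D$. This places $\mathbf c_N$ in $C(\mathcal D;\mathbb R^N)$ and makes it an admissible candidate. By construction, it satisfies $\mathbf A_{\rm rb}(\mathbf k)\mathbf c_N(\mathbf k)-\mathbf F_{\rm rb}(\mathbf k)=\mathbf 0$ pointwise, so $\mathcal L^{\rm pop}(\mathbf c_N)=0$.

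Next, for any $\mathbf c\in C(\mathcal D;\mathbb R^N)$, I would write the algebraic identity
\begin{equation*}
\mathbf A_{\rm rb}(\mathbf k)\,\mathbf c(\mathbf k) - \mathbf F_{\rm rb}(\mathbf k)
= \mathbf A_{\rm rb}(\mathbf k)\bigl(\mathbf c(\mathbf k)-\mathbf c_N(\mathbf k)\bigr),
\end{equation*}
pointwise in $\mathbf k\in\mathcal D$. Taking squared Euclidean norms and applying the lower bound $\mathbf z^\top \mathbf A_{\rm rb}(\mathbf k)^2 \mathbf z \ge \sigma_*^2\|\mathbf z\|_2^2$ (which follows from the spectral bound on $\mathbf A_{\rm rb}(\mathbf k)$ since it is symmetric with eigenvalues $\ge\sigma_*$) gives
\begin{equation*}
\bigl\|\mathbf A_{\rm rb}(\mathbf k)\mathbf c(\mathbf k)-\mathbf F_{\rm rb}(\mathbf k)\bigr\|_2^2
\ \ge\ \sigma_*^2\,\bigl\|\mathbf c(\mathbf k)-\mathbf c_N(\mathbf k)\bigr\|_2^2.
\end{equation*}
Integrating this inequality against $d\rho(\mathbf k)$ yields \eqref{eq:pop-strong} directly.

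Finally, \eqref{eq:pop-strong} implies $\mathcal L^{\rm pop}(\mathbf c)\ge 0$ with equality iff $\mathbf c(\mathbf k)=\mathbf c_N(\mathbf k)$ for $\rho$-a.e.\ $\mathbf k\in\mathcal D$; since both $\mathbf c$ and $\mathbf c_N$ are continuous on $\mathcal D$ and $\rho$ has full support on $\mathcal D$, equality forces $\mathbf c=\mathbf c_N$ pointwise, which gives the uniqueness asserted in \eqref{eq:minimizer-pop}. There is no real obstacle: the argument is essentially a weighted least-squares identity combined with the uniform spectral bounds established in Proposition~\ref{prop:rb-stability}. The only slightly delicate point is ensuring that $\mathbf c_N$ itself belongs to the admissible class $C(\mathcal D;\mathbb R^N)$, but this is already guaranteed by the continuity conclusion of Proposition~\ref{prop:rb-stability} together with compactness of $\mathcal D$ and uniform invertibility of $\mathbf A_{\rm rb}(\cdot)$.
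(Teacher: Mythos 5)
Your proposal is correct and follows essentially the same route as the paper: a pointwise residual identity $\mathbf A_{\rm rb}(\mathbf k)\mathbf c(\mathbf k)-\mathbf F_{\rm rb}(\mathbf k)=\mathbf A_{\rm rb}(\mathbf k)\bigl(\mathbf c(\mathbf k)-\mathbf c_N(\mathbf k)\bigr)$, the uniform spectral lower bound $\sigma_*$, and integration against $\rho$, with continuity of $\mathbf c_N$ supplied by Proposition~\ref{prop:rb-stability}. Your uniqueness step via full support of $\rho$ is a slightly more explicit version of the paper's pointwise strict-convexity argument, but the substance is identical.
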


\begin{proof}
Fix $\mathbf k\in\mathcal D$ and define
\[
f_{\mathbf k}(\mathbf z):=\|\mathbf{A}_{\rm rb}(\mathbf{k})\mathbf z-\mathbf{F}_{\rm rb}(\mathbf{k})\|_2^2,
\]
where $\mathbf z\in\mathbb R^N$.
Since $\mathbf{A}_{\rm rb}(\mathbf{k})$ is invertible, $f_{\mathbf k}$ is a strictly convex quadratic with unique minimizer
$(\mathbf{A}_{\rm rb}(\mathbf{k}))^{-1}\mathbf{F}_{\rm rb}(\mathbf{k})$, which proves \eqref{eq:minimizer-pop}.
For any $\mathbf c\in C(\mathcal D;\mathbb R^N)$, a pointwise identity holds:
\[
f_{\mathbf k}\big(\mathbf c(\mathbf k)\big)-f_{\mathbf k}\big(\mathbf c_N(\mathbf k)\big)
=\big\|\mathbf{A}_{\rm rb}(\mathbf{k})\big(\mathbf c(\mathbf k)-\mathbf c_N(\mathbf k)\big)\big\|_2^2.
\]
Integrating over $\mathcal D$ yields
\begin{equation}
    \begin{aligned}
        \mathcal L^{\rm pop}(\mathbf c)-\mathcal L^{\rm pop}(\mathbf c_N)
=&\big\|\mathbf{A}_{\rm rb}(\mathbf{k})\big(\mathbf c-\mathbf c_N\big)\big\|_{L^2(\mathcal D;\rho)}^{2}\\
\geq& \sigma_*^2\|\mathbf c-\mathbf c_N\|_{L^2(\mathcal D;\rho)}^2,
    \end{aligned}
\end{equation}
which proves \eqref{eq:pop-strong}.
\end{proof}

Let $\{\mathcal N_n\}_{n\in\mathbb N}\subset C(\mathcal D;\mathbb R^N)$ be a nested family of vector-valued neural-network hypothesis classes. 
For each $n$, fix a network architecture of size $n$ (e.g., increasing width/depth) and let $\Theta_n\subset\mathbb R^{P_n}$ denote its parameter space. 
Define
\[
  \mathcal N_n \;:=\; \{\, \mathbf c_\theta : \mathcal D \to \mathbb R^N \;|\; \theta\in\Theta_n \,\}.
\]
To quantify the expressive capacity of such network families, we first recall the notion of pseudo-dimension, and then impose the structural assumptions summarized in Assumption \ref{ass:nn} below.

\begin{definition}[Pseudo-dimension {\cite{anthony2009neural,bartlett2019nearly,shalev2014understanding}}]
Let $\mathcal F$ be a class of real-valued functions on a set $\mathcal X$.
The pseudo-dimension $Pdim(\mathcal F)$ is the largest integer $m$ for which there exist points
$\{x_i\}_{i=1}^m\subset\mathcal X$ and thresholds $\{y_i\}_{i=1}^m\subset\mathbb R$ such that for every
$\mathbf b=(b_1,\ldots,b_m)\in\{0,1\}^m$ there exists $f\in\mathcal F$ with
\[
\mathbf 1\{f(x_i)\ge y_i\}=b_i,\qquad i=1,\ldots,m.
\]
\end{definition}

The pseudo-dimension provides a measure of the complexity of neural networks. 
In particular, it is directly connected to uniform convergence properties and generalization error bounds (see, e.g., \cite{anthony2009neural,bartlett2019nearly}).
Then, we assume the following standard NN structural conditions.

\begin{assumption}\label{ass:nn}
The classes $\{\mathcal N_n\}_{n\in\mathbb N}$ satisfy:
\begin{enumerate}
    \item \textbf{Nestedness:} $\mathcal N_n\subset\mathcal N_{n+1}$ for all $n\in\mathbb N$.
    \item \textbf{Uniform boundedness:} There exists $M_\infty>0$ such that  
    \[
    sup_{g\in\cup_n\mathcal N_n}\|g\|_{C(\mathcal{D})}\le M_\infty.
    \]
    \item \textbf{Density:} The union $\bigcup_n\mathcal N_n$ is dense in the closed ball $B_{M_\infty}:=\{g\in C(\mathcal{D};\mathbb{R}^N):\|g\|_{\mathcal{C}(\mathcal{D})}\leq M_\infty\}$, i.e., for every $g^*\in C(\mathcal{D};\mathbb R^N)$ with $\|g^*\|_{\mathcal{C}(\mathcal{D})}\leq M_\infty$,
    \[
 \lim_{n\to\infty}\inf_{g\in\mathcal{N}_n}\|g-g^*\|_{C(\mathcal{D})}=0.
    \]
    \item \textbf{Finite pseudo-dimension:} For each $n$, $p_n:=Pdim(\mathcal N_n)<\infty$.
\end{enumerate}
\end{assumption}

The above assumption follows \cite{ko2022convergence,lee2025finite}, which provides the minimal analytic framework for establishing both approximation and generalization properties of the neural network. 
In particular, conditions (1)–(3) ensure the representational capability of the networks through nestedness, uniform boundedness, and density, while condition (4) controls their statistical complexity via the pseudo-dimension.
Together with Proposition \ref{prop:rb-stability}, these requirements guarantee that the network classes $\mathcal N_n$ enjoy the universal approximation property \cite[Theorem~2.2]{ko2022convergence}:

\begin{theorem}\label{thm:universal}
Under Proposition \ref{prop:rb-stability} and Assumption \ref{ass:nn}, we have
\begin{equation}
\lim_{n\to\infty}\ \inf_{\mathbf c_n\in\mathcal N_n}\ \|\mathbf c_n-\mathbf c_N\|_{C(\mathcal D)}\;=\;0.
\end{equation}
\end{theorem}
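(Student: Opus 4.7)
The plan is to reduce the theorem to a direct application of the density hypothesis in Assumption~\ref{ass:nn}(3) applied to the RB coefficient map $\mathbf c_N$. For this I need to verify two things about $\mathbf c_N$: it is continuous on $\mathcal D$, and it lies in (or can be realized inside) the closed ball $B_{M_\infty}\subset C(\mathcal D;\mathbb R^N)$ on which the neural-network family is dense. Everything else is then a one-line deduction.

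First, I would record that $\mathbf c_N\in C(\mathcal D;\mathbb R^N)$. By Proposition~\ref{prop:rb-stability}, the maps $\mathbf k\mapsto \mathbf A_{\rm rb}(\mathbf k)$ and $\mathbf k\mapsto \mathbf F_{\rm rb}(\mathbf k)$ are continuous on the compact set $\mathcal D$, and $\mathbf A_{\rm rb}(\mathbf k)$ is uniformly positive definite with smallest eigenvalue bounded below by $\sigma_*>0$. Since matrix inversion is continuous on the open set of invertible matrices, $\mathbf k\mapsto \mathbf A_{\rm rb}(\mathbf k)^{-1}$ is continuous, and so is $\mathbf c_N(\mathbf k)=\mathbf A_{\rm rb}(\mathbf k)^{-1}\mathbf F_{\rm rb}(\mathbf k)$. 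Second, I would derive the uniform bound $\|\mathbf c_N(\mathbf k)\|_2\le \sigma_*^{-1}\|\mathbf F_{\rm rb}(\mathbf k)\|_2\le \sigma_*^{-1} C_F$ for all $\mathbf k\in\mathcal D$, so $\|\mathbf c_N\|_{C(\mathcal D)}\le \sigma_*^{-1}C_F=:R_N$.

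Third, I would reconcile this bound with the constant $M_\infty$ in Assumption~\ref{ass:nn}(2). The cleanest route is to enlarge $M_\infty$ if necessary so that $M_\infty\ge R_N$ (this is a structural choice on the network family, not a property of the PDE, and standard universal-approximation constructions with unbounded output layers always permit it). Under this compatibility, $\mathbf c_N\in B_{M_\infty}$, and Assumption~\ref{ass:nn}(3) applied to $g^*=\mathbf c_N$ yields directly
\begin{equation*}
\lim_{n\to\infty}\ \inf_{\mathbf c_n\in\mathcal N_n}\ \|\mathbf c_n-\mathbf c_N\|_{C(\mathcal D)}\;=\;0,
\end{equation*}
which is the claim. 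Monotonicity of the infimum in $n$ (from the nestedness condition Assumption~\ref{ass:nn}(1)) ensures that the limit is well-defined as a decreasing sequence.

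The only delicate point is the third step: ensuring $\mathbf c_N\in B_{M_\infty}$ rather than merely continuous. The quantitative bound $R_N=\sigma_*^{-1}C_F$ from Proposition~\ref{prop:rb-stability} makes this a mild compatibility condition on the network family, but it does need to be stated explicitly (or absorbed into Assumption~\ref{ass:nn} by allowing $M_\infty$ to be chosen a posteriori as a function of the RB stability constants). Once this is in place, the result is a direct corollary of the assumed density, with no further analytic estimates required.
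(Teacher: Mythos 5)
Your proposal is correct, and it follows the same route the paper takes: the paper does not write out a proof but simply invokes the density/approximation framework of Ko et al.\ (its cited Theorem~2.2), which amounts to exactly your argument --- continuity of $\mathbf c_N(\mathbf k)=\mathbf A_{\rm rb}(\mathbf k)^{-1}\mathbf F_{\rm rb}(\mathbf k)$ from Proposition~\ref{prop:rb-stability}, a uniform bound, and then the density hypothesis in Assumption~\ref{ass:nn}(3). Your version is in fact more careful than the paper on one point: since density is only assumed on the ball $B_{M_\infty}$, the conclusion requires $\|\mathbf c_N\|_{C(\mathcal D)}\le M_\infty$, and your explicit bound $\|\mathbf c_N\|_{C(\mathcal D)}\le \sigma_*^{-1}C_F$ together with the stipulation that $M_\infty$ be chosen at least this large makes precise a compatibility condition that the paper leaves implicit. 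This is a harmless but genuine gap in the paper's statement of Assumption~\ref{ass:nn} that your write-up correctly identifies and repairs; no other step differs in substance.
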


Given a neural-network hypothesis class $\mathcal N_n\subset C(\mathcal D;\mathbb R^N)$, we define the population risk minimizer and the associated RB approximation
\begin{equation}\label{eq:hn-pop}
    \widehat{\mathbf c}_{n}\;=\;\operatorname*{arg\,min}_{\mathbf c\in\mathcal N_n}\ \mathcal{L}^{\rm pop}(\mathbf{c}),\qquad \widehat w_{n}(\mathbf x;\mathbf k)\;=\;\sum_{i=1}^N \widehat{\mathbf c}_{n,i}(\mathbf k)\,\psi_i(\mathbf x).
\end{equation}
Here $\widehat{\mathbf{c}}_{n,i}(\mathbf{k})$ is the $i$-th component of $\widehat{\mathbf{c}}_{n}(\mathbf{k})$.
The discrete residual minimization problem on the network class \(\mathcal N_n\) and the corresponding RB--DeepONet prediction is
\begin{equation}\label{eq: emp-min}
  \widehat{\mathbf c}^{\,N_s}_{n}
  \;:=\;
  \arg\min_{\mathbf c\in\mathcal N_n}\;
  \mathcal L^{N_s}(\mathbf c),\qquad  \widehat{w}_{n,N_s}(\mathbf x;\mathbf k)
  \;:=\; \sum_{i=1}^N \widehat{\mathbf c}^{\,N_s}_{n,i}(\mathbf k)\,\psi_i(\mathbf x).
\end{equation}

\begin{remark}[On existence of minimizers]
Since $\mathcal N_n$ is not assumed to be compact in $C(D;\mathbb R^N)$,
the minima in \eqref{eq:hn-pop}–\eqref{eq: emp-min} need not be attained \emph{a priori}.
Throughout the analysis, we therefore interpret 
$\widehat{\mathbf c}_n$ and $\widehat{\mathbf c}_{n}^{N_s}$ as (possibly approximate) global minimizers,
i.e. elements of $\mathcal N_n$ whose risks are arbitrarily close to the infima
over $\mathcal N_n$. This is standard in learning-theoretic analyses and allows us
to focus on approximation and generalization errors rather than optimization errors.
\end{remark}

Our first convergence result is presented below:

\begin{theorem}\label{thm:convergence1}
Under Proposition \ref{prop:rb-stability} and Assumptions \ref{ass:nn},
\begin{equation}\label{eq:limit1}
\lim_{n\to\infty} \|\widehat{\mathbf{c}}_{n}-\mathbf{c}_{N}\|_{L^2(\mathcal D;\rho)}=0,
\qquad
\lim_{n\to\infty} \|\widehat w_{n}-w_{\rm rb}\|_{L^2(\mathcal D;L^2(\Omega))}=0,
\end{equation}
where $w_{\rm rb}(\cdot;\mathbf k)=\sum_{i=1}^N \mathbf c_{N,i}(\mathbf k)\,\psi_i(\cdot)$.
\end{theorem}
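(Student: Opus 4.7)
The plan is to sandwich the population loss between the strong-convexity lower bound from Proposition~\ref{prop:1} and a spectral upper bound from Proposition~\ref{prop:rb-stability}, and then drive both sides to zero along approximants constructed via Theorem~\ref{thm:universal}. The key observation is that because $\mathbf c_N(\mathbf k)=\mathbf A_{\rm rb}(\mathbf k)^{-1}\mathbf F_{\rm rb}(\mathbf k)$ solves the RB linear system pointwise, $\mathcal L^{\rm pop}(\mathbf c_N)=0$, so the identity inside the proof of Proposition~\ref{prop:1} already yields the clean inequality
\[
  \sigma_*^2\,\|\mathbf c-\mathbf c_N\|_{L^2(\mathcal D;\rho)}^2 \;\le\; \mathcal L^{\rm pop}(\mathbf c),\qquad\forall\,\mathbf c\in C(\mathcal D;\mathbb R^N).
\]
Thus, to obtain the first limit in \eqref{eq:limit1}, it is enough to prove $\mathcal L^{\rm pop}(\widehat{\mathbf c}_n)\to 0$.

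Next, I would construct explicit approximants inside $\mathcal N_n$ and bound their loss from above. Using $\|\mathbf A_{\rm rb}(\mathbf k)\|_2\le\sigma^*$ from Proposition~\ref{prop:rb-stability} and the fact that $\rho$ is a probability measure (so $\|\cdot\|_{L^2(\mathcal D;\rho)}\le\|\cdot\|_{C(\mathcal D)}$), I obtain for any $\mathbf c\in C(\mathcal D;\mathbb R^N)$
\[
  \mathcal L^{\rm pop}(\mathbf c)
  \;\le\; (\sigma^*)^2\,\|\mathbf c-\mathbf c_N\|_{L^2(\mathcal D;\rho)}^2
  \;\le\; (\sigma^*)^2\,\|\mathbf c-\mathbf c_N\|_{C(\mathcal D)}^2 .
\]
Theorem~\ref{thm:universal} supplies a sequence $\tilde{\mathbf c}_n\in\mathcal N_n$ with $\|\tilde{\mathbf c}_n-\mathbf c_N\|_{C(\mathcal D)}\to 0$, so the display above gives $\mathcal L^{\rm pop}(\tilde{\mathbf c}_n)\to 0$. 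Interpreting $\widehat{\mathbf c}_n$ as a (possibly approximate) minimizer of $\mathcal L^{\rm pop}$ over $\mathcal N_n$, as flagged in the preceding remark on existence, we have $\mathcal L^{\rm pop}(\widehat{\mathbf c}_n)\le\mathcal L^{\rm pop}(\tilde{\mathbf c}_n)+\varepsilon_n$ with $\varepsilon_n\downarrow 0$, hence $\mathcal L^{\rm pop}(\widehat{\mathbf c}_n)\to 0$. Combined with the strong convexity bound, this yields $\|\widehat{\mathbf c}_n-\mathbf c_N\|_{L^2(\mathcal D;\rho)}\to 0$, which is the first claim.

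For the function-valued statement, I would then invoke the lifting estimate \eqref{eq: lifting-bound} applied to the difference $\widehat{\mathbf c}_n-\mathbf c_N$, giving
\[
  \|\widehat w_n-w_{\rm rb}\|_{L^2(\mathcal D;L^2(\Omega))}
  \;\le\; C_\psi\,\|\widehat{\mathbf c}_n-\mathbf c_N\|_{L^2(\mathcal D;\rho)}
  \;\longrightarrow\; 0,
\]
which is the second claim. The only delicate point is the standard one highlighted in the existence remark: since $\mathcal N_n$ is not assumed compact, the infimum defining $\widehat{\mathbf c}_n$ need not be attained, so the argument must be phrased in terms of approximate minimizers. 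Beyond this bookkeeping, the proof is essentially the textbook bias-only limit of residual minimization under strong convexity, and I do not expect any substantive obstacle.
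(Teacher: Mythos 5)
Your proposal is correct and follows essentially the same route as the paper: a strong-convexity lower bound from Proposition~\ref{prop:1}, an upper bound on the loss of the best network approximant via the spectral bound $\sigma^*$ on $\mathbf A_{\rm rb}(\mathbf k)$ and the domination of the $L^2(\mathcal D;\rho)$-norm by the sup-norm, Theorem~\ref{thm:universal} to send that bound to zero, and the lifting estimate \eqref{eq: lifting-bound} for the function-valued limit. The only cosmetic difference is that the paper chains these steps into the single inequality $\|\widehat{\mathbf c}_n-\mathbf c_N\|_{L^2(\mathcal D;\rho)}^2\le(\sigma^*/\sigma_*)^2\inf_{\mathbf c\in\mathcal N_n}\|\mathbf c-\mathbf c_N\|_{L^2(\mathcal D;\rho)}^2$, while you handle the approximate-minimizer issue explicitly (the paper delegates it to a remark).
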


\begin{proof}
By Proposition~\ref{prop:1}, for any $\mathbf c\in C(\mathcal D;\mathbb R^N)$,
\begin{equation}\label{eq:quad-geometry}
\mathcal L^{\rm pop}(\mathbf c)
\ge \sigma_*^2\,\|\mathbf c-\mathbf c_N\|_{L^2(\mathcal D;\rho)}^2.
\end{equation}
Evaluating at $\widehat{\mathbf c}_n$, then using \eqref{eq:hn-pop} and Proposition \ref{prop:rb-stability},
\begin{equation}\label{eq:left-bound}
\begin{aligned}
    \|\widehat{\mathbf c}_{n}-\mathbf c_N\|_{L^2(\mathcal D;\rho)}^2
\ \le&\ \sigma_*^{-2}\,\mathcal L^{\rm pop}(\widehat{\mathbf c}_n)
\ =\ \sigma_*^{-2}\,\inf_{\mathbf c\in\mathcal N_n}\mathcal L^{\rm pop}(\mathbf c)\\
=&\ \sigma_*^{-2}\,\inf_{\mathbf c\in\mathcal N_n}\|\mathbf{A}_{\rm rb}(\mathbf{k})\left(\mathbf c-\mathbf c_N\right)\|_{L^2(\mathcal D;\rho)}^2\\
\le&\ \Big(\frac{\sigma^*}{\sigma_*}\Big)^2
\inf_{\mathbf c\in\mathcal N_n}\|\mathbf c-\mathbf c_N\|_{L^2(\mathcal D;\rho)}^2.
\end{aligned}
\end{equation}
By Theorem \ref{thm:universal}, we obtain
$\|\widehat{\mathbf c}_{n}-\mathbf c_N\|_{L^2(\mathcal D;\rho)}\to0$ as $n\to\infty$.
Then the function error is derived by the boundedness of the linear map $\mathcal T$, which proves \eqref{eq:limit1}.
\end{proof}

In order to control the \emph{generalization} error between the empirical and population loss, i.e., \eqref{eq:population-loss} and \eqref{eq: loss N_k}, we first introduce the notion of Rademacher complexity.

\begin{definition}[Rademacher complexity {\cite{bartlett2002rademacher}}]
Let $\mathcal D\subset\mathbb R^p$ be compact with probability measure $\rho$.
Given a sample $S=(\mathbf k_1,\ldots,\mathbf k_{N_s})\stackrel{\text{i.i.d.}}{\sim}\rho$ and i.i.d.\ Rademacher variables
$\varepsilon_1,\ldots,\varepsilon_{N_s}$ with
$\mathbb P(\varepsilon_i=\pm1)=\tfrac12$, consider the real-valued loss class
\[
g_{\mathbf c}(\mathbf k):=\big\|\mathbf{A}_{\rm rb}(\mathbf{k})\,\mathbf c(\mathbf k)-\mathbf{F}_{\rm rb}(\mathbf{k})\big\|_2^2,
\qquad 
\mathcal G_n:=\{\,g_{\mathbf c}:\ \mathbf c\in\mathcal N_n\,\}.
\]
The \emph{empirical} Rademacher complexity of $\mathcal G_n$ on the fixed sample $S$ is
\[
\widehat R_{S}(\mathcal G_n)
:=\mathbb E_{\varepsilon}\!\left[\ \sup_{g\in\mathcal G_n}\frac1{N_s}\sum_{i=1}^{N_s}\varepsilon_i\,g(\mathbf k_i)\ \right].
\]
The \emph{expected} Rademacher complexity with sample size $N_s$ is
\[
R_{N_s}(\mathcal G_n)
:=\mathbb E_{S}\big[\widehat R_{S}(\mathcal G_n)\big]
=\mathbb E_{S,\varepsilon}\!\left[\ \sup_{g\in\mathcal G_n}\frac1{N_s}\sum_{i=1}^{N_s}\varepsilon_i\,g(\mathbf k_i)\ \right].
\]
\end{definition}
In this setting, the empirical and population losses can be written compactly as
\[
\mathcal{L}^{N_s}(\mathbf{c})=\frac{1}{N_s}\sum_{i=1}^{N_s} g_{\mathbf{c}}(\mathbf{k}_i),
\qquad 
\mathcal{L}^{pop}(\mathbf{c})=\int_{\mathcal{D}} g_{\mathbf{c}}(\mathbf{k})\,d\rho(\mathbf{k}).
\]
By Proposition \ref{prop:rb-stability} and Assumption \ref{ass:nn}, we have
\begin{equation}\label{eq:uniform bound g}
    0\ \le\ g_{\mathbf c}(\mathbf k)\ \le\ \big(\sigma^* M_\infty + C_F\big)^2\;=:\;B,
\end{equation}
for all $g_{\mathbf{c}}\in\mathcal G_n$, $ \mathbf{k}\in\mathcal D$ and $\mathbf c\in\mathcal N_n$.
Hence, $\mathcal G_n\subset[0,B]^{\mathcal D}$.
We have derived the following uniform law:

\begin{lemma}[Uniform convergence {\cite[Theorem 4.10]{wainwright2019high}, \cite[Theorem 26.5]{shalev2014understanding}}]\label{lemma: uniform-law}
For any $\delta\in(0,1)$, with probability at least $1-\delta$ over the draw of
$\{\mathbf k_i\}_{i=1}^{N_s}\stackrel{\text{i.i.d.}}{\sim}\rho$,
\begin{equation}\label{eq: uniform-law}
\sup_{\mathbf c\in\mathcal N_n}\big|\ \mathcal L^{N_s}(\mathbf c)-\mathcal L^{\rm pop}(\mathbf c)\ \big|
\ \le\ 2\,R_{N_s}(\mathcal G_n)\ +\ B\,\sqrt{\frac{2\log(1/\delta)}{N_s}}.
\end{equation}
\end{lemma}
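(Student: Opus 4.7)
The plan is to combine McDiarmid's bounded differences inequality with the classical symmetrization argument that converts an expected supremum into a Rademacher average. Let $\Phi(S):=\sup_{\mathbf c\in\mathcal N_n}\bigl|\mathcal L^{N_s}(\mathbf c)-\mathcal L^{\rm pop}(\mathbf c)\bigr|$ for a sample $S=(\mathbf k_1,\ldots,\mathbf k_{N_s})$. The proof splits cleanly into a concentration step for $\Phi(S)$ around its mean and an approximation step for the mean itself.

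First, I would verify the bounded-differences property. By Proposition~\ref{prop:rb-stability} and the uniform boundedness in Assumption~\ref{ass:nn}, the per-sample loss obeys $0\le g_{\mathbf c}(\mathbf k)\le B$ as recorded in \eqref{eq:uniform bound g}. Replacing any single coordinate $\mathbf k_i$ by $\mathbf k_i'$ perturbs $\mathcal L^{N_s}(\mathbf c)$ by at most $B/N_s$ uniformly in $\mathbf c\in\mathcal N_n$; since $\mathcal L^{\rm pop}(\mathbf c)$ is deterministic, $\Phi$ inherits bounded-difference constants $c_i=B/N_s$. McDiarmid's inequality then delivers, with probability at least $1-\delta$,
\[
\Phi(S)\;\le\;\mathbb E_S[\Phi(S)]\;+\;B\sqrt{\tfrac{\log(1/\delta)}{2N_s}},
\]
which is at least as strong as the $B\sqrt{2\log(1/\delta)/N_s}$ tail term stated in \eqref{eq: uniform-law}, so any factor-of-two slack is absorbed without effort.

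Next, I would bound the expectation by symmetrization. Introducing an independent ghost sample $S'=(\mathbf k_1',\ldots,\mathbf k_{N_s}')\stackrel{\text{i.i.d.}}{\sim}\rho$, the identity $\mathcal L^{\rm pop}(\mathbf c)=\mathbb E_{S'}[N_s^{-1}\sum_i g_{\mathbf c}(\mathbf k_i')]$ together with Jensen's inequality pulls the expectation outside the supremum, i.e., $\mathbb E_S[\Phi(S)]\le \mathbb E_{S,S'}\sup_{\mathbf c}|N_s^{-1}\sum_i(g_{\mathbf c}(\mathbf k_i)-g_{\mathbf c}(\mathbf k_i'))|$. Inserting i.i.d.\ Rademacher signs $\varepsilon_i$, whose joint distribution is invariant under the swap $\mathbf k_i\leftrightarrow\mathbf k_i'$, converts the paired difference into a Rademacher sum, and a triangle-inequality split over the two samples yields the standard bound $\mathbb E_S[\Phi(S)]\le 2\,R_{N_s}(\mathcal G_n)$. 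Chaining this with the McDiarmid estimate produces \eqref{eq: uniform-law}.

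I do not anticipate a substantive obstacle, since every ingredient is classical and the hypotheses already guarantee the uniform boundedness required by McDiarmid. The only fine point is measurability of $\Phi$ as a function of $S$; this is routine because each $\mathcal N_n$ is generated by a continuous parameterization on $\Theta_n\subset\mathbb R^{P_n}$, so the supremum reduces to one over a countable dense subfamily (or, failing that, can be handled via outer expectations, which does not affect the stated probability bound).
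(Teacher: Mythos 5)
Your argument is correct and is exactly the standard McDiarmid--plus--symmetrization proof of the results the paper invokes by citation only (Wainwright, Theorem~4.10; Shalev-Shwartz--Ben-David, Theorem~26.5), with the bounded-differences constants $B/N_s$ and the ghost-sample/Rademacher step carried out as in those references, so your route coincides with the one underlying the paper's (proofless) statement. The sole cosmetic point is that symmetrization of the two-sided supremum naturally produces the Rademacher complexity with an absolute value inside the supremum (Wainwright's convention), whereas the paper defines $R_{N_s}(\mathcal G_n)$ without it; this mismatch is inherited from the paper's own statement (one either works with the absolute-value version or runs the one-sided bound for $\mathcal G_n$ and $-\mathcal G_n$ at the cost of $\log(2/\delta)$), and is not a defect introduced by your proof.
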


We next control $R_{N_s}(\mathcal G_n)$ on the RHS of \eqref{eq: uniform-law} in terms of the pseudo-dimension
$p_n$.

\begin{definition}[Covering number and $\epsilon$-net {\cite{van1996weak}}]\label{def: covering-number}
Let $(\mathcal{V},\|\cdot\|)$ be a normed linear space and let $\mathcal F\subset \mathcal{V}$.
A finite set $E\subset \mathcal F$ is called an $\epsilon$-net of $\mathcal F$ with respect to $\|\cdot\|$ if
for every $f\in\mathcal F$ there exists $e\in E$ such that $\|f-e\|\le \epsilon$.
The $\epsilon$-covering number of $\mathcal F$ is
\[
\mathcal N(\mathcal F,\|\cdot\|,\epsilon)
\ :=\ \min\big\{\,|E|:\ E\subset\mathcal F \text{ is an $\epsilon$-net of } \mathcal F \text{ w.r.t.\ }\|\cdot\|\,\big\},
\]
with the convention $\mathcal N(\mathcal F,\|\cdot\|,\epsilon)=+\infty$ if no finite $\epsilon$-net exists.
\end{definition}

\begin{lemma}\label{lem: Rademacher-rate}
Fix $n\in\mathbb N$. Under Proposition \ref{prop:rb-stability} and Assumption \ref{ass:nn}, there exists a universal
constant $C>0$, which is independent of $n,N_s$, such that
\begin{equation}\label{eq: RC-rate}
R_{N_s}(\mathcal G_n)\ \le\ C\,B\,\sqrt{\frac{p_n}{N_s}}.
\end{equation}
In particular, for each fixed $n$, $R_{N_s}(\mathcal G_n)\to0$ as $N_s\to\infty$.
\end{lemma}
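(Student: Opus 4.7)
The strategy is to combine Dudley's chaining bound with a pseudo-dimension based covering-number estimate for $\mathcal G_n$. The uniform bound $\mathcal G_n\subset[0,B]^{\mathcal D}$ from \eqref{eq:uniform bound g}, together with the fact that the envelope $B$ is independent of $n$ and $N_s$, is what will allow the final constant to be universal.

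First, I would apply the classical Dudley entropy integral for bounded function classes: for any sample $S$ of size $N_s$,
\[
  \widehat R_S(\mathcal G_n)
  \;\le\; \frac{C_1}{\sqrt{N_s}}\int_0^{B}\sqrt{\log \mathcal N\bigl(\mathcal G_n,L_2(P_S),\epsilon\bigr)}\,d\epsilon,
\]
with $C_1$ a universal constant. Taking expectation over $S$ preserves the inequality for $R_{N_s}(\mathcal G_n)$. Second, I would invoke the Pollard--Haussler covering bound: for any class $\mathcal F$ of $[0,B]$-valued functions with pseudo-dimension $d$ and any empirical measure $P_S$,
\[
  \mathcal N\bigl(\mathcal F,L_2(P_S),\epsilon\bigr)\;\le\; e(d+1)\Bigl(\tfrac{2eB}{\epsilon}\Bigr)^{d}.
\]
Combining these two estimates and changing variables $u=\epsilon/B$ reduces the integral to $B\sqrt{d}\int_0^1\sqrt{\log(e/u)}\,du$, which is finite and independent of $d$. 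Hence $R_{N_s}(\mathcal G_n)\le C_2 B\sqrt{d/N_s}$ for some universal $C_2$.

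It remains to relate $d=\mathrm{Pdim}(\mathcal G_n)$ to the network pseudo-dimension $p_n=\mathrm{Pdim}(\mathcal N_n)$ appearing in the lemma. The loss class $\mathcal G_n$ is obtained from $\mathcal N_n$ by the fixed transformation $\mathbf c\mapsto \|\mathbf A_{\rm rb}(\cdot)\mathbf c(\cdot)-\mathbf F_{\rm rb}(\cdot)\|_2^2$, which is a parameter-dependent quadratic polynomial of the $N$ coordinate functions of $\mathbf c$. By Proposition~\ref{prop:rb-stability} and Assumption~\ref{ass:nn}, this map has uniformly bounded Lipschitz constant $L$ depending only on $\sigma^{*},M_{\infty},C_F$, and standard stability properties of the pseudo-dimension under fixed polynomial compositions of bounded degree (e.g.\ via Warren's or Goldberg--Jerrum type bounds) yield $d\le c'\,p_n$ for a constant $c'$ that depends only on $N$ and $L$, and thus not on $n$ or $N_s$. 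Absorbing $c'$, $C_1$, and $C_2$ into a single constant $C$ gives \eqref{eq: RC-rate}. The final claim $R_{N_s}(\mathcal G_n)\to 0$ as $N_s\to\infty$ is immediate since $p_n$ is fixed once $n$ is fixed.

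The main obstacle is the last step: making the transition from the vector-valued network class $\mathcal N_n$ (for which the paper writes $p_n=\mathrm{Pdim}(\mathcal N_n)$) to the scalar loss class $\mathcal G_n$ rigorous. A convenient way around this is to adopt the convention that $\mathrm{Pdim}(\mathcal N_n)$ means the pseudo-dimension of the full scalar class generated by the coordinate projections of $\mathcal N_n$, in which case the composition with a fixed polynomial of bounded degree only inflates the capacity by a constant factor. All other ingredients (Dudley's bound, Pollard--Haussler) are off-the-shelf results that can be cited directly.
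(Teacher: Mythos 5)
Your skeleton (Dudley's entropy integral plus a pseudo-dimension-based covering estimate, with the envelope $B$ making the constant uniform) is the same as the paper's, and those two ingredients are applied correctly. The gap is in the last step, where you pass from the network class $\mathcal N_n$ to the loss class $\mathcal G_n$ at the level of \emph{pseudo-dimension}. The Pollard--Haussler bound you invoke requires $\mathrm{Pdim}(\mathcal G_n)$, and your claim that $\mathrm{Pdim}(\mathcal G_n)\le c'\,p_n$ via ``stability of the pseudo-dimension under fixed polynomial compositions'' is not an off-the-shelf fact in this setting: Warren and Goldberg--Jerrum type results bound the VC/pseudo-dimension of classes that are \emph{parametrically} defined by polynomials or piecewise polynomials in the weights, i.e.\ they give bounds in terms of the number of parameters and degrees, not in terms of the pseudo-dimension of an abstract hypothesis class as posited in Assumption~\ref{ass:nn}(4). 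Moreover, the map $\mathbf c\mapsto\|\mathbf A_{\rm rb}(\mathbf k)\mathbf c(\mathbf k)-\mathbf F_{\rm rb}(\mathbf k)\|_2^2$ is a $\mathbf k$-dependent, non-monotone, vector-to-scalar transformation; Lipschitz constants (your $L$) control covering numbers, not pseudo-dimension, so citing $L$ in support of a Pdim comparison is the wrong mechanism. Without a concrete combinatorial argument (or extra structural assumptions on the architecture), $\mathrm{Pdim}(\mathcal G_n)\lesssim p_n$ is unproven, and your chain breaks there.

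The fix is exactly to use your Lipschitz constant where it belongs, at the covering-number level, which is what the paper does: from
$|g_{\mathbf c_1}(\mathbf k)-g_{\mathbf c_2}(\mathbf k)|\le L\,\|\mathbf c_1(\mathbf k)-\mathbf c_2(\mathbf k)\|_2$ with $L=2(\sigma^*M_\infty+C_F)\sigma^*$ one gets
$\mathcal N(\mathcal G_n,\|\cdot\|_{L_2(P)},\epsilon)\le\mathcal N(\mathcal N_n,\|\cdot\|_{L_2(P)},\epsilon/L)$,
and then the pseudo-dimension-based covering bound (Anthony--Bartlett, Thm.~18.4) is applied to $\mathcal N_n$ itself, so only $p_n=\mathrm{Pdim}(\mathcal N_n)$ ever enters; Dudley's integral then gives $R_{N_s}(\mathcal G_n)\le C\,B\sqrt{p_n/N_s}$ as you computed. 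You already derived all the pieces for this route; rearranging them in this order closes the gap without any Pdim-composition lemma. (The residual issue of interpreting $\mathrm{Pdim}$ for a vector-valued class, e.g.\ via coordinate projections with a constant depending on the fixed RB dimension $N$, is shared with the paper and is harmless since the lemma only requires $C$ independent of $n$ and $N_s$.)
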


\begin{proof}
For $\mathbf c_1,\mathbf c_2\in\mathcal N_n$ and any $\mathbf k\in\mathcal D$, set
$h_{\mathbf c}(\mathbf k):=\mathbf{A}_{\rm rb}(\mathbf{k})\mathbf c(\mathbf k)-\mathbf{F}_{\rm rb}(\mathbf{k})$ and note
\[
\big|g_{\mathbf c_1}(\mathbf k)-g_{\mathbf c_2}(\mathbf k)\big|
=\big|\|h_{\mathbf c_1}(\mathbf k)\|_2^2-\|h_{\mathbf c_2}(\mathbf k)\|_2^2\big|
\le \big(\|h_{\mathbf c_1}(\mathbf k)\|_2+\|h_{\mathbf c_2}(\mathbf k)\|_2\big)\,\|h_{\mathbf c_1}(\mathbf k)-h_{\mathbf c_2}(\mathbf k)\|_2.
\]
Since we have the uniform range bound \eqref{eq:uniform bound g}, we can get the Lipschitz transfer
\[
\|g_{\mathbf c_1}-g_{\mathbf c_2}\|_{L_2(P)}
\ \le\ L\,\|\mathbf c_1-\mathbf c_2\|_{L_2(P)},
\qquad L:=2(\sigma^* M_\infty+C_F)\,\sigma^* .
\]
Hence, by the standard covering-number transfer,
\begin{equation}\label{eq: covering number eq1}
    \mathcal N\!\big(\mathcal G_n,\|\cdot\|_{L_2(P)},\epsilon\big)
\ \le\ \mathcal N\!\big(\mathcal N_n,\|\cdot\|_{L_2(P)},\epsilon/L\big),
\qquad \forall\,\epsilon>0,\ \forall\,P.
\end{equation}

For vector-valued networks with finite pseudo-dimension $p_n$, standard results in \cite[Theorem~18.4]{anthony2009neural} imply that for any probability
measure $P$ on $\mathcal D$ and any $\varepsilon\in(0,M_\infty]$,
\begin{equation}\label{eq: cover-Nn}
\mathcal N\!\big(\mathcal N_n,\|\cdot\|_{L_2(P)},\varepsilon\big)
\ \le\ \Big(\frac{C_1\,M_\infty}{\varepsilon}\Big)^{C_2\,p_n},
\end{equation}
for universal constants $C_1,C_2>0$. 
Combining this with \eqref{eq: covering number eq1} and absorbing constants yields, for every
$\epsilon\in(0,B]$,
\begin{equation}\label{eq: cover-Gn}
\mathcal N\!\big(\mathcal G_n,\|\cdot\|_{L_2(P)},\epsilon\big)
\ \le\ \Big(\frac{C_3\,B}{\epsilon}\Big)^{C_2\,p_n},
\end{equation}
where $C_3>0$ is a universal constant independent of $n$ and $N_s$.


By Dudley’s entropy integral bound for the empirical Rademacher complexity
\cite[Theorem~27.4]{shalev2014understanding}, for any $\alpha\in(0,B]$,
\[
\widehat R_S(\mathcal G_n)
\ \le\ \alpha\ +\ \frac{6}{\sqrt{N_s}}
\int_{\alpha}^{B}\sqrt{\log \mathcal N\!\big(\mathcal G_n,\|\cdot\|_{L_2(P_{N_s})},\epsilon\big)}\,d\epsilon .
\]
Taking expectation in the sample $S$ and using that $\sup_{P}$ dominates the empirical metric gives
\[
R_{N_s}(\mathcal G_n)\ \le\ \alpha\ +\ \frac{6}{\sqrt{N_s}}
\int_{\alpha}^{B}\sqrt{\,\sup_{P}\log \mathcal N\!\big(\mathcal G_n,\|\cdot\|_{L_2(P)},\epsilon\big)\,}\ d\epsilon .
\]
Inserting \eqref{eq: cover-Gn} and choosing $\alpha=B/\sqrt{N_s}$, we obtain
\begin{equation}
    \begin{aligned}
        R_{N_s}(\mathcal G_n)
\ \le&\ \alpha\ +\ \frac{6\sqrt{C_2\,p_n}}{\sqrt{N_s}}
\int_{\alpha}^{B}\sqrt{\log\!\Big(\frac{C_3\,B}{\epsilon}\Big)}\ d\epsilon\\
\le&\ C\,B\,\sqrt{\frac{p_n}{N_s}},
    \end{aligned}
\end{equation}
for a universal constant $C>0$. This proves \eqref{eq: RC-rate}.
\end{proof}

Combining Lemma~\ref{lemma: uniform-law} and Lemma~\ref{lem: Rademacher-rate}, we obtain, for any fixed $n$ and any $\delta\in(0,1)$, with probability at least $1-\delta$,
\[
\sup_{\mathbf c\in\mathcal N_n}\big|\ \mathcal L^{N_s}(\mathbf c)-\mathcal L^{\rm pop}(\mathbf c)\ \big|
\ \le\ C\,B\,\sqrt{\frac{p_n}{N_s}}\ +\ B\,\sqrt{\frac{2\log(1/\delta)}{N_s}},
\]
which shows uniform convergence of the empirical loss to the population loss at the rate
$O\!\left(\sqrt{p_n/N_s}\right)$ for each fixed network size $n$.

\begin{theorem}\label{thm: convergence2}
Suppose Assumption \ref{ass:nn} holds. Then for any fixed \(n\in\mathbb N\), we have
\(\lim_{N_s\to\infty}R_{N_s}(\mathcal G_n)=0\).
Moreover, with probability \(1\) over i.i.d.\ samples \(\{\mathbf k_j\}_{j=1}^{N_s}\),
\begin{equation}\label{eq: limit2}
  \lim_{N_s\to\infty}\big\|\widehat{\mathbf c}^{\,N_s}_{n}-\widehat{\mathbf c}_{n}\big\|_{L^2(\mathcal D;\rho)}=0,
  \qquad
  \lim_{N_s\to\infty}\big\|\widehat{w}_{n,N_s}-\widehat w_{n}\big\|_{L^2(\mathcal D;L^2(\Omega))}=0.
\end{equation}
\end{theorem}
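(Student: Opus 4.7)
The plan is to split Theorem \ref{thm: convergence2} into its two claims. The first, $R_{N_s}(\mathcal G_n)\to 0$ for fixed $n$, is a direct corollary of Lemma \ref{lem: Rademacher-rate}: because $p_n<\infty$ by Assumption \ref{ass:nn}(4), the bound $R_{N_s}(\mathcal G_n)\le CB\sqrt{p_n/N_s}$ vanishes as $N_s\to\infty$. The rest of the proof is devoted to the second claim, for which I would follow a standard ERM--consistency pipeline: (a) upgrade the in-probability uniform law of Lemma \ref{lemma: uniform-law} to an almost-sure statement, (b) apply the usual ERM sandwich to obtain excess-risk decay, (c) translate this into an $L^2$ coefficient bound via the quadratic structure of $\mathcal L^{\rm pop}$, and (d) lift to the function-space statement.

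For step (a), I would apply Lemma \ref{lemma: uniform-law} with $\delta=N_s^{-2}$ and insert the Rademacher bound of Lemma \ref{lem: Rademacher-rate} to obtain, with probability at least $1-N_s^{-2}$,
\[
\epsilon_{N_s}:=\sup_{\mathbf c\in\mathcal N_n}\big|\mathcal L^{N_s}(\mathbf c)-\mathcal L^{\rm pop}(\mathbf c)\big|\le 2CB\sqrt{p_n/N_s}+B\sqrt{4\log N_s/N_s}.
\]
Since $\sum_{N_s}N_s^{-2}<\infty$, Borel--Cantelli yields $\epsilon_{N_s}\to 0$ almost surely. Step (b) is then the usual ERM sandwich
\[
0\le \mathcal L^{\rm pop}(\widehat{\mathbf c}_n^{N_s})-\mathcal L^{\rm pop}(\widehat{\mathbf c}_n)\le 2\epsilon_{N_s},
\]
which combines the minimization properties of $\widehat{\mathbf c}_n$ and $\widehat{\mathbf c}_n^{N_s}$ with the uniform deviation, so the excess population risk decays to $0$ almost surely.

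The decisive step is (c). The key observation is the quadratic identity $\mathcal L^{\rm pop}(\mathbf c)=\|\mathbf A_{\rm rb}(\mathbf c-\mathbf c_N)\|_{L^2(\mathcal D;\rho)}^{2}$ from Proposition \ref{prop:1}. Polarizing around $\widehat{\mathbf c}_n$ gives
\[
\mathcal L^{\rm pop}(\widehat{\mathbf c}_n^{N_s})-\mathcal L^{\rm pop}(\widehat{\mathbf c}_n)=\big\|\mathbf A_{\rm rb}(\widehat{\mathbf c}_n^{N_s}-\widehat{\mathbf c}_n)\big\|_{L^2}^{2}+2\big\langle\mathbf A_{\rm rb}(\widehat{\mathbf c}_n^{N_s}-\widehat{\mathbf c}_n),\,\mathbf A_{\rm rb}(\widehat{\mathbf c}_n-\mathbf c_N)\big\rangle_{L^2},
\]
which, together with the uniform spectral bounds of Proposition \ref{prop:rb-stability}, will control $\|\widehat{\mathbf c}_n^{N_s}-\widehat{\mathbf c}_n\|_{L^2(\mathcal D;\rho)}$ once the cross term is dealt with. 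Step (d) is then immediate: the lifting inequality \eqref{eq: lifting-bound} yields $\|\widehat w_{n,N_s}-\widehat w_n\|_{L^2(\mathcal D;L^2(\Omega))}\le C_\psi\,\|\widehat{\mathbf c}_n^{N_s}-\widehat{\mathbf c}_n\|_{L^2(\mathcal D;\rho)}\to 0$.

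The hard part will be precisely this cross term, since the neural-network class $\mathcal N_n$ is generally nonconvex and no first-order optimality at $\widehat{\mathbf c}_n$ is available. To handle it I would invoke an argmin-consistency argument in the spirit of the framework of \cite{ko2022convergence}: the uniform boundedness from Assumption \ref{ass:nn}(2) keeps $\{\widehat{\mathbf c}_n^{N_s}\}_{N_s}$ in a ball of $C(\mathcal D;\mathbb R^N)$; any subsequential limit must attain $\inf_{\mathcal N_n}\mathcal L^{\rm pop}$ by the uniform law from step (a); and the strong convexity $\mathcal L^{\rm pop}(\mathbf c)\ge \sigma_*^{2}\|\mathbf c-\mathbf c_N\|_{L^2}^{2}$ from Proposition \ref{prop:1}, together with identifiability of the population minimizer on $\mathcal N_n$, forces the limit to coincide with $\widehat{\mathbf c}_n$ and collapses the cross term in the limit, yielding \eqref{eq: limit2}.
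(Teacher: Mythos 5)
Your steps (a), (b), and (d) coincide with the paper's proof: the ERM sandwich $\mathcal L^{N_s}(\widehat{\mathbf c}^{\,N_s}_n)\le\mathcal L^{N_s}(\widehat{\mathbf c}_n)$ combined with Lemma~\ref{lemma: uniform-law} and Lemma~\ref{lem: Rademacher-rate} to drive the excess population risk to zero (your choice $\delta=N_s^{-2}$ with Borel--Cantelli is in fact cleaner than the paper's $\delta=N_s^{-1/2}$), and the lifting bound \eqref{eq: lifting-bound} to pass from coefficients to functions. Where you diverge is the decisive step (c): the paper closes the argument in one line by invoking the two-sided quadratic bound \eqref{eq: strong-convex} centered at $\widehat{\mathbf c}_n$, i.e.\ $\sigma_*^2\|\mathbf c-\widehat{\mathbf c}_n\|_{L^2(\mathcal D;\rho)}^2\le\mathcal L^{\rm pop}(\mathbf c)-\mathcal L^{\rm pop}(\widehat{\mathbf c}_n)$ for all $\mathbf c\in\mathcal N_n$, which converts excess risk directly into the $L^2$ distance. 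You correctly observe that such a bound does not follow from Proposition~\ref{prop:rb-stability} alone once one expands around $\widehat{\mathbf c}_n$ rather than $\mathbf c_N$, because of the cross term $2\langle\mathbf A_{\rm rb}(\mathbf c-\widehat{\mathbf c}_n),\mathbf A_{\rm rb}(\widehat{\mathbf c}_n-\mathbf c_N)\rangle$ and the absence of first-order optimality over the nonconvex class $\mathcal N_n$; this is precisely the point the paper's proof glosses over.

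However, your proposed replacement does not actually close the gap. The argmin-consistency argument needs two ingredients that are not available under Assumption~\ref{ass:nn}: (i) compactness, to extract a convergent subsequence of $\{\widehat{\mathbf c}^{\,N_s}_n\}_{N_s}$ --- uniform boundedness only places the sequence in the ball $B_{M_\infty}\subset C(\mathcal D;\mathbb R^N)$, which is not compact in either the sup norm or $L^2(\mathcal D;\rho)$, and the paper's own remark explicitly declines to assume compactness of $\mathcal N_n$ (minimizers need not even be attained); and (ii) ``identifiability of the population minimizer on $\mathcal N_n$,'' i.e.\ uniqueness of the minimizer of $\mathcal L^{\rm pop}$ restricted to $\mathcal N_n$. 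Strict convexity of $\mathcal L^{\rm pop}$ on $C(\mathcal D;\mathbb R^N)$ (Proposition~\ref{prop:1}, \eqref{eq:pop-strong}) does not transfer to its restriction to the nonconvex set $\mathcal N_n$: several distinct network functions can attain $\inf_{\mathcal N_n}\mathcal L^{\rm pop}$, in which case the empirical minimizers may shadow a minimizer different from $\widehat{\mathbf c}_n$ and $\|\widehat{\mathbf c}^{\,N_s}_n-\widehat{\mathbf c}_n\|_{L^2(\mathcal D;\rho)}$ need not vanish. What your steps (a)--(b) rigorously give is only $\mathcal L^{\rm pop}(\widehat{\mathbf c}^{\,N_s}_n)\to\inf_{\mathbf c\in\mathcal N_n}\mathcal L^{\rm pop}(\mathbf c)$ almost surely; to obtain \eqref{eq: limit2} one must either establish a quadratic-growth property of the form \eqref{eq: strong-convex} around $\widehat{\mathbf c}_n$ on $\mathcal N_n$ (which is what the paper asserts and uses), or impose additional structure (e.g.\ compactness of $\Theta_n$ with continuous parametrization plus uniqueness of the restricted minimizer, in the spirit of \cite{ko2022convergence}). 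As written, your step (c) names the right obstacle but substitutes an appeal to hypotheses that the stated assumptions do not supply, so the proof is incomplete at exactly the point where the theorem's content lies.
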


\begin{proof}
Since \(\widehat{\mathbf c}^{\,N_s}_n\) minimizes \(\mathcal L^{N_s}\) over \(\mathcal N_n\),
\[
\mathcal L^{N_s}(\widehat{\mathbf c}^{\,N_s}_n)\ \le\ \mathcal L^{N_s}(\widehat{\mathbf c}_n).
\]
Hence
\[
\mathcal L^{\rm pop}(\widehat{\mathbf c}^{\,N_s}_n)-\mathcal L^{\rm pop}(\widehat{\mathbf c}_n)
\ \le\
\big(\mathcal L^{\rm pop}-\mathcal L^{N_s}\big)(\widehat{\mathbf c}^{\,N_s}_n)
\ -\
\big(\mathcal L^{\rm pop}-\mathcal L^{N_s}\big)(\widehat{\mathbf c}_n).
\]
Taking the supremum over \(\mathcal N_n\) on the right-hand side and applying Lemma~\ref{lemma: uniform-law} yields, with probability at least \(1-\delta\),
\begin{equation}\label{eq: excess-risk}
\mathcal L^{\rm pop}(\widehat{\mathbf c}^{\,N_s}_n)-\mathcal L^{\rm pop}(\widehat{\mathbf c}_n)
\ \le\ 2\sup_{\mathbf c\in\mathcal N_n}\big|\mathcal L^{N_s}(\mathbf c)-\mathcal L^{\rm pop}(\mathbf c)\big|
\ \le\ 4\,R_{N_s}(\mathcal G_n)\ +\ 2B\,\sqrt{\tfrac{2\log(1/\delta)}{N_s}},
\end{equation}
Letting $N_s\to\infty$, Lemma \ref{lem: Rademacher-rate} shows that the right-hand side of \eqref{eq: excess-risk} converges to $0$ almost surely.

Proposition \ref{prop:rb-stability} implies 
\begin{equation}\label{eq: strong-convex}
  \sigma_*^2\|\mathbf c-\widehat{\mathbf c}_n\|_{L^2(\mathcal D;\rho)}^2
  \ \le\
  \mathcal L^{\rm pop}(\mathbf c)-\mathcal L^{\rm pop}(\widehat{\mathbf c}_n)
  \ \le\
  (\sigma^*)^2\|\mathbf c-\widehat{\mathbf c}_n\|_{L^2(\mathcal D;\rho)}^2,
  \qquad \forall\,\mathbf c\in\mathcal N_n.
\end{equation}
Combining \eqref{eq: excess-risk} and \eqref{eq: strong-convex} with \(\mathbf c=\widehat{\mathbf c}^{\,N_s}_n\) and $\delta=N_s^{-1/2}$ gives, with probability one,
\begin{equation}\label{eq: lim-c}
  \lim_{N_s\to\infty}\big\|\widehat{\mathbf c}^{\,N_s}_n-\widehat{\mathbf c}_n\big\|_{L^2(\mathcal D;\rho)}=0.
\end{equation}
Since \(\{\psi_i\}_{i=1}^N\subset L^2(\Omega)\) are fixed, the lifting bound \eqref{eq: lifting-bound} gives
\begin{equation}\label{eq: lim-u}
  \big\|\widehat{w}_{n,N_s}-\widehat u_n\big\|_{L^2(\mathcal D;L^2(\Omega))}
  \ \le\ C_\psi\,\big\|\widehat{\mathbf c}^{\,N_s}_n-\widehat{\mathbf c}_n\big\|_{L^2(\mathcal D;\rho)},
\end{equation}
which yields the second limit in \eqref{eq: limit2}.
\end{proof}

Based on Theorems \ref{thm:convergence1} and \ref{thm: convergence2} and the uniform equivalence of \eqref{eq:forward-loss} and \eqref{eq: loss N_k}, now we can address the convergence of the predicted solution by RB--DeepONet to the reduced finite element solution.

\begin{theorem}[Convergence of RB--DeepONet to the RB solution]\label{thm: final}
Suppose Assumption \ref{ass:nn} holds. For the fixed RB basis
\(\{\psi_i\}_{i=1}^N\subset L^2(\Omega)\), we have with probability \(1\) over i.i.d.\ samples \(\{\mathbf k_j\}_{j=1}^{N_s}\) that
\begin{equation}\label{eq: limit3}
  \lim_{n\to\infty}\,\lim_{N_s\to\infty}\;
  \big\|w_\theta-w_{\rm rb}\big\|_{L^2(\mathcal D;L^2(\Omega))}\;=\;0 .
\end{equation}
\end{theorem}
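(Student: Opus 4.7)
The plan is to assemble the final convergence statement from the two ingredients already proved, namely Theorem~\ref{thm:convergence1} (approximation in $n$) and Theorem~\ref{thm: convergence2} (generalization in $N_s$), via a triangle inequality and an iterated limit. Writing $w_\theta = \widehat w_{n,N_s}$, I would start by inserting the population-risk minimizer $\widehat w_n$ as a pivot:
\begin{equation*}
  \|w_\theta - w_{\rm rb}\|_{L^2(\mathcal D; L^2(\Omega))}
  \;\le\;
  \|\widehat w_{n,N_s} - \widehat w_n\|_{L^2(\mathcal D; L^2(\Omega))}
  \;+\;
  \|\widehat w_n - w_{\rm rb}\|_{L^2(\mathcal D; L^2(\Omega))}.
\end{equation*}
The two terms are exactly the generalization and approximation errors in the decomposition \eqref{eq:error-split}, so each has already been controlled individually. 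Before invoking the earlier theorems, I would note that the training loss \eqref{eq:forward-loss} uses the preconditioned residual $\mathbf A_{\rm rb}^{-1/2}\mathbf r$, whereas the analysis is phrased in terms of the unweighted empirical loss \eqref{eq: loss N_k}; by the uniform spectral bounds $0<\sigma_*\le\sigma^*$ from Proposition~\ref{prop:rb-stability}, these two losses are equivalent up to constants, so they share the same empirical minimizer class and the convergence results transfer verbatim.

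Next, for each fixed network size $n$, Theorem~\ref{thm: convergence2} gives that with probability one over the i.i.d.\ draw $\{\mathbf k_j\}_{j=1}^{N_s}$,
\begin{equation*}
  \lim_{N_s\to\infty}\|\widehat w_{n,N_s} - \widehat w_n\|_{L^2(\mathcal D; L^2(\Omega))} = 0.
\end{equation*}
Hence, on a probability-one event (depending on $n$), the first triangle term vanishes as $N_s\to\infty$ and we obtain
\begin{equation*}
  \limsup_{N_s\to\infty}\|w_\theta - w_{\rm rb}\|_{L^2(\mathcal D; L^2(\Omega))}
  \;\le\;
  \|\widehat w_n - w_{\rm rb}\|_{L^2(\mathcal D; L^2(\Omega))}.
\end{equation*}
Subsequently sending $n\to\infty$ and applying Theorem~\ref{thm:convergence1} drives the right-hand side to zero, yielding the iterated limit \eqref{eq: limit3}.

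The only delicate point is the handling of the almost-sure statement under the iterated limit. Because the outer limit in $n$ is deterministic and the inner limit in $N_s$ is taken first, I do not need a union bound over the countable family of network sizes: I apply Theorem~\ref{thm: convergence2} separately for each fixed $n$, obtain a probability-one event $\Omega_n$ on which the inner limit holds, and then take the countable intersection $\bigcap_n \Omega_n$, which still has probability one and on which the iterated limit \eqref{eq: limit3} holds. Beyond this bookkeeping, the argument is essentially a clean triangle inequality plus the two previously established convergence results, so no additional PDE or learning-theoretic machinery is needed.
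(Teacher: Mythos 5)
Your proposal is correct and matches the paper's (implicit) argument: the paper proves Theorem~\ref{thm: final} exactly by combining Theorem~\ref{thm:convergence1} and Theorem~\ref{thm: convergence2} through the decomposition \eqref{eq:error-split} together with the uniform equivalence of the losses \eqref{eq:forward-loss} and \eqref{eq: loss N_k}, which is precisely your triangle-inequality-plus-iterated-limit argument. Your explicit bookkeeping of the almost-sure events via the countable intersection $\bigcap_n\Omega_n$ is a sound (and slightly more careful) rendering of the same route.
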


Using \eqref{eq: limit3}, we are able to show that our algorithm can be sufficiently close to the approximate
solution computed by the proposed scheme, as the index $n$ for the neural network architecture and the number of input samples $N_k$ go to infinity.

\section{Numerical experiments}\label{sec:Numerical Results}

This section evaluates RB--DeepONet on representative parametric PDE benchmarks and compares it with POD--DeepONet \cite{lu2022comprehensive} and FEONet \cite{lee2025finite} in terms of accuracy, efficiency, and scalability. 
For a fair comparison, we fix the number of trunk basis functions \(N\) in both RB--DeepONet and POD--DeepONet. Specifically, we first construct the POD trunk used in POD--DeepONet with tolerance \(\epsilon_{\rm POD}=10^{-7}\). This choice sets the reduced dimension \(N\). A Greedy RB trunk of the same dimension is then built from the same snapshot set and used in RB--DeepONet, so that RB--DeepONet and POD--DeepONet are compared at equal trunk size.
We follow the supervised training strategy of \cite{lu2022comprehensive} and train POD--DeepONet by minimizing the empirical discrepancy between the network prediction and the reference FEM solution, as in \eqref{eq:pod deeponet loss}. In contrast, RB--DeepONet and FEONet are trained in an unsupervised fashion using the residual-based losses \eqref{eq:forward-loss} and \eqref{eq: feonet loss}, respectively.

\subsection{Example 1: Fully-parameterized steady heat conduction}
\label{sec:ex1}
First, we consider a fully parameterized steady heat conduction model with piecewise-constant conductivity and a
prescribed heat flux at the bottom. The domain is the square
$\Omega=(-0.5,0.5)^2$ with boundary split into three parts as in Fig.~\ref{fig:1}: the bottom $\Gamma_{\rm base}=(-0.5,0.5)\times\{-0.5\}$, the top $\Gamma_{\rm top}=(-0.5,0.5)\times\{0.5\}$ and the sides $\Gamma_{\rm side}=\{\pm0.5\}\times(-0.5,0.5)$.
An inclusion $\Omega_0=\{\boldsymbol x:\|\boldsymbol x\|_2\le r_0\}$ of radius $r_0=0.2$
is embedded and we denote $\Omega_1:=\Omega\backslash\Omega_0$.
Consider the thermal conductivity $\kappa$ to be piecewise constant,
\begin{equation*}
    \kappa(\mathbf{x}; k_1) =
\begin{cases}
k_1, & \mathbf{x} \in \Omega_0, \\
1, & \mathbf{x} \in \Omega_1,
\end{cases}
\end{equation*}
so that $k_1\geq0$ controls how conductive the inclusion is relative to the background.
A constant heat flux $k_2$ is prescribed on the bottom boundary.
We collect the two parameters in 
\begin{equation*}
    \mathbf{k}:=(k_1,k_2)\in\mathcal{D}:=[0.1,10]\times[-1,1].
\end{equation*}
For each $\boldsymbol k$, the temperature $u(\cdot;\boldsymbol k)$ solves
\begin{equation}\label{eq:strong-ex1}
\begin{cases}
\nabla\!\cdot\!\big(\kappa(\cdot;k_1)\nabla u(\cdot;\boldsymbol k)\big)=0 & \text{in }\Omega,\\
u(\cdot;\boldsymbol k)=0 & \text{on }\Gamma_{\rm top},\\
\kappa(\cdot;k_1)\nabla u(\cdot;\boldsymbol k)\!\cdot\!\boldsymbol n=0 & \text{on }\Gamma_{\rm side},\\
\kappa(\cdot;k_1)\nabla u(\cdot;\boldsymbol k)\!\cdot\!\boldsymbol n=k_2 & \text{on }\Gamma_{\rm base}.
\end{cases}
\end{equation}
Here, $\boldsymbol{n}$ is outward pointing unit normal on $\partial\Omega$.
The corresponding weak form reads: for each $\mathbf{k}\in\mathcal{D}$, find $u(\mathbf{k}):=u(\mathbf{x};\mathbf{k})\in V:=\{v\in H^1(\Omega):v\big|_{\Gamma_{\rm top}}=0\}$ such that
\begin{equation}\label{eq:weak-hc}
	a(u(\mathbf{k}),v;\mathbf{k})=\ell(v;\mathbf{k}),\quad \forall v\in V,
\end{equation}
where
\begin{equation}
	\begin{aligned}
		a(u,v;\mathbf{k}):=&\int_{\Omega}\kappa(\mathbf{x}; k_1)\nabla u\cdot\nabla v,\qquad
		\ell(v;\mathbf{k}):=&k_2\int_{\Gamma_{\rm base}}v.
	\end{aligned}
\end{equation}
Since $\kappa(\mathbf{x}; k_1)\geq0.1$ for all admissible $k_1$, the bilinear form is coercive and the Lax-Milgram theorem implies existence and uniqueness of $u(\mathbf{k})\in V$ for any $\mathbf{k}\in\mathcal{D}$.

\begin{figure}[htbp]
	\centering
	\subfigure{\label{cell}
		\includegraphics[width = .47\textwidth,trim={6.3cm 0cm 6.5cm 0cm},clip]{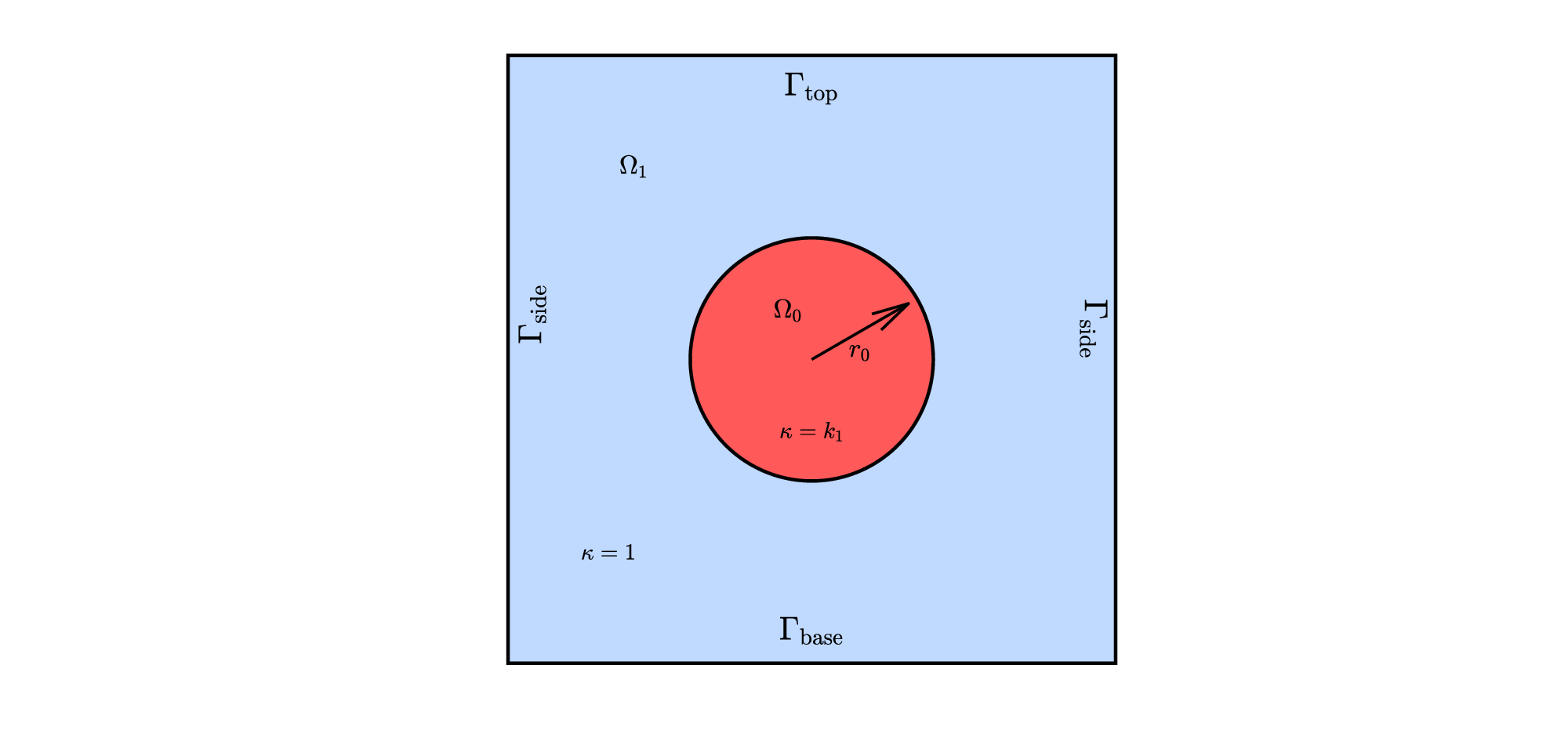}
	}%
	\subfigure{\label{mesh}
		\includegraphics[width = .47\textwidth,trim={0cm 0cm 0cm 0cm},clip]{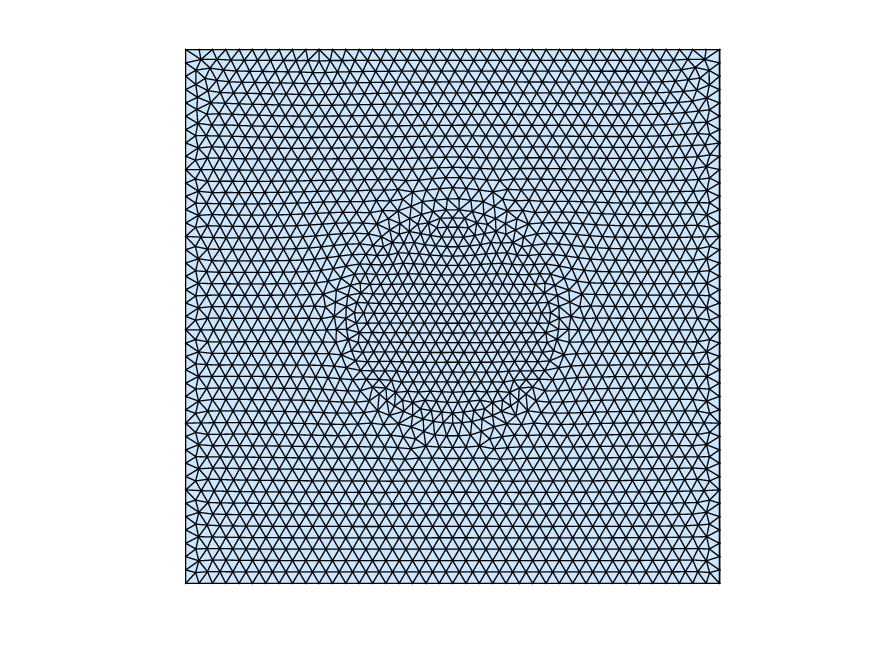}
	}%
	\centering
	\caption{Geometrical set-up (left) of the heat conductivity problem and mesh on $\Omega$ (right).}\label{fig:1}
\end{figure}

We discretize \eqref{eq:weak-hc} with continuous, piecewise linear finite elements on the conforming triangular mesh in \ref{mesh}, which has 4052 elements and $N_h=2107$ vertices.
Eliminating the homogeneous Dirichlet dofs on $\Gamma_{\rm top}$ gives
\[
\mathbf A_{II}(\boldsymbol k)\,\mathbf u_I(\boldsymbol k)=\mathbf F_I(\boldsymbol k),\qquad
\mathbf A_{II}(\boldsymbol k)=\mathbf A_1+k_1 \mathbf A_0,\quad \mathbf F_I(\boldsymbol k)=k_2 \mathbf F.
\]
Here $\mathbf A_0,\mathbf A_1$ are the stiffness contributions from $\Omega_0,\Omega_1$
and $\mathbf F$ the Neumann load on $\Gamma_{\rm base}$. They are defined on the free nodes with $N_0=2066$ dofs, i.e., $\mathbf A_0,\mathbf A_1\in\mathbb{R}^{N_0\times N_0}$, $\mathbf F\in\mathbb{R}^{N_0}$. The FE solution $\mathbf u_I(\boldsymbol k)\in\mathbb{R}^{N_0}$
serves only as a reference for RB basis construction and for evaluation.
We equip $\mathbb R^{N_0}$ with the energy inner product at $\boldsymbol k_\star=(1,1)$, i.e.,
$(\boldsymbol u_I,\boldsymbol v_I)_V=\boldsymbol u_I^\top \mathbf A_{II}(\boldsymbol k_\star)\boldsymbol v_I$.

From $N_k = 2100$ i.i.d.\ samples in $\mathcal D$, we first construct a POD
basis with tolerance $\epsilon_{\rm POD}=10^{-7}$, which yields a reduced
dimension $N=3$ and a trunk
$\Psi^{\rm POD}=[\psi^{\rm POD}_1,\psi^{\rm POD}_2,\psi^{\rm POD}_3]
\in\mathbb R^{N_0\times N}$ for POD--DeepONet.
Using the same parameter samples, we then run Greedy
algorithm to generate an RB basis of the same size,
$\Psi^{\rm G}=[\psi^{\rm G}_1,\psi^{\rm G}_2,\psi^{\rm G}_3]
\in\mathbb R^{N_0\times N}$, which is fixed as the trunk in
RB--DeepONet. Thus, POD-- and RB--DeepONet are compared at equal reduced
dimension $N$.

Given the Greedy trunk $\Psi^{\rm G}$, the intrusive RB--Galerkin system
(cf.\ \eqref{eq:rb-system}) reads: for each
$\mathbf k\in\mathcal D$, find $\mathbf c_{\rm rb}(\mathbf k)\in\mathbb R^N$
such that
\[
  \mathbf A_{\rm rb}(\mathbf k)\,\mathbf c_{\rm rb}(\mathbf k)
  = \mathbf F_{\rm rb}(\mathbf k),
\]
with $\mathbf A_{\rm rb}(\mathbf k)$ and $\mathbf F_{\rm rb}(\mathbf k)$
assembled from $\Psi^{\rm G}$ as in \eqref{eq:rb-operators}.
RB--DeepONet learns the coefficient map
$\mathbf k \mapsto \mathbf c_{\rm rb}(\mathbf k)$ and predicts
\[
  u_\theta(\mathbf k) := \Psi^{\rm G}\,\mathbf c_\theta(\mathbf k).
\]
POD--DeepONet uses the same branch architecture but replaces
$\Psi^{\rm G}$ by the POD trunk $\Psi^{\rm POD}$.
FEONet, by contrast, keeps the full FE basis
$\{\phi_j\}_{j=1}^{N_0}$ (here $N_0=2066$ piecewise linear nodal basis
functions) as trunk, and the network directly outputs
$\mathbf u_\theta(\mathbf k)\in\mathbb R^{N_0}$, the nodal vector of the
predicted FE solution.

Here are the details of our architecture and training setup:
In all three methods, we employ a fully connected multilayer perceptron (MLP) as the branch network, consisting of four hidden layers, each with a width of 256 and GELU activation functions. The input vector $\mathbf{k} \in \mathbb{R}^{2}$ is standardized using the mean and variance computed from the
training parameter samples, and the same transformation is applied during inference. Network weights are initialized using the Xavier scheme to ensure stable gradients. Training is performed for up to 2000 epochs with mini-batches of size 64 using the AdamW optimizer (initial learning rate $5\times10^{-4}$, weight decay $10^{-6}$). A ReduceLROnPlateau scheduler monitors the validation loss and halves the learning rate after 20 consecutive plateaus. The training set comprises 2000 randomly sampled parameters from $\mathcal{D}$, while validation uses 200 held-out parameters. Early stopping with a patience of 200 epochs is applied to prevent overfitting. 

Figure~\ref{fig:hc:loss} shows the training/validation losses.
All three networks reach a small residual level.
Figure~\ref{fig:hc:viz} displays a representative test parameter
$\mathbf k=(10.0,0.5)$.
The FEM reference (top left) shows a smooth vertical gradient with a
localized “thermal bubble’’ inside the high-conductivity inclusion.
FEONet, POD--DeepONet, and RB--DeepONet all reproduce this structure and
are visually indistinguishable at the plotting scale.

For a test parameter $\mathbf{k}$, let $\mathbf u_{I}(\mathbf{k})$ be the FE solution vector restricted to the free nodes, and $\mathbf e_{\rm nn}(\mathbf{k}):=\mathbf u_{\theta}(\mathbf k)-\mathbf u_{I}(\mathbf{k})$.
We report three relative error measures on the free block:
\begin{equation*}
    \begin{aligned}
        \mathrm{rel}\text{-}L^2(\mathbf k):=&\frac{\left(\mathbf e_{\rm nn}(\mathbf{k})^\top \mathbf M_{II}(\mathbf{k}_\star)\mathbf e_{\rm nn}(\mathbf{k})\right)^{1/2}}{\left(\mathbf u_{I}(\mathbf{k})^\top \mathbf M_{II}(\mathbf{k}_\star)\mathbf u_{I}(\mathbf{k})\right)^{1/2}},\\
        \mathrm{rel}\text{-}\mathrm{energy}(\mathbf k):=&\frac{\left(\mathbf e_{\rm nn}(\mathbf{k})^\top \mathbf A_{II}(\mathbf{k}_\star)\mathbf e_{\rm nn}(\mathbf{k})\right)^{1/2}}{\left(\mathbf u_{I}(\mathbf{k})^\top \mathbf A_{II}(\mathbf{k}_\star)\mathbf u_{I}(\mathbf{k})\right)^{1/2}},\\
        \mathrm{rel}\text{-}\mathrm{residual}(\mathbf k):=&\frac{\|\mathbf A_{\rm rb}(\mathbf k_\star)^{-1/2}\mathbf r(\mathbf{k})\|_2}{\|\mathbf A_{\rm rb}(\mathbf k_\star)^{-1/2}\mathbf F_{\rm rb}(\mathbf{k})\|_2}.\\
    \end{aligned}
\end{equation*}
Here, $\mathbf M_{II}$ is the mass matrix on $V_0$.
Table~\ref{tab:hc:new-agg-compact} reports the mean and 95th percentile of these errors
over $1000$ i.i.d.\ test parameters.
With only $N=3$ modes, the RB--Galerkin reference already achieves
mean rel-$L^2$ of order $10^{-7}$ and mean rel-energy of order $10^{-6}$,
which confirms that the Greedy basis $\Psi^{\rm G}$ is globally accurate.
Among the learned surrogates, POD--DeepONet and RB--DeepONet deliver very
similar accuracy.
Both attain mean rel-$L^2$ below $5\times 10^{-3}$ and 95th percentiles
below $6.1\times 10^{-3}$, so the reconstructed fields differ from the FE
solutions by well under $1\%$ on average.
POD--DeepONet has a slightly smaller mean errors than RB--DeepONet,
whereas RB--DeepONet exhibits smaller 95th-percentile errors, indicating a more uniform control of the RB residual
across the parameter space.
FEONet attains the smallest rel-$L^2$ among the three networks, but its
rel-energy and rel-residual are an order of magnitude larger, with
95th-percentile values around $3\times 10^{-2}$.
This reflects the fact that FEONet operates in the full FE space and
optimizes an unpreconditioned variational residual, which is sensitive to
the conditioning of the stiffness matrix.
Moreover, FEONet uses an output dimension equal to the number of free FE
degrees of freedom ($N_0\approx 2000$), whereas POD--DeepONet and
RB--DeepONet rely on only $N=3$ trunk modes.
Thus, RB--DeepONet and POD--DeepONet achieve FEONet-level accuracy in
rel-$L^2$ while using a trunk that is almost three orders of magnitude
smaller and, in the case of RB--DeepONet, preserving RB-type control in
energy and residual norms.

Overall, this example shows that RB--DeepONet and POD--DeepONet can match
the accuracy of FEONet up to a small factor while using an extremely
low-dimensional trunk, and RB--DeepONet in particular inherits the
RB--Galerkin error level and lightweight inference of a reduced model.
This highlights the advantage of operating entirely in a certified RB
space instead of predicting a full-order FE vector.

\begin{table}[htbp]
\centering
\small
\setlength{\tabcolsep}{6pt}
\begin{tabular}{lccc}
\toprule
\textbf{Trunk} 
& \textbf{rel-$L^2$} (mean / $p95$)
& \textbf{rel-energy} (mean / $p95$)
& \textbf{rel-residual} (mean / $p95$)
\\
\midrule
RB--DeepONet
& $\mathbf{4.99{\times}10^{-3}}$ / $\mathbf{4.66{\times}10^{-3}}$
& $\mathbf{4.99{\times}10^{-3}}$ / $\mathbf{4.56{\times}10^{-3}}$
& $\mathbf{5.14{\times}10^{-3}}$ / $\mathbf{4.90{\times}10^{-3}}$
\\[2pt]
POD--DeepONet
& $3.98{\times}10^{-3}$ / $6.05{\times}10^{-3}$
& $4.08{\times}10^{-3}$ / $6.30{\times}10^{-3}$
& $4.97{\times}10^{-3}$ / $7.50{\times}10^{-3}$
\\[2pt]
FEONet
& $2.67{\times}10^{-3}$ / $5.17{\times}10^{-3}$
& $1.26{\times}10^{-2}$ / $2.89{\times}10^{-2}$
& $1.44{\times}10^{-2}$ / $3.12{\times}10^{-2}$
\\[2pt]
\midrule
RB--Galerkin
& $2.11{\times}10^{-7}$ / $4.23{\times}10^{-7}$
& $2.77{\times}10^{-6}$ / $5.70{\times}10^{-6}$
& $7.57{\times}10^{-16}$ / $1.54{\times}10^{-15}$
\\
\bottomrule
\end{tabular}
\caption{Example \ref{sec:ex1}: Test errors over 1000 random parameters for RB--DeepONet, 
POD--DeepONet, FEONet, and RB--Galerkin.}
\label{tab:hc:new-agg-compact}
\end{table}

\begin{figure}[htbp]
\centering
\includegraphics[width=.98\textwidth]{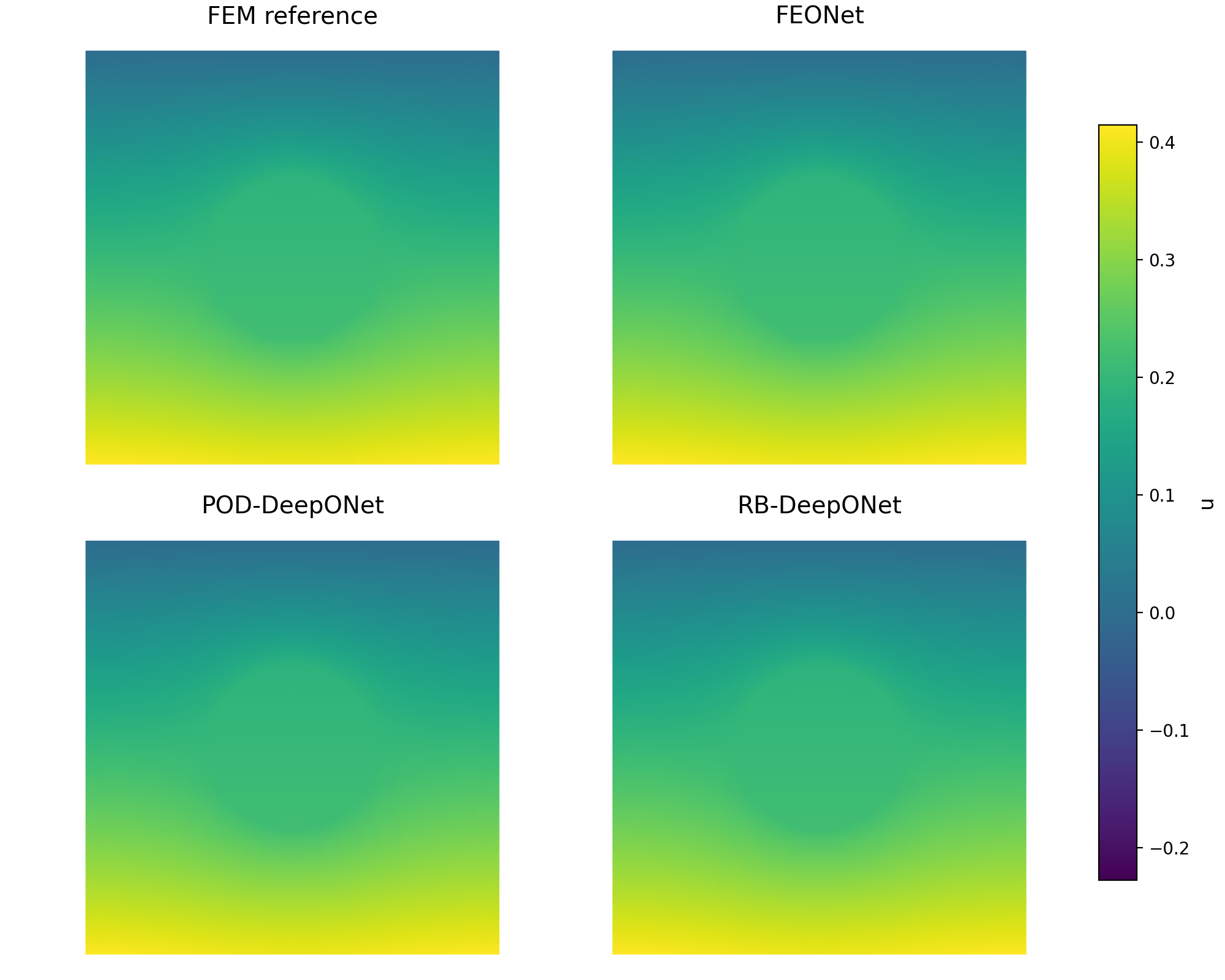}
\caption{Example \ref{sec:ex1}: Representative-parameter comparison at $\mathbf{k}=(k_1,k_2)=(10,\,0.5)$ of (top–left) FEM reference, (top–right) FEONet,
(bottom–left) POD--DeepONet, and
(bottom–right) RB--DeepONet.}
\label{fig:hc:viz}
\end{figure}

\begin{figure}[htbp]
    \centering
    \subfigure[FEONet]{%
        \label{fig:loss-fem}
        \includegraphics[width=0.31\textwidth]{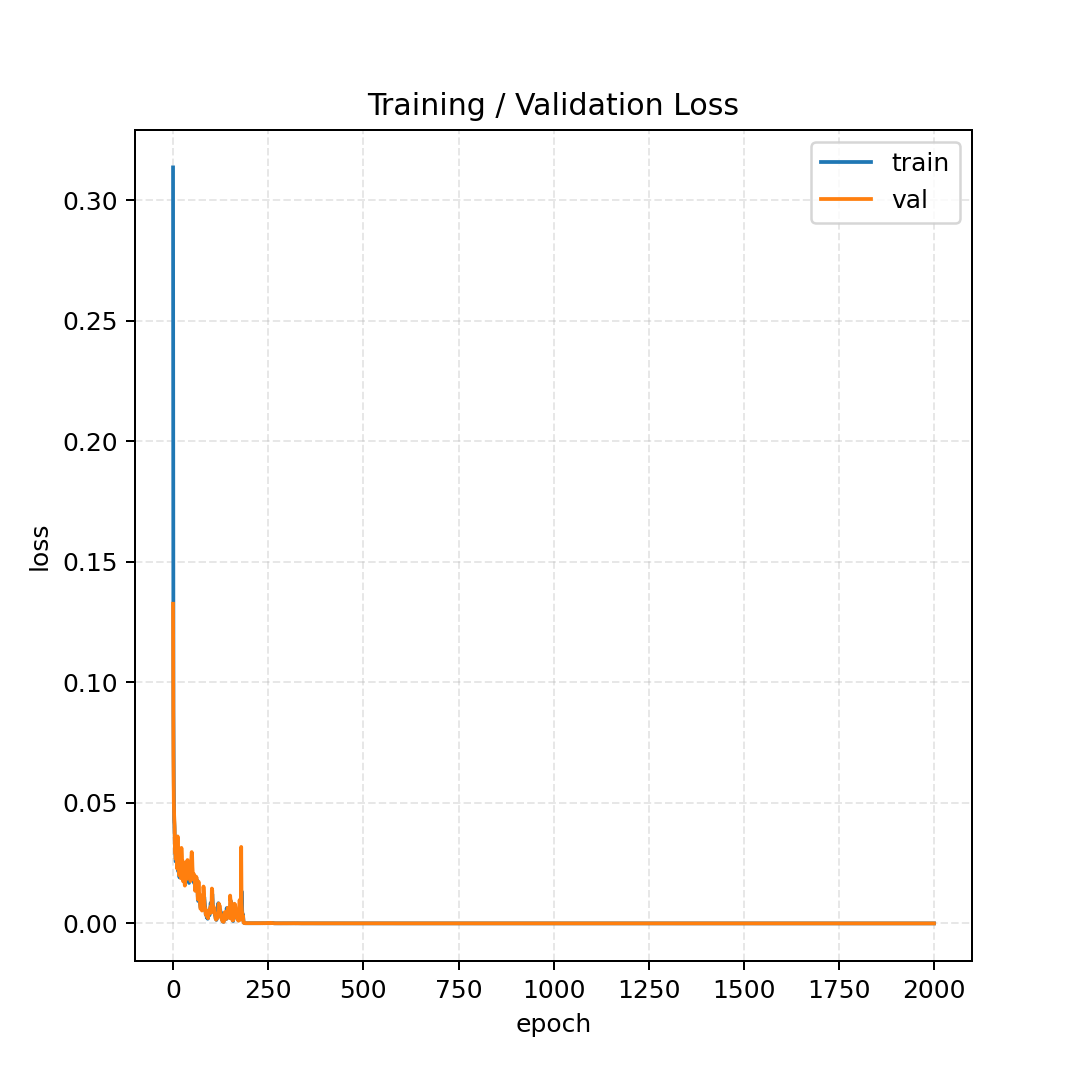}
    }%
    \subfigure[POD--DeepONet]{%
        \label{fig:loss-podN3}
        \includegraphics[width=0.31\textwidth]{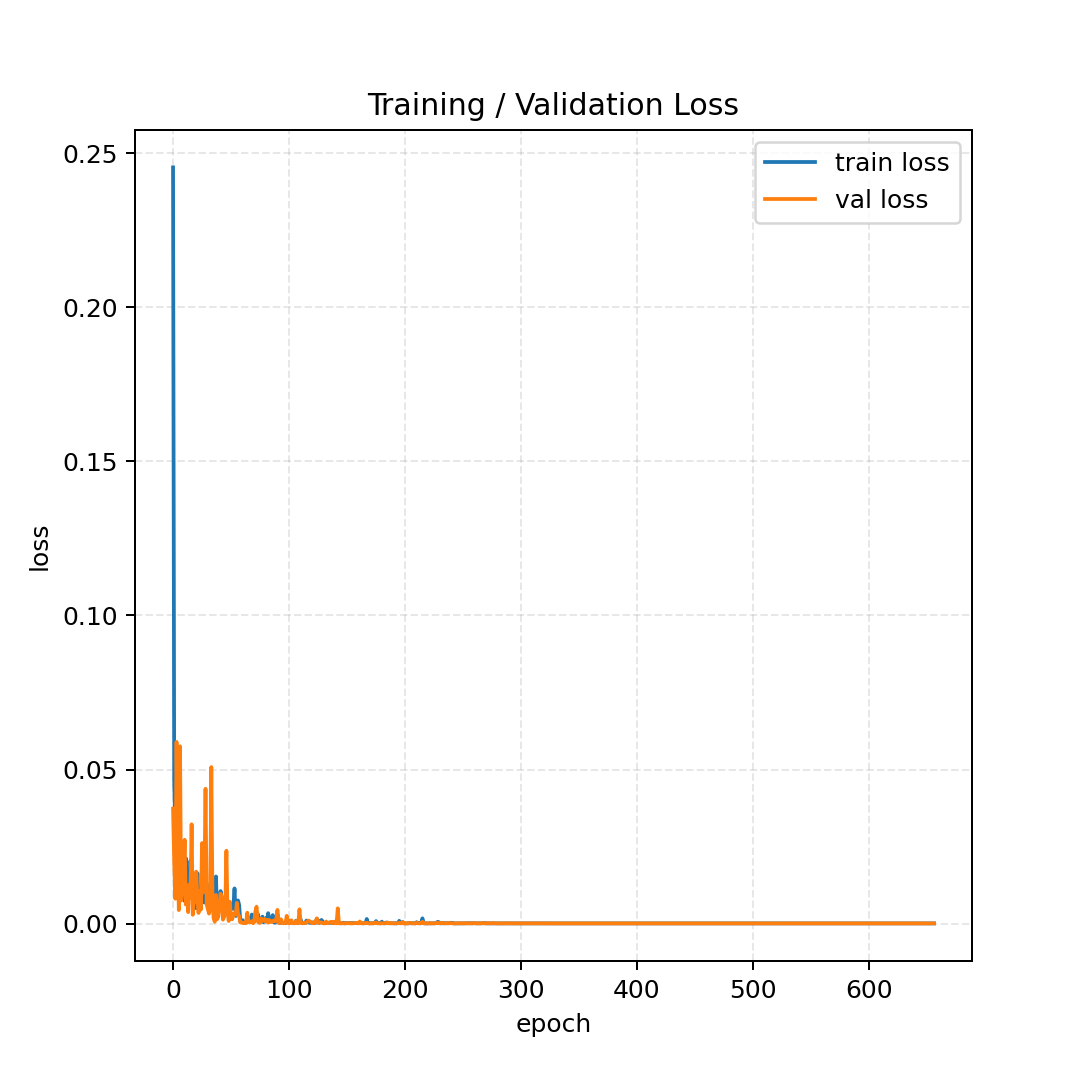}
    }%
    \subfigure[RB--DeepONet]{%
        \label{fig:loss-greedyN3}
        \includegraphics[width=0.31\textwidth]{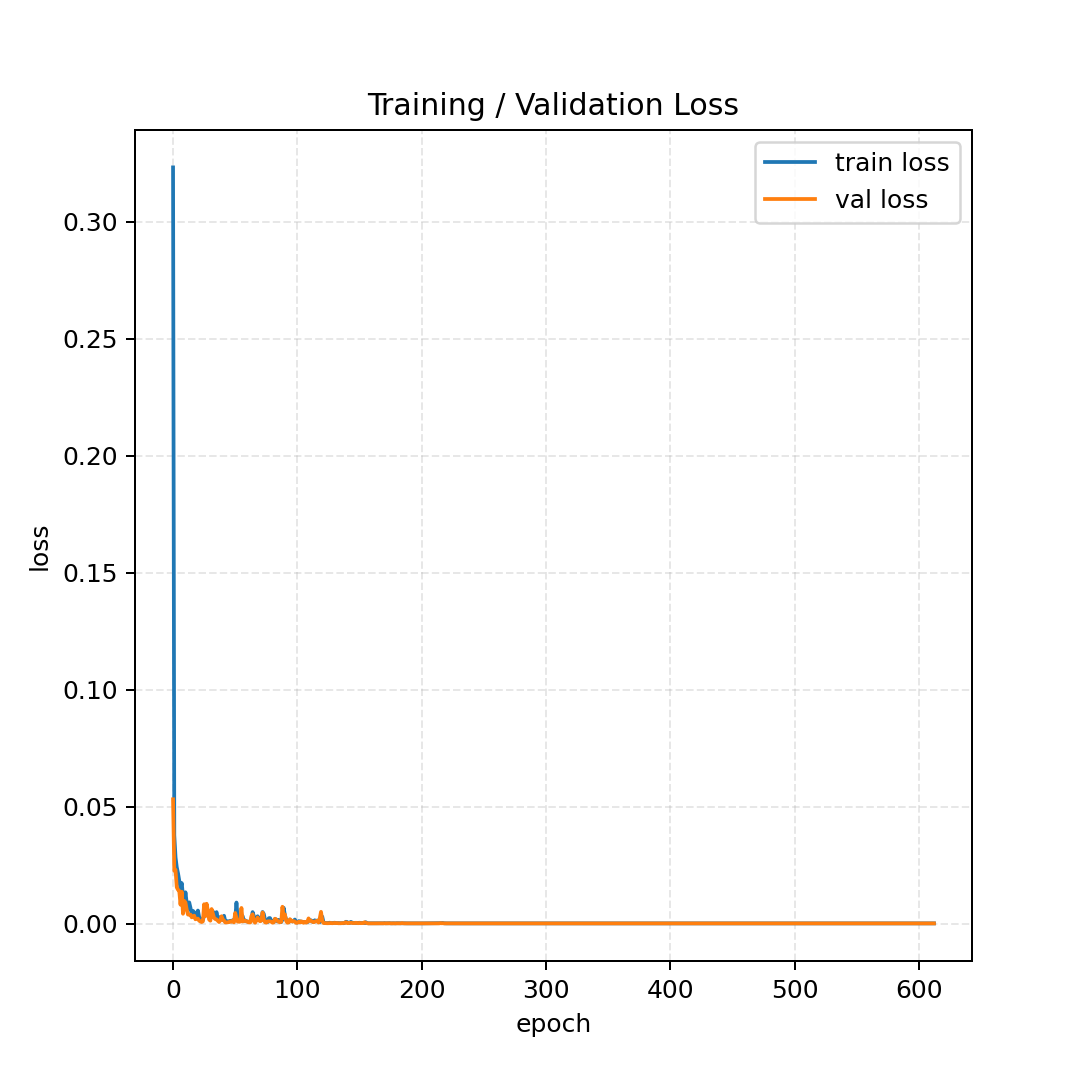}
    }%
    \caption{Example \ref{sec:ex1}: Training/validation loss.}
    \label{fig:hc:loss}
\end{figure}

\subsection{Example 2: Parametric diffusion–reaction with independently varying data.}
\label{sec:ex2}

Second, we consider the family of second–order elliptic problems.
The strong form is
\begin{equation}\label{eq:mixed:model}
  \begin{cases}
  \mathcal L(\mathbf k)\,u(\mathbf x;\mathbf k)=f(\mathbf x), & \mathbf x\in\Omega,\\[2pt]
  u(\mathbf x;\mathbf k)=g_D(\mathbf x), & \mathbf x\in\Gamma_D,\\[2pt]
  \kappa_0\,\nabla u(\mathbf x;\mathbf k)\!\cdot \mathbf{n} = h_N(\mathbf x), & \mathbf x\in\Gamma_N,\\[2pt]
  \kappa_0\,\nabla u(\mathbf x;\mathbf k)\!\cdot \mathbf{n} + \beta_0\,u(\mathbf x;\mathbf k)
  = r_R(\mathbf x), & \mathbf x\in\Gamma_R,
  \end{cases}
\end{equation}
Here, $\Omega:=(0,1)^2$ with the boundary partition
\[
\Gamma_D:=\{x=0\}\cup\{y=1\},\qquad
\Gamma_N:=\{y=0\},\qquad
\Gamma_R:=\{x=1\},
\]
and the operator is defined as
\begin{equation}\label{eq:mixed:op}
  \mathcal L(\mathbf k):=\kappa_0(-\Delta)+\alpha_0 I,
  \qquad \kappa_0>0,\ \alpha_0\ge 0.
\end{equation}
Here, $\mathbf{k}:=[\kappa_0,\alpha_0,\beta_0]^\top$ is the parameter.

To obtain consistent boundary and source pairs, we prescribe a smooth solution family
\begin{equation}\label{eq:mixed:ustar}
  \begin{aligned}
  u_\star(x,y;\boldsymbol{\xi})
  &= a_1\sin(\pi x)\sin(\pi y)
   + a_2\sin(2\pi x)\sin(\pi y)
   + a_3\exp\!\Big(-\tfrac{(x-x_c)^2+(y-y_c)^2}{2\sigma^2}\Big)\\
  &\quad + a_4\cos(\pi x)\sinh(y-\tfrac12),
  \end{aligned}
\end{equation}
and introduce an auxiliary vector $\boldsymbol{\xi}:=[a_1,a_2,a_3,a_4,x_c,y_c,\sigma]^\top$ to modulate the boundary traces and the distributed source independently of $\mathbf{k}$.
With $\mathbf{k}$ governing the operator coefficients, this choice isolates exogenous data variability and directly tests the boundary- and source-mode mechanisms of RB--DeepONet.
We sample the parameters independently as
\[
  a_1,a_2,a_4\sim\mathcal U[-1,1],\qquad
  a_3\sim\mathcal U[0,1],\qquad
  (x_c,y_c)\sim\mathcal U([0.2,0.8]^2),\qquad
  \sigma\sim\mathcal U[0.05,0.2],
\]
\[
  \kappa_0\sim\mathcal U[0.5,2],\qquad
  \alpha_0\sim\mathcal U[0,2],\qquad
  \beta_0\sim\mathcal U[0,10].
\]
The data $(f,g_D,h_N,r_R)$ are then defined by
\begin{equation}\label{eq:mixed:MMS}
\begin{aligned}
    f(\cdot):=& \mathcal L(\mathbf k)\,u_\star(\cdot;\boldsymbol{\xi}),\quad
  g_D(\cdot):=u_\star(\cdot;\boldsymbol{\xi})|_{\Gamma_D},\\
  h_N(\cdot):=&\kappa_0\,\nabla u_\star(\cdot;\boldsymbol{\xi})\!\cdot \mathbf{n}\big|_{\Gamma_N},\quad
  r_R(\cdot):=\kappa_0\,\nabla u_\star(\cdot;\boldsymbol{\xi})\!\cdot \mathbf{n}+\beta_0\,u_\star(\cdot;\boldsymbol{\xi})\big|_{\Gamma_R}.
\end{aligned} 
\end{equation}

We follow Section~\ref{sec:Specialization for Case2} to construct boundary and source modes, and precompute the RB trunk and reduced
operators.
The domain $\Omega=(0,1)^2$ is discretized by a $64\times64$ structured
triangulation, giving $N_h=64^2=4096$ nodal unknowns.
The Dirichlet boundary is $\Gamma_D=\{x=0\}\cup\{y=1\}$, containing
$|\mathcal B_D|=127$ nodes, so the number of free degrees of freedom is
$N_0=4096-127=3969$.

For the offline stage, we draw $N_k=10{,}000$ parameter–data samples and
split them $80\%/20\%$ into training and validation sets for learning the
branch networks.
We fix the reference parameter for the energy norm and lifting as $\mathbf{k}_\star=[1.2,0.6,1.0]^\top$.
A uniform tolerance $10^{-7}$ is used for the boundary, source, and trunk
constructions, which yields $(N,r_f,r_g)=(209,128,16)$.
For RB--DeepONet and POD--DeepONet, we target these same ranks to ensure a
fair comparison.

Strictly speaking, both POD--DeepONet and FEONet are designed to take as input the full-order discretizations of the data $(f,g_D,h_N,r_R)$ together with the physical parameters, which in this example would lead to prohibitively high-dimensional inputs. Therefore, we instead work with the low-dimensional augmented feature vector $\mathbf{k}_{\rm aug}$ and the approximate right-hand side obtained by projecting the data onto the boundary and source modes, so that all three networks use $\mathbf{k}_{\rm aug}$ as input while the variational residuals are evaluated with $\mathbf{F}_{\rm rb}(\mathbf{k}_{\rm aug})$ in \eqref{eq:Frb-clean} and the corresponding full-order matrices.
Thus, the branch input is the feature vector in \eqref{eq:feature vector}
of dimension $3+r_f+r_g=147$, and the branch output has dimension $N=209,3969$ for RB-- and POD--DeepONets and FEONet, respectively.

We train FEONet for $50{,}000$ epochs, whereas RB--DeepONet and
POD--DeepONet are trained for $2000$ epochs.
Notably, RB-- and POD--DeepONet output only $N=209$ coefficients, versus
$N_0=3969$ for FEONet, about $19$ times smaller, while also requiring
$25$ times fewer training epochs, highlighting their lightweight design
and faster, more stable training.
All remaining settings (branch architecture, optimizer, learning-rate
schedule, residual loss, and batch protocol) follow
Example~\ref{sec:ex1}.

Figure~\ref{fig:hc:loss2} shows the training/validation losses.
All three networks eventually converge, with POD--DeepONet and RB--DeepONet
reaching a low residual level after roughly $10^3$ epochs, whereas FEONet
requires substantially more epochs and exhibits larger fluctuations.
Figure~\ref{fig:hc:viz 2} displays a representative parameter:
the ground truth solution, FEONet, POD--DeepONet, and RB--DeepONet
are visually indistinguishable, indicating that all surrogates reproduce
the mixed boundary conditions and spatially varying load at the plotting
scale.
Table~\ref{tab:rbdo-feonet-rbg} reports mean and 95th-percentile errors
over $1000$ test parameters.
The RB--Galerkin reference again attains small errors (mean rel-$L^2$
$\approx 1.4\times10^{-3}$), confirming that the RB trunk and the
boundary/source modes are accurate for this Case~II setting.
Among the learned models, POD--DeepONet and RB--DeepONet both achieve
mean rel-$L^2$ of order $10^{-2}$ with 95th percentiles below
$6\times10^{-2}$.
POD--DeepONet is slightly more accurate in rel-$L^2$ and rel-energy,
whereas RB--DeepONet remains within a modest factor while relying only on
residual information and operating entirely in the reduced space.
FEONet attains the smallest rel-$L^2$, but at the cost of predicting
$N_0=3969$ coefficients instead of $N=209$ reduced coordinates.

Overall, this example demonstrates that RB--DeepONet can handle exogenous
boundary and source data that vary independently of the physical
parameters and, with a compact RB trunk, delivers relative $L^2$ errors
of about $1\%$ while avoiding online PDE solves and high-dimensional
FE outputs.

\begin{figure}[htbp]
    \centering
    \subfigure[FEONet]{%
        \label{fig:loss-fem 2}
        \includegraphics[width=0.31\textwidth]{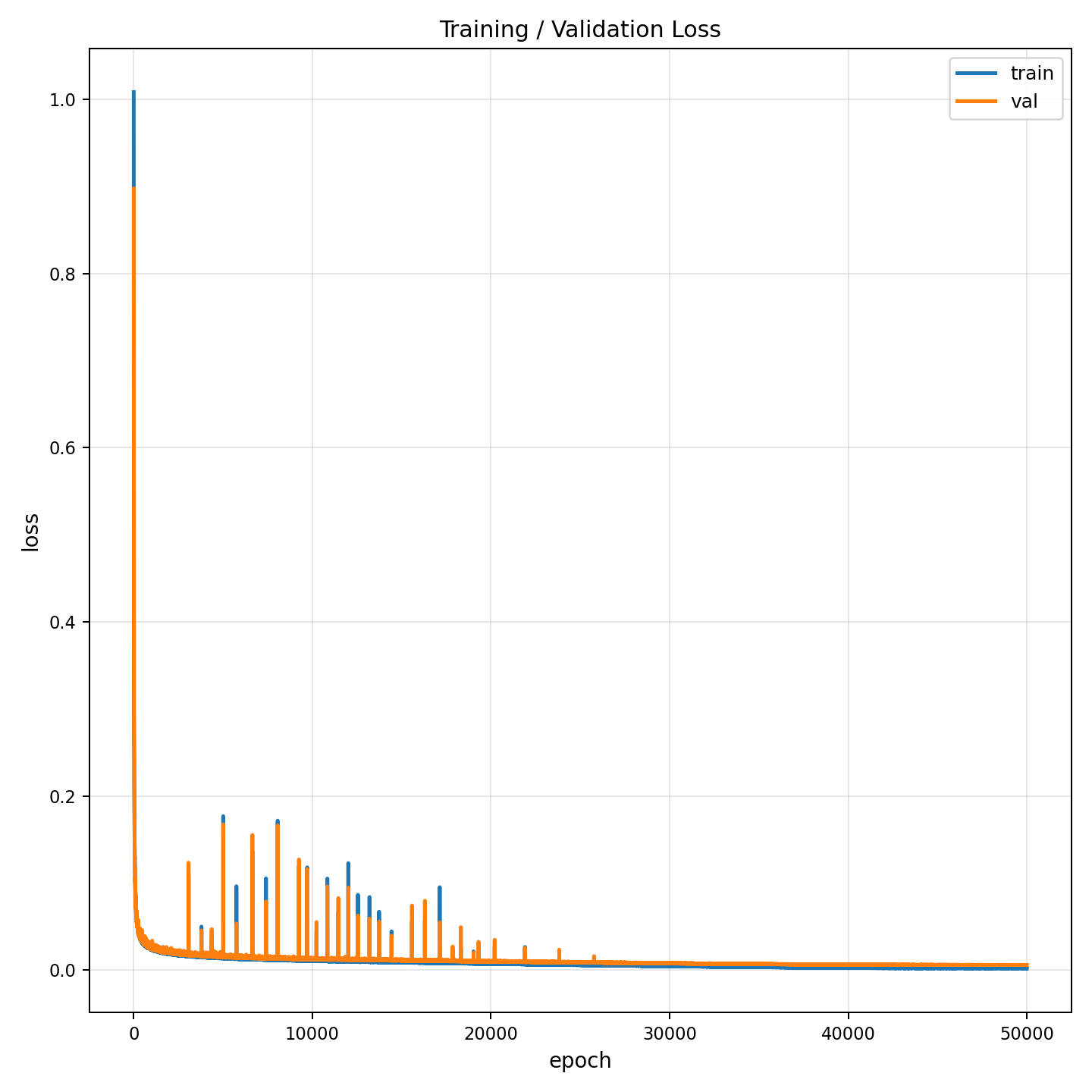}
    }%
    \subfigure[POD--DeepONet]{%
        \label{fig:loss-podN3 2}
        \includegraphics[width=0.31\textwidth]{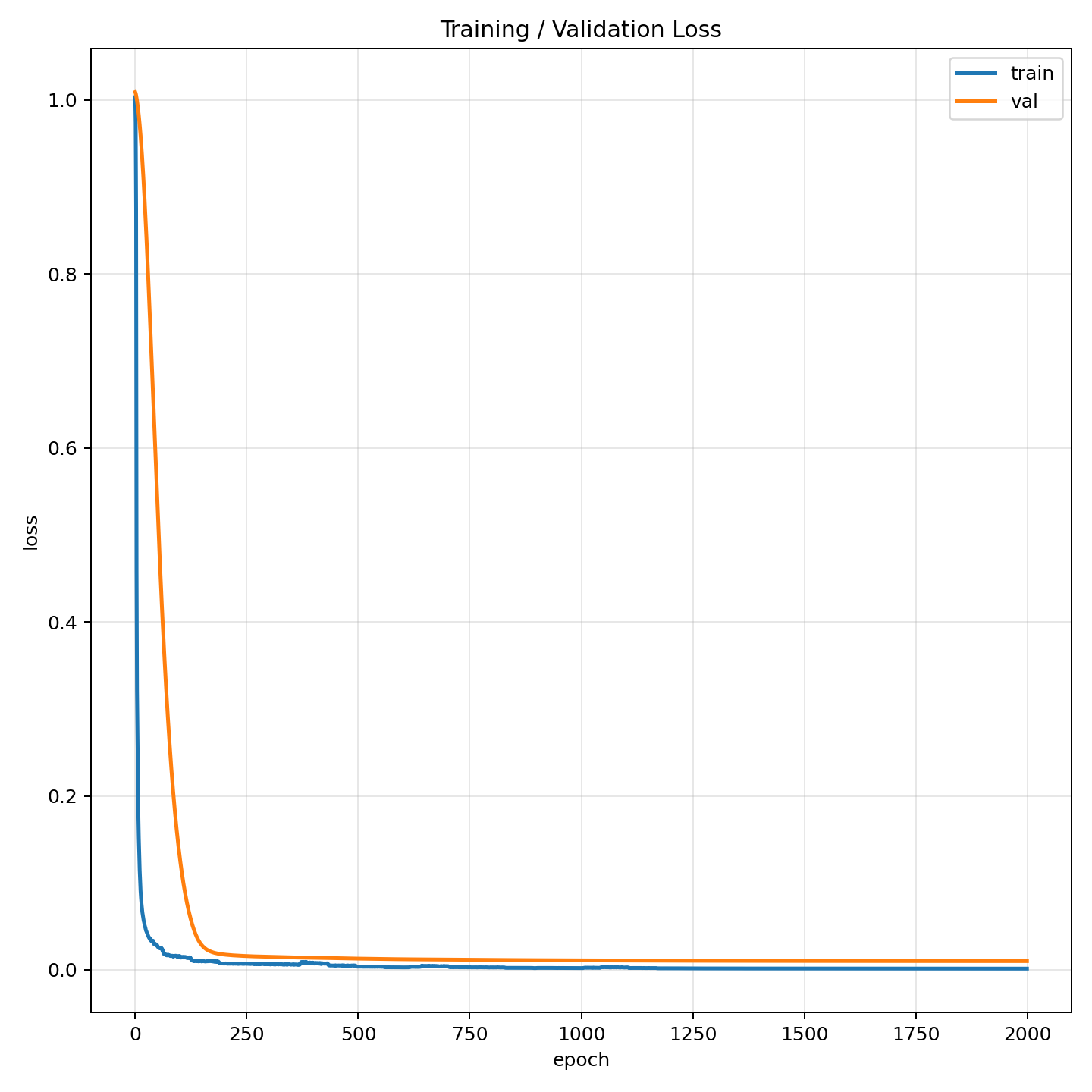}
    }%
    \subfigure[RB--DeepONet]{%
        \label{fig:loss-greedyN3 2}
        \includegraphics[width=0.31\textwidth]{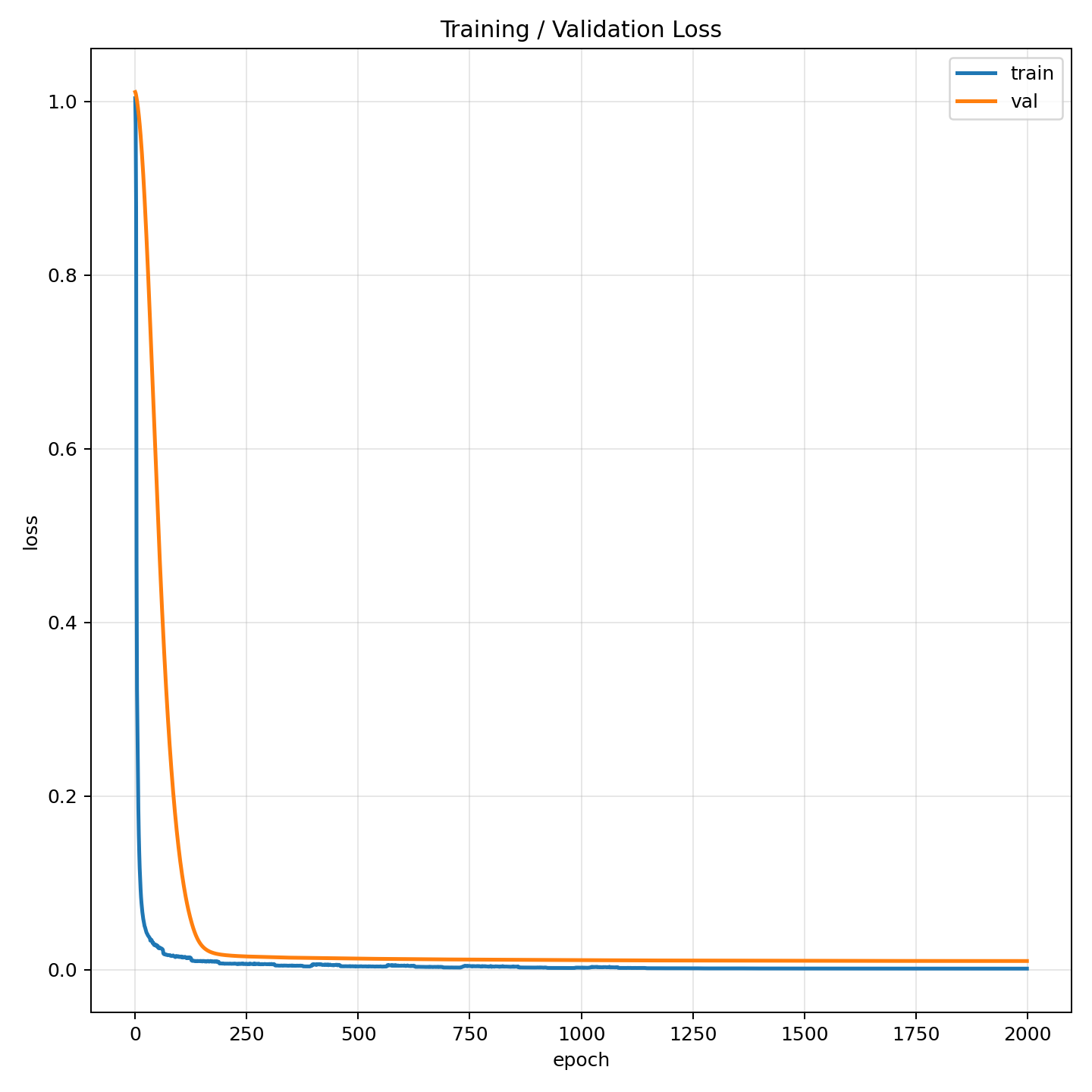}
    }%
    \caption{Example \ref{sec:ex2}: Training/validation loss for FEONet and POD--, RB--DeepONet with $(N,r_f,r_g)=(209,128,16)$.}
    \label{fig:hc:loss2}
\end{figure}

\begin{figure}[htbp]
\centering
\includegraphics[width=.88\textwidth]{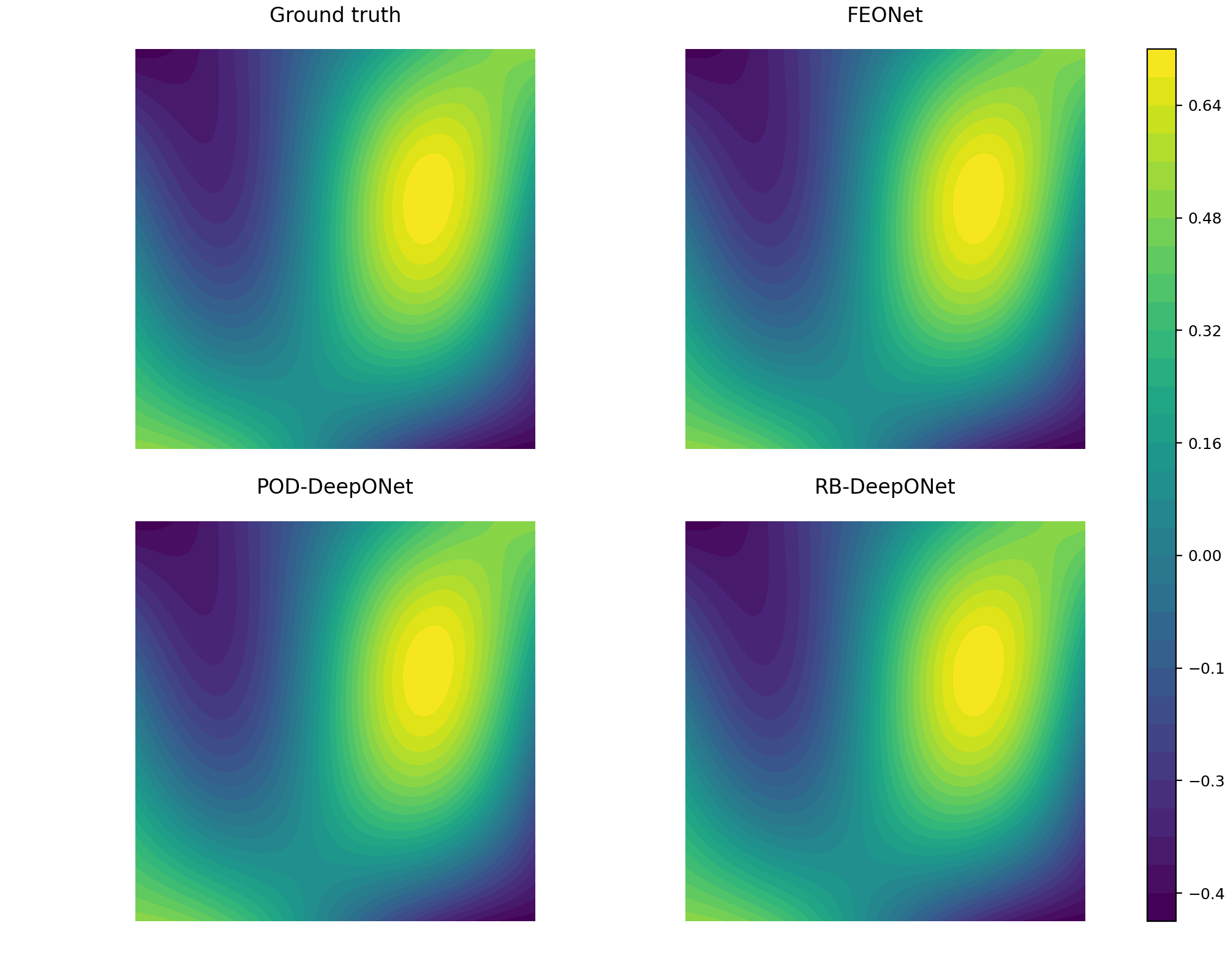}
\caption{Example \ref{sec:ex2}: Representative-parameter comparison of 
(top–left) ground truth, (top–right) FEONet, 
(bottom–left) POD--DeepONet, and (bottom–right) RB--DeepONet.}
\label{fig:hc:viz 2}
\end{figure}

\begin{table}[htbp]
\centering
\footnotesize
\setlength{\tabcolsep}{6pt}
\begin{tabular}{lccc}
\toprule
\textbf{Trunk} &
\textbf{rel-$L^2$ (mean / $p95$)} &
\textbf{rel-energy (mean / $p95$)} &
\textbf{rel-residual (mean / $p95$)} \\
\midrule
RB--DeepONet    & $\mathbf{1.17\times10^{-2}\ /\ 4.33\times10^{-2}}$ & $\mathbf{1.53\times10^{-2}\ /\ 5.88\times10^{-2}}$ & $\mathbf{1.38\times10^{-2}\ /\ 5.08\times10^{-2}}$ \\[2pt]
POD--DeepONet & $9.47\times10^{-3}\ /\ 2.37\times10^{-2}$ & $1.06\times10^{-2}\ /\ 4.18\times10^{-2}$ & $9.68\times10^{-3}\ /\ 4.33\times10^{-2}$ \\[2pt]
FEONet        & $2.18\times10^{-3}\ /\ 4.68\times10^{-3}$ &
                               $4.88\times10^{-3}\ /\ 1.39\times10^{-2}$ &
                               $7.15\times10^{-16}\ /\ 7.94\times10^{-16}$ \\[2pt]
RB--Galerkin                & $1.41\times10^{-3}\ /\ 4.50\times10^{-3}$ &
                               $5.99\times10^{-3}\ /\ 2.33\times10^{-2}$ &
                               $4.59\times10^{-16}\ /\ 8.15\times10^{-16}$ \\
\bottomrule
\end{tabular}
\caption{Example \ref{sec:ex2}: Test errors (mean / $p95$) over 1000 random parameters.}

\label{tab:rbdo-feonet-rbg}
\end{table}

\subsection{Example 3: Non-affine heat conduction}
\label{sec:ex3}

In this example, we demonstrate that our framework can extend seamlessly to non-affine operators via empirical interpolation (EIM). Once the RB trunk is sufficiently expressive, the RB--DeepONet achieves accuracy comparable to the RB--Galerkin solution and the full FEM reference. We report results with the same training budget and evaluation protocol as in Example~\ref{sec:ex1}.

The geometry and boundary conditions follow Example~\ref{sec:ex1}. In addition to the parameters $k_1,k_2$, we parameterize the inclusion radius by $k_3$ and denote the disk by $\Omega_0(k_3)$. The thermal conductivity is
\[
\kappa(\mathbf{x};k_1,k_3)=
\begin{cases}
k_1, & \mathbf{x}\in\Omega_0(k_3),\\
1,   & \mathbf{x}\in\Omega_1:=\Omega\setminus\Omega_0(k_3).
\end{cases}
\]
We write $\mathbf{k}=[k_1,k_2,k_3]^\top$ and aim to learn the mapping $\mathbf{k}\mapsto u(\mathbf{x};\mathbf{k})$, where $u(\cdot;\mathbf{k})$ solves \eqref{eq:strong-ex1} with $\kappa(\cdot;k_1)$ replaced by $\kappa(\cdot;k_1,k_3)$. The bilinear form reads
\[
a(u,v;\mathbf{k})=\int_{\Omega}\kappa(\mathbf{x};k_1,k_3)\,\nabla u\cdot\nabla v\,\mathrm d\mathbf{x},
\]
so the coefficient depends non-affinely on $\mathbf{k}$. We recover an approximate affine separation by EIM as in \cite{hesthaven2016certified}.

Let $\widehat\Omega:=\widehat\Omega_1\cup\widehat\Omega_0$ be a fixed reference domain with reference inclusion radius $r_0=0.2$ (as in Example~\ref{sec:ex1}). We introduce the continuous radial map
\[
\mathbf T_r(\widehat{\mathbf{x}}):=\varphi_r(|\widehat{\mathbf{x}}|)\,\widehat{\mathbf{x}},\qquad
\widehat{\mathbf{x}}\in\widehat\Omega,
\]
with
\[
\varphi_r(\rho)=
\begin{cases}
1, & 0\le \rho< r_{-},\\[4pt]
\dfrac{r_0(r_0-\rho)+r(\rho-r_{-})}{r_0(r_0-r_{-})}, & r_{-}\le \rho<r_0,\\[10pt]
\dfrac{r_0(r_0-\rho)+r(\rho-r_{+})}{r_0(r_0-r_{+})}, & r_0\le \rho<r_{+},\\[10pt]
1, & r_{+}\le \rho,
\end{cases}
\]
for some $0<r_{-}\le r_{\min}<r_{\max}<r_{+}<1$ and $r\in[r_{\min},r_{\max}]$. Here, $\rho:=|\widehat{\mathbf{x}}|$. Pushing the PDE to $\widehat\Omega$ yields: find $\widehat u(\mathbf{k})\in V$ such that
\[
\widehat a(\widehat u,\widehat v;\mathbf{k})=\widehat \ell(\widehat v;\mathbf{k})\quad\forall\,\widehat v\in V,
\]
where
\[
\begin{aligned}
\widehat a(\widehat u,\widehat v;\mathbf{k})
&:=\int_{\widehat\Omega_1}\nabla\widehat u\cdot\big(\mathbf G(\widehat{\mathbf{x}};k_3)\nabla\widehat v\big)\,\mathrm d\widehat{\mathbf{x}}
\;+\;
k_1\int_{\widehat\Omega_0}\nabla\widehat u\cdot\big(\mathbf G(\widehat{\mathbf{x}};k_3)\nabla\widehat v\big)\,\mathrm d\widehat{\mathbf{x}},\\
\widehat \ell(\widehat v;\mathbf{k})
&:=k_2\int_{\widehat\Gamma_{\mathrm{base}}}\widehat v\,\mathrm d\widehat s,
\end{aligned}
\]
and
\[
\mathbf G(\widehat{\mathbf{x}};k_3):=\bigl|\det\widehat {\mathbf{J}}(\widehat{\mathbf{x}};k_3)\bigr|\,
\widehat {\mathbf{J}}(\widehat{\mathbf{x}};k_3)^{-\top}\widehat {\mathbf{J}}(\widehat{\mathbf{x}};k_3)^{-1},\qquad
\bigl(\widehat {\mathbf{J}}(\widehat{\mathbf{x}};k_3)\bigr)_{ij}:=\frac{\partial (\mathbf T_{k_3}(\widehat{\mathbf{x}}))_j}{\partial \widehat{\mathbf{x}}_i},\quad i,j=1,2.
\]
We apply the empirical interpolation algorithm for vector fields \cite[Chap.~5]{hesthaven2016certified} to $\mathbf G(\cdot;k_3)$ and obtain
\begin{equation}\label{eq:ex3-eim}
\mathbf G(\widehat{\mathbf{x}};k_3)\approx \sum_{q=1}^{Q}\alpha_q(k_3)\,\mathbf H_q(\widehat{\mathbf{x}}),
\end{equation}
with parameter-independent tensors $\{\mathbf H_q\}_{q=1}^Q$ and online coefficients $\alpha_q(k_3)$ determined from EIM pivot values. The approximation exhibits an exponential decay as $Q$ increases (see \cite{hesthaven2016certified}).

Let $\Psi=[\psi_1,\dots,\psi_N]$ be the RB space obtained by Greedy selection, represented in the FE basis. Using \eqref{eq:ex3-eim}, the reduced operators take the form
\begin{equation}\label{eq:ex3-rb-ops}
\mathbf A_{\mathrm{rb}}(\mathbf{k})=\sum_{q=1}^{Q}\alpha_q(k_3)\,\mathbf H_{\mathrm{rb},q},
\qquad
\mathbf F_{\mathrm{rb}}(\mathbf{k})=\Psi^\top \mathbf F,
\qquad
\mathbf H_{\mathrm{rb},q}:=\Psi^\top \mathbf H_q\,\Psi.
\end{equation}
In the offline stage, we assemble the $Q$ reduced blocks $\{\mathbf H_{\mathrm{rb},q}\}_{q=1}^{Q}$. Then, in the online stage, we no longer touch any full-order matrices. RB--DeepONet fixes $\Psi$ as the trunk and trains only the branch $\mathbf c_\theta(\mathbf{k})\in\mathbb{R}^N$ with the residual loss \eqref{eq:forward-loss} that vanishes exactly at the RB--Galerkin coefficients. The prediction is $u_\theta(\mathbf{k})=\Psi\,\mathbf c_\theta(\mathbf{k})$.

We consider
\[
k_1\in[0.1,10],\qquad k_2\in[-1,1],\qquad k_3\in[0.05,0.45].
\]
From $4000$ random samples we build POD trunks with tolerance $\epsilon_{\mathrm{POD}}=10^{-7}$, resulting in $N=5$. For comparison, we also construct the Greedy RB trunk with $N=5$. For the geometry surrogate, we take $Q=15$, which yields an average relative $L^2$ error of about $10^{-4}$ against direct FEM assembled without the affine approximation as shown in Table \ref{tab:ex3}.
All the other network settings are the same as Example \ref{sec:ex1}.

Figure~\ref{fig:loss ex3} shows the training and validation losses.
Both POD--DeepONet and RB--DeepONet converge rapidly to a small residual
level with $N=5$ trunk modes.
Figure~\ref{fig:compare ex3} fixes $\mathbf k=(6.68,0.94,0.2)$ and compares
the FEM solution, the RB--Galerkin reference, POD--DeepONet, and
RB--DeepONet. All RB-based surrogates are visually indistinguishable from
the FEM field.
Table~\ref{tab:ex3} reports the mean and 95th-percentile errors over
$1000$ test parameters.
The RB--Galerkin solution is essentially exact (mean rel-$L^2$ 
$\approx 1.6\times10^{-4}$), confirming that the RB space obtained from
the EIM-assembled operators \eqref{eq:ex3-rb-ops} is highly accurate even
for this non-affine problem.
Among the learned models, POD--DeepONet attains mean rel-$L^2$ around
$9\times10^{-3}$ and RB--DeepONet around $2.3\times10^{-2}$, with
95th-percentile errors below $3\times10^{-2}$ and $3.3\times10^{-2}$,
respectively.
Thus, RB--DeepONet remains within a few percent of the FEM solution while
relying only on residual information and never using solution labels,
whereas POD--DeepONet requires full-order snapshots for supervised
training.
Both networks share the same low online complexity, but the Greedy
RB trunk can be built from far fewer truth solves than the POD trunk.

Overall, this example shows that the EIM-based reduced operators yield a
separable assembly with negligible online cost, and that RB--DeepONet can
learn the RB--Galerkin map robustly in this non-affine setting, achieving
POD-level accuracy with a fully reduced, label-free training pipeline.

\begin{figure}[htbp]
    \centering
    \subfigure[POD--DeepONet ($N{=}5$)]{%
        \label{fig:loss-pod ex3}
        \includegraphics[width=0.4\textwidth]{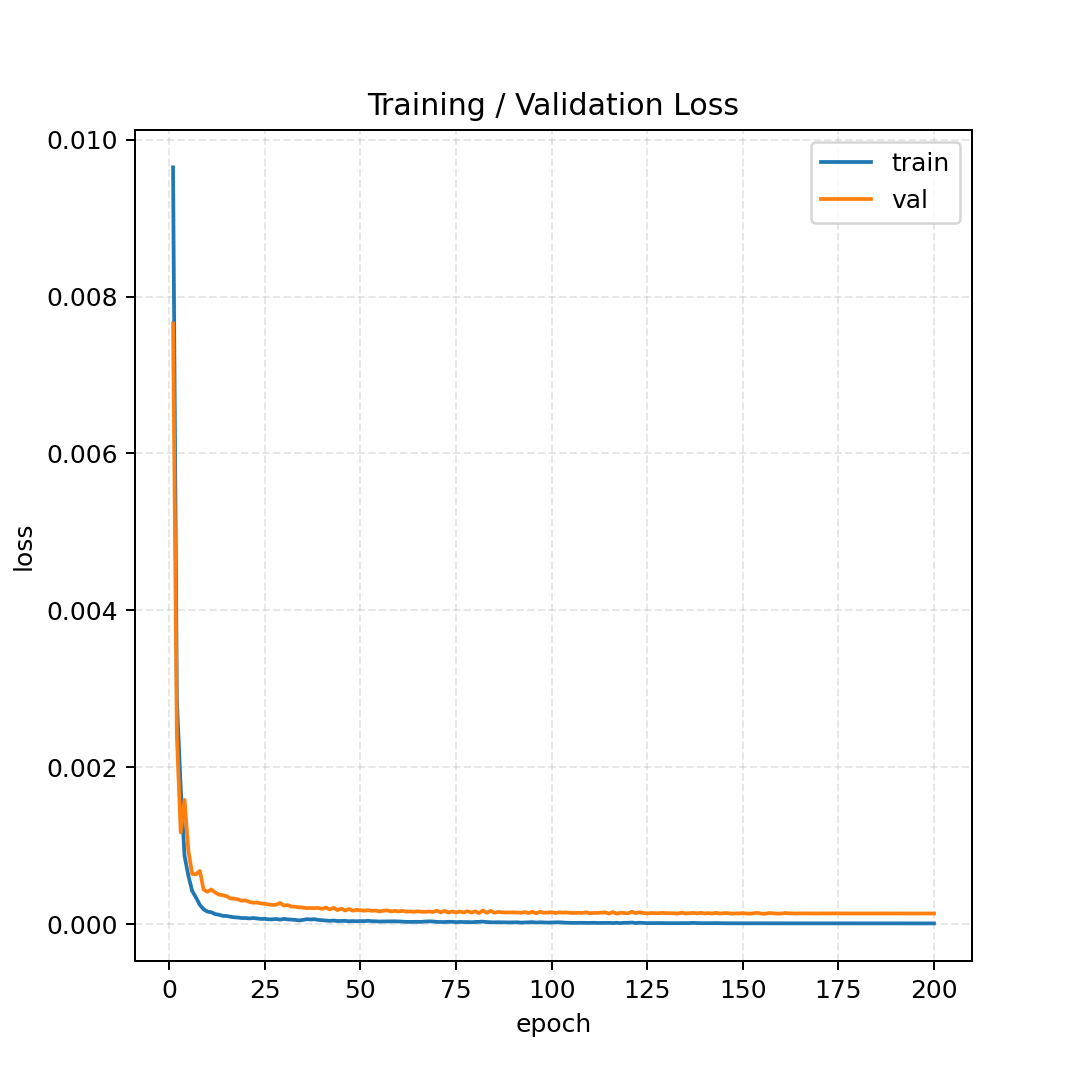}
    }%
    \subfigure[RB--DeepONet ($N{=}5$)]{%
        \label{fig:loss-greedy ex3}
        \includegraphics[width=0.4\textwidth]{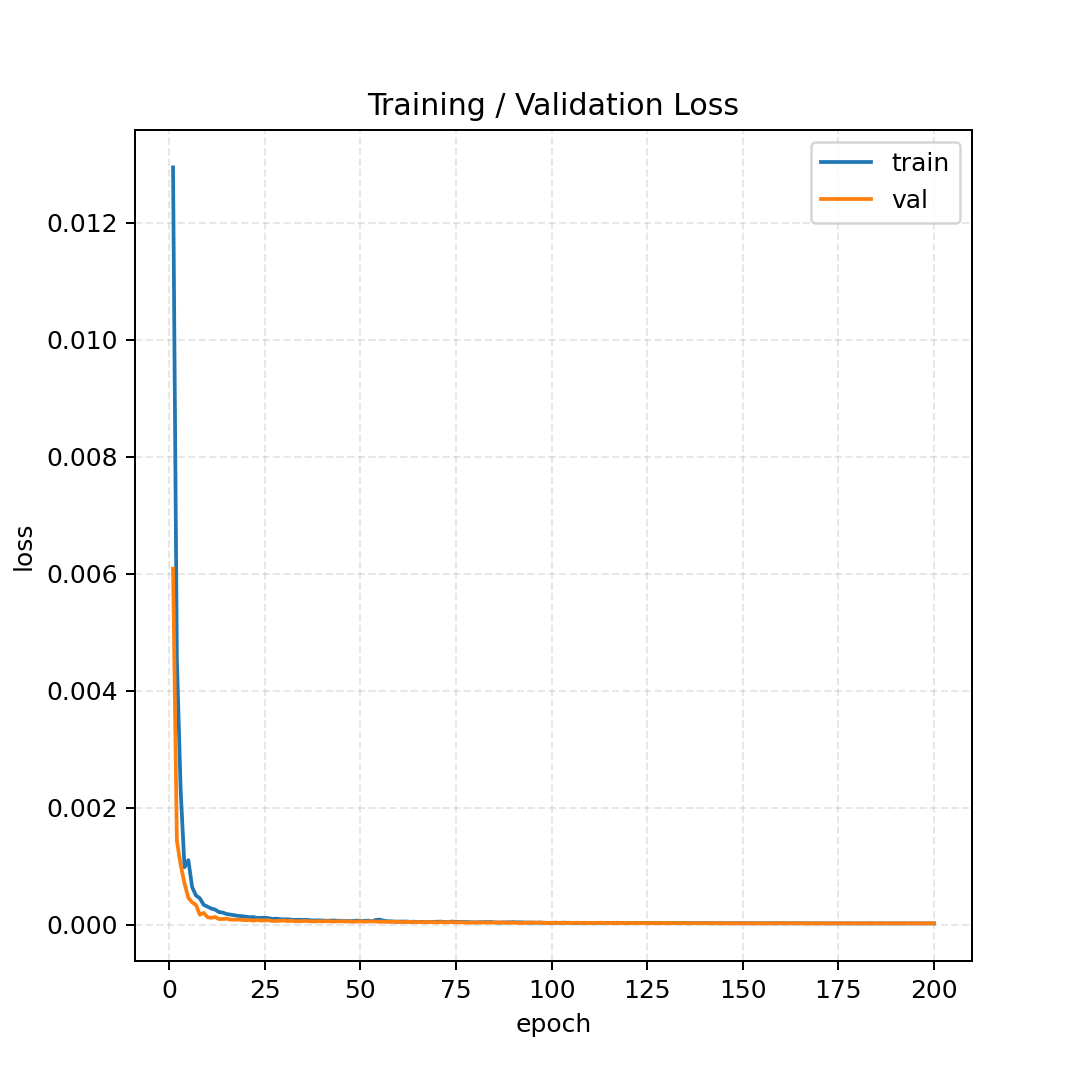}
    }%
    \caption{Example \ref{sec:ex3}: Training/validation loss.}
\label{fig:loss ex3}
\end{figure}

\begin{figure}[htbp]
\centering
\includegraphics[width=.88\textwidth]{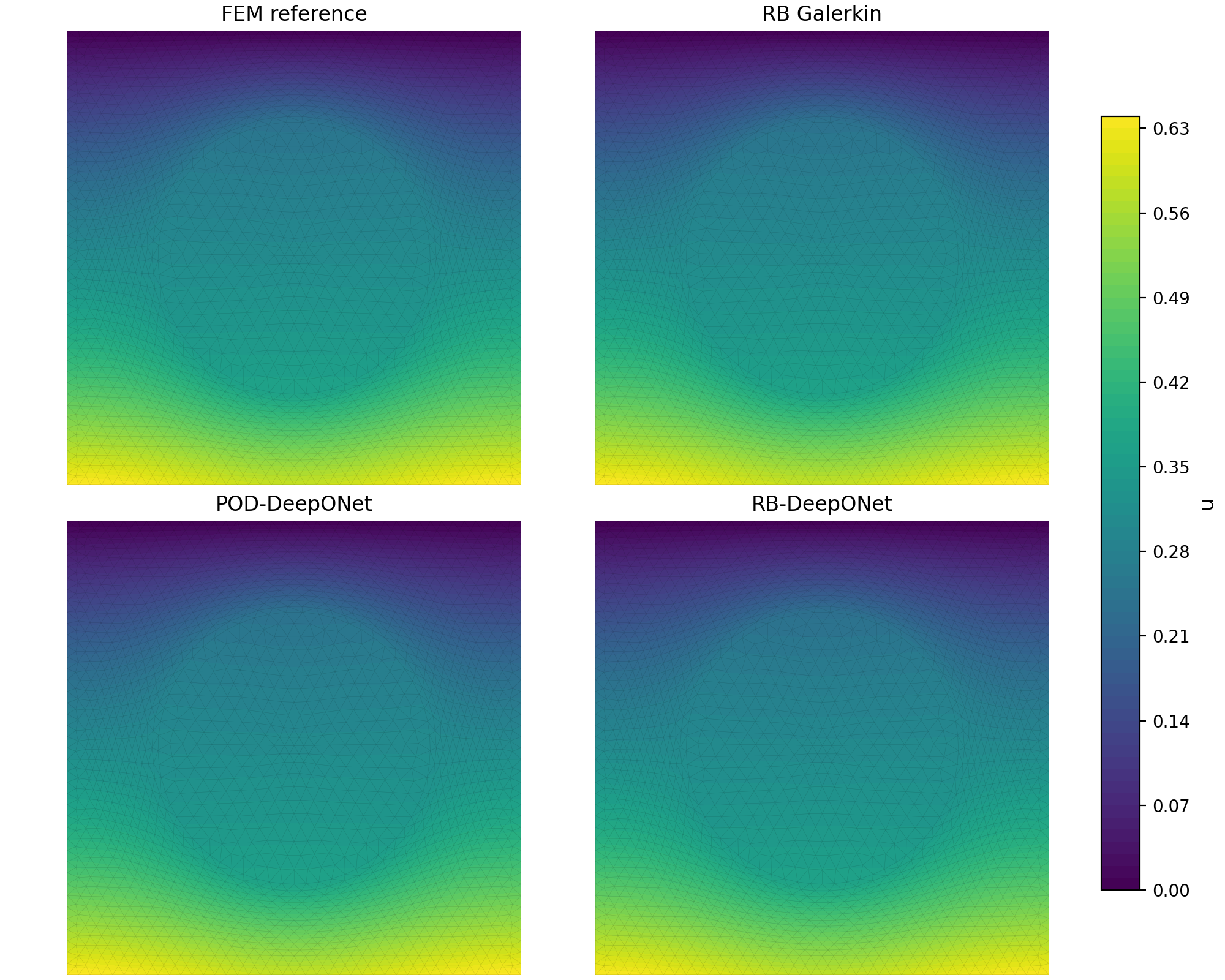}
\caption{Example \ref{sec:ex3}: Representative-parameter comparison of 
(top–left) FEM reference, (top–right) RB--Galerkin, 
(bottom–left) POD--DeepONet, and (bottom–right) RB--DeepONet.}
\label{fig:compare ex3}
\end{figure}

\begin{table}[htbp]
\centering
\footnotesize
\setlength{\tabcolsep}{6pt}
\begin{tabular}{lccc}
\toprule
\textbf{Trunk} &
\textbf{rel-$L^2$ (mean / $p95$)} &
\textbf{rel-energy (mean / $p95$)} &
\textbf{rel-residual (mean / $p95$)} \\
\midrule
RB--DeepONet & $\mathbf{2.28\times10^{-2}\ /\ 3.27\times10^{-2}}$ & $\mathbf{3.74\times10^{-2}\ /\ 4.79\times10^{-2}}$ & $\mathbf{5.48\times10^{-2}\ /\ 6.84\times10^{-2}}$ \\[2pt]
POD--DeepONet    & $9.14\times10^{-3}\ /\ 2.08\times10^{-2}$ & $1.82\times10^{-2}\ /\ 3.81\times10^{-2}$ & $1.71\times10^{-2}\ /\ 4.71\times10^{-2}$ \\[2pt]
RB--Galerkin                & $1.57\times10^{-4}\ /\ 4.83\times10^{-4}$ &
                               $6.14\times10^{-4}\ /\ 8.73\times10^{-4}$ &
                               $1.05\times10^{-15}\ /\ 3.23\times10^{-15}$ \\
\hline
\end{tabular}
\caption{Example \ref{sec:ex3}: Test errors (mean / $p95$) over 1000 random parameters.}

\label{tab:ex3}
\end{table}

Note that unlike Example~\ref{sec:ex1}, here we do not include FEONet as a baseline in this non–affine setting.
The reason is due to the prohibitive training cost arising from per–sample full–order operator application: the original FEONet formulation \cite{lee2025finite} treats variable coefficients and boundary data as network inputs and applies the full operator during training on FE basis, without an offline affine surrogate. 
Consequently, when the operator changes with $\mathbf{k}=[k_1,k_2,k_3]^\top$, each training step requires assembling the full FEM operator $\mathbf A(\mathbf k)$ at dimension $N_h$, which makes the per–epoch cost scale with $N_h$ and the batch size. 
In contrast, our EIM–augmented RB formulation yields an online assembly whose evaluation cost scales only with the reduced dimension $N$ and the EIM rank $Q$. 
To isolate the effect of the trunk space and to enable a fair and affordable comparison, we therefore benchmark only against the full FEM reference and the RB--Galerkin solution on the same reduced space. 

Table~\ref{tab:trainable_params_offline} summarizes the network sizes and offline costs of the three surrogates.
Across all examples, RB--DeepONet uses roughly $2\times 10^5$ trainable parameters, comparable to POD--DeepONet but more than four times smaller than FEONet in Example \ref{sec:ex2}.
At the same time, the number of offline truth solves required by RB--DeepONet is reduced by two to three orders of magnitude compared with POD--DeepONet.
FEONet avoids offline solves but pays for this with a much larger network and the need to evaluate full-order residuals during training.
Therefore, RB--DeepONet achieves a favorable balance between sample efficiency, model size, and training cost.

\begin{table}[htbp]
  \centering\footnotesize
  \caption{Trainable parameters and offline truth solves
  for three examples. Here $d_{\text{in}}$ and $d_{\text{out}}$
  denote the input and output dimensions of the branch network, respectively.}
  \label{tab:trainable_params_offline}
  \begin{threeparttable}
  \begin{tabular}{llrrr}
    \toprule
    Example & Method & $d_{\text{in}}$ & \# trainable params & Offline truth solves \\
    \midrule
    \ref{sec:ex1}& RB--DeepONet  & $2$   & $198{,}915$   & $3$ \\
        & POD--DeepONet & $2$   & $198{,}915$   & $2{,}100$ \\
        & FEONet        & $2$   & $729{,}106$   & $0$ \\
    \midrule
    \ref{sec:ex2} & RB--DeepONet  & $147$ & $288{,}977$   & $337$\tnote{a} \\
        & POD--DeepONet & $147$ & $288{,}977$   & $10{,}000$ \\
        & FEONet        & $147$ & $1{,}255{,}297$ & $0$ \\
    \midrule
    \ref{sec:ex3} & RB--DeepONet  & $3$   & $199{,}685$   & $5$ \\
        & POD--DeepONet & $3$   & $199{,}685$   & $4{,}000$ \\
    \bottomrule
  \end{tabular}
\begin{tablenotes}
  \footnotesize
  \item[a] The construction of the source modes requires additionally solving the Riesz–map
  problems for each of the $r_f=128$ source modes, so the total number of
  full-order solves used by RB--DeepONet in this example is
  $N + r_f = 337$.
\end{tablenotes}
  \end{threeparttable}
\end{table}

\section{Conclusions and future work}\label{sec:Conclusions and future work}

In this work, we introduced RB--DeepONet, a residual–driven operator–learning framework. The DeepONet trunk is fixed to a rigorously constructed reduced–basis (RB) space, and the branch network predicts the RB coefficients. This gives a principled interface between certified model reduction and operator learning: RB--DeepONet preserves the structure and stability of RB--Galerkin formulations while enabling fast, label–free coefficient prediction through a residual loss.
We developed two specializations. In Case~I, with fully parameterized coefficients and data, the RB right–hand side is assembled via a standard affine decomposition. In Case~II, with operator parametrization and independently varying data, boundary and source functions are compressed into boundary and source modes and incorporated through lifting, without changing the training objective.
On the analytical side, we established well–posedness of the RB system and generalization bounds for the residual loss. We also derived error estimates that quantify the impact of modal truncation. On the computational side, three benchmark problems showed that RB--DeepONet attains accuracy comparable to intrusive RB--Galerkin and to POD--DeepONet and FEONet surrogates. At the same time, it uses a compact, interpretable trunk and operates entirely in the reduced space. In particular, Case~II shows that exogenous data (boundary conditions and loads) can be handled efficiently through a small number of offline modes, with online complexity proportional to the reduced dimension.
The present work has several limitations. In Case~II, Dirichlet data are enforced only through a finite boundary mode space, so the boundary conditions are satisfied up to a controlled but nonzero projection error that depends on the mode construction. In addition, the analysis is restricted to linear, coercive elliptic problems with an exact or EIM–based affine operator decomposition and to RB spaces fixed offline. Noncoercive, nonlinear, or strongly nonaffine problems fall outside the current theory.
Future work will address these issues. We plan to develop more precise treatments of boundary constraints and adaptive trunk enrichment driven by \emph{a posteriori} indicators. We also aim to extend the framework to time–dependent, nonaffine, and inverse problems, while retaining quantitative error control and reduced online cost.

\section*{Acknowledgment}
This work was supported by the National Science Foundation (NSF) under grants DMS-2533878, DMS-2053746, DMS-2134209, ECCS-2328241, CBET-2347401 and OAC-2311848, and by the U.S.~Department of Energy (DOE) Office of Science Advanced Scientific Computing Research program under award number DE-SC0023161, and the DOE–Fusion Energy Science program, under grant number: DE-SC0024583.

\bibliographystyle{model1-num-names}
\bibliography{reference}

\end{document}